\newcommand{\X}{\mathcal{X}}
\newcommand{\Y}{\mathcal{Y}}
\renewcommand{\S}{\mathcal{S}}
\newcommand{\Pcal}{\mathcal{P}}
\newcommand{\N}{\mathbb{N}}
\newcommand{\Nstar}{{\mathbb{N}^\star}}
\newcommand{\R}{\mathbb{R}}
\newcommand{\Rcal}{\mathcal{R}}
\renewcommand{\P}{\ensuremath{\mathbb{P}}}
\newcommand{\A}{\mathcal{A}}
\newcommand{\B}{\mathcal{B}}
\newcommand{\E}{\mathbb{E}}
\newcommand{\F}{\ensuremath{\mathcal{F}}}
\renewcommand{\H}{\mathcal{H}}
\newcommand{\G}{\ensuremath{\mathcal{G}}}
\renewcommand{\L}{\mathcal{L}}
\newcommand{\Lfrak}{\mathfrak{L}}
\newcommand{\Id}{\mathrm{Id}}
\newcommand{\Cb}{\ensuremath{C_\mathrm{b}}}
\newcommand{\cvw}{\leadsto}
\renewcommand{\d}{\mathrm{d}}
\newcommand{\goesto}[1]{\xrightarrow[#1]{}}
\newcommand{\scalar}[1]{\left\langle#1\right\rangle}
\newcommand{\norm}[1]{\left\lVert#1\right\rVert}
\newcommand{\Pas}{\ensuremath{\P\text{-\ac{as}}}}
\newcommand{\floor}[1]{\lfloor#1\rfloor}
\newcommand{\ind}{\mathbf{1}}
\newcommand{\mmd}{\mathrm{MMD}}
\newcommand{\K}{\mathcal{K}}
\newcommand{\Q}{\mathcal{Q}}
\renewcommand{\floor}[1]{\left\lfloor#1\right\rfloor}
\newcommand{\dtv}{d_\mathrm{TV}}
\newcommand{\dil}{\mathrm{dil}}
\theoremstyle{plain}
\newtheorem{theorem}{Theorem}[section]
\newtheorem{proposition}[theorem]{Proposition}
\newtheorem{lemma}[theorem]{Lemma}
\newtheorem{corollary}[theorem]{Corollary}
\newtheorem{example}[theorem]{Example}
\theoremstyle{definition}
\newtheorem{definition}[theorem]{Definition}
\theoremstyle{remark}
\newtheorem{remark}[theorem]{Remark}
\newacro{EWC}[EWC]{empirically weakly converging}
\newacro{ESC}[ESC]{empirically strongly converging}
\newacro{AMS}[AMS]{asymptotically mean stationary}
\newacro{WLLNE}[WLLNE]{weak law of large numbers for events}
\newacro{SLLNE}[SLLNE]{strong law of large numbers for events}
\newacro{LLNE}[LLNE]{law of large numbers for events}
\newacro{RKHS}[RKHS]{reproducing kernel Hilbert space}
\newacro{SVM}[SVM]{support vector machine}
\newacro{KME}[KME]{kernel mean embedding}
\newacro{CKME}[CKME]{conditional kernel mean embedding}
\newacro{GP}[GP]{Gaussian process}
\newacro{MMD}[MMD]{maximum mean discrepancy}
\newacro{iid}[i.i.d.]{independent and identically distributed}
\newacro{as}[a.s.]{almost surely}
\newacro{ae}[a.e.]{almost-everywhere}
\newacro{onb}[ONB]{orthonormal basis}
\newacro{lhs}[LHS]{left-hand side}
\newacro{rhs}[RHS]{right-hand side}
\newcommand{\review}[1]{{\color{blue}#1}}
\renewcommand{\review}[1]{#1}
\icmltitlerunning{
    On the Consistency of Kernel Methods with Dependent Observations
}
\begin{document}

\twocolumn[
    \icmltitle{On the Consistency of Kernel Methods with Dependent Observations}

    
    
    \icmlsetsymbol{equal}{*}
    
    \begin{icmlauthorlist}
    \icmlauthor{Pierre-François Massiani}{DSME}
    \icmlauthor{Sebastian Trimpe}{DSME}
    \icmlauthor{Friedrich Solowjow}{DSME}
    \end{icmlauthorlist}

    \icmlaffiliation{DSME}{Institute for Data Science in Mechanical Engineering, RWTH Aachen University, Aachen, Germany}

    \icmlcorrespondingauthor{Pierre-François Massiani}{massiani@dsme.rwth-aachen.de}

    \icmlkeywords{Kernel methods, Statistical learning theory, Support vector machines} 

    \vskip 0.3in
]



\printAffiliationsAndNotice{}

\begin{abstract}
    The consistency of a learning method is usually established under the assumption that the observations are a realization of an independent and identically distributed (i.i.d.) or mixing process.
    Yet, kernel methods such as support vector machines (SVMs), Gaussian processes, or conditional kernel mean embeddings (CKMEs) all give excellent performance under sampling schemes that are obviously non-i.i.d., such as when data comes from a dynamical system.
    We propose the new notion of \emph{empirical weak convergence (EWC)} as a general assumption explaining such phenomena for kernel methods.
    It assumes the existence of a random asymptotic data distribution and is a strict weakening of previous assumptions in the field.
    Our main results then establish consistency of SVMs, kernel mean embeddings, and general Hilbert-space valued empirical expectations with EWC data.
    Our analysis holds for both finite- and infinite-dimensional outputs, as we extend classical results of statistical learning to the latter case.
    In particular, it is also applicable to CKMEs.
    Overall, our results open new classes of processes to statistical learning and can serve as a foundation for a theory of learning beyond i.i.d. and mixing.
\end{abstract}
\section{Introduction}

A learning method is \emph{consistent} when the learned function is optimal in a certain sense in the infinite-sample limit, and the most common assumption to prove consistency is that training data comes from \iac{iid} process.
That assumption is often blatantly violated, however, leading to drops in performance for many learning algorithms.
For instance, replay buffers were introduced in deep reinforcement learning to mitigate this issue; see~\cite{MKS2013}.
In contrast, kernel methods such as \acp{SVM}, Gaussian processes, or \acp{CKME} seem unaffected by dependencies in the training data and often perform well despite the absence (or limited amount) of theoretical justification; see \cite{BST2020,vRNT2021} for examples.
\par
A common such case is that of learning on a Markov chain, where data is neither independent (previous states influence those that follow) nor identically distributed (transition probabilities may differ in different states).
The standard assumption to get consistency under such sampling is \emph{mixing}\,\cite{SC2009}.
It replaces independence with decaying correlations between samples as their temporal distance increases.
There is a vast literature showing consistency under mixing; examples are\,\cite{SHS2009,SC2009,Irl1997}.
\par
\review{%
A possible explanation for the popularity of the mixing assumption is that it allows dependencies while retaining a central concept of statistical learning: there exists an asymptotic distribution describing future samples.
Learning is then formalized as minimizing the risk of those future samples.
There are currently two main approaches to generalizing statistical learning to non-mixing data: assuming the existence of an asymptotic distribution to define the risk\,\cite{SHS2009}, or defining a non-asymptotic notion of risk\,\cite{simchowitz2018learning,ZT2022}.
We focus on the first approach, as it leverages more standard tools whereas the alternatives impose additional conditions on the data-generating process.
Central to this approach is examining whether the empirical measure has a limit instead of imposing stronger conditions enforcing the existence of this limit and convergence thereto with certain speed --- e.g., \ac{iid}~
This pertains to whether a form of the \emph{\ac{LLNE}} holds.
While \citet{SHS2009} provide first results on learning in this context, their analysis still excludes simple, non-pathological processes such as Example\,\ref{ex:simple EWC} below.
This shows the necessity for a theory of consistency under assumptions weaker than \ac{iid} or mixing.
}
\par
We propose \ac{EWC}\footnote{In what follows, we use the abbreviation \ac{EWC} indistinctly for the noun ``empirical weak convergence'' and the adjective ``empirically weakly converging''.} processes for the basis of such a theory.
They are those that possess an asymptotic data distribution, which may be random.
We focus on the weak limit of the empirical measure for that distribution since it is one of the weakest (and thus more general) and simplest notion of convergence of random measures.
Our main results then establish consistency of \acp{SVM} and of some Hilbert-space valued empirical expectations under \ac{EWC}.
Interestingly, the weak convergence we assume imposes a \emph{continuously differentiable} loss function for consistency, which is more restrictive than the continuity required in \cite{SHS2009} under convergence in total variation.
In other words, the gain of generality allowed by \ac{EWC} processes comes at the price of weaker guarantees.
\par
Our results formalize the general intuitive idea that \emph{consistency is w.r.t. the data distribution effectively generated by the process}.
Albeit simple, it offers the flexibility of a \emph{path-dependent} asymptotic distribution, which is completely new to the best of our knowledge.
Previous standard assumptions such as \ac{iid}, mixing, and the \ac{LLNE} are then sufficient conditions that specify this asymptotic distribution a priori.
\review{%
With this shift of perspective, \ac{EWC} pushes the reasoning of statistical learning to minimize an asymptotic risk beyond those assumptions.
}
\par
All of our results hold in a framework more general than that of standard statistical learning theory of \citet{SC2008}.
Specifically, we allow for a general separable Hilbert output space, whereas the above reference only considers scalar outputs.
As an immediate benefit, our analysis also applies to cases that were so far excluded such as \acp{CKME}, which are an instance of \acp{SVM} with infinite-dimensional outputs in general.

The rest of the paper is organized as follows.
We begin with preliminaries in Section~\ref{sec:preliminaries}.
Section~\ref{sec:ewc} defines \ac{EWC} processes and their properties.
Specifically, Theorems~\ref{thm:characterization ewc as} and \ref{thm:characterization ewc prob} provide necessary and sufficient conditions for \ac{EWC}, and Theorem~\ref{thm:averages of Cb(H)} shows that empirical expectations of Hilbert-space valued continuous and bounded functions converge under \ac{EWC}.
Further, Theorems~\ref{thm:continuous transformation} and~\ref{thm:ewc transition pairs} characterize \ac{EWC} of multivariate processes, which is particularly relevant as it constitutes one of our main assumptions.
Finally, Section~\ref{sec:consistency} states our main results on consistency of kernel methods, Theorems~\ref{thm:consistency kme} and~\ref{thm:consistency svm under ewc}.
With a few exceptions, proofs are in Appendix~\ref{apdx:proofs}.
Appendix~\ref{apdx:random sets} contains technical results on random compact sets necessary for the proof of Theorem~\ref{thm:averages of Cb(H)}, Appendix~\ref{apdx:slt separable hilbert} is the generalization of the general representer theorem of \cite{SC2008} to separable Hilbert output spaces, and Appendix~\ref{apdx:measurability SVMs} justifies the measurability of risks with random measures and of \acp{SVM} with such output spaces, upon which our results rely.
\review{%
We finish with a remark on terminology:
as in \cite{SC2008}, we mean by ``\ac{SVM}'' a generalized \ac{SVM}, that is, regularized empirical risk minimization over \iac{RKHS} with an arbitrary loss.
This differs from the historic definition\,\cite{SS1998}, which only considers the Hinge loss.
}

\subsection{Related work}

\paragraph{Consistency under dependent sampling}
Consistency is a fundamental property a learning method should have.
It has been thoroughly studied for \acp{SVM} in the case of \ac{iid} data with finite-dimensional outputs.
One method to obtain consistency are the so-called \emph{oracle inequalities}, of which a complete exposition is in Chapter 6 of \cite{SC2008}.
The other main method for consistency is the \emph{integral operator} technique, which is limited to the square loss as it leverages the specific form of the solution to the \ac{SVM} \cite{CDV2007}.
While it allows infinite-dimensional outputs, the required assumptions on the operators historically limited the allowed class of kernels in that case.
Recent results show consistency for a broad class of kernels; see \cite{PM2022,LMMG2022}.
Next, many bodies of work examine consistency of \acp{SVM} under \emph{mixing} assumptions rather than \ac{iid} \cite{SHS2009,SC2009,Irl1997}.
Finally, all of these works also provide learning rates quantifying the speed of the convergence.
Such rates also require restricting the class of data-generating distributions; see \cite{SHS2009,VLS2011,CDV2007}.
In the present work, we focus on \emph{whether convergence occurs} and neglect the question of its speed; in fact, we allow arbitrarily slow convergence and leave this question for future work.
Therefore, we rely on a third, non-quantitative method leveraging only laws of large numbers \cite{SC2009}.
Our results generalize those of Section 2 in that reference to \ac{EWC} data and infinite-dimensional outputs.
\review{%
Consistently with the no-free-lunch theorem\,\cite{SHS2009}, we find that the sequence of should be annealed at a rate that depends on the data-generating process in general.
This is in accordance with results on the inconsistency of kernel ridgeless regression\,\cite{RZ2019,Buc2022,BBP2023}, where the regularization is fixed to $0$.
}
\par
There are no further relaxations of independence that are specific to \acp{SVM}, to the best of our knowledge.
Instead, \emph{learnability} focuses on whether learning is possible at all without considering a specific learning method.
Results on learnability are typically on the \emph{existence} of a learning method, and are thus of independent interest.
An overview of recent results in that field is available in \cite{Han2021}.

\paragraph{Learning theory with infinite-dimensional outputs}
The classical framework of statistical learning theory from \citet{SC2008} assumes a scalar output space.
It immediately generalizes to finite-dimensional outputs by reasoning component-wise, but the extension to infinite-dimensional outputs is less clear.
In fact, there is no systematic exposition of the generalization of the definitions or the results of the above reference to that case, with the notable exception of the square loss as discussed above.

A contribution of this work is thus the generalization of the setup of statistical learning theory to separable Hilbert output spaces.
As an immediate consequence, we obtain consistency of \acp{SVM} in that case under appropriate assumptions such as \ac{iid} or \ac{EWC}.
In particular, our analysis applies to \acp{CKME}, which are, when regularized, the solutions of infinite-dimensional \acp{SVM} where the output space is itself \iac{RKHS} and output data consists of kernel partial evaluations; cf. \cite{GLB2012}.
This is of particular interest since, historically, unregularized \acp{CKME} have suffered from debates on meaningful assumptions for a well-posed definition; see \cite{KSS2020,MK2020}.
A way to understand this debate is to notice that \iac{CKME} is the solution of an \emph{unregularized} \ac{SVM}, which becomes problematic when the solution does not exist in the \ac{RKHS} where regression is performed.
We avoid this concern by only considering the regularized problem; consistency is meaningful even when the sequence of regressors does not converge in the \ac{RKHS}.
There is a rich literature on the asymptotic properties of \ac{CKME} estimation.
For instance, \citet{PM2022} and \citet{LMMG2022} show consistency and (optimal) learning rates.
While our theory is not quantitative, it applies in a more general case, as it does not require \ac{iid} data, and we are not limited to the square loss.

\section{Preliminaries and notations}\label{sec:preliminaries}
We introduce in this section necessary definitions to state our results.
We focus on the different spaces we consider, Markov kernels, statistical learning theory, and \acp{RKHS}.

\subsection{Sets and topology} \label{sec:preliminaries:sets}

A Polish space is a topological space $(\X,\mathcal{T})$ that is separable and completely metrizable.
In what follows, we abuse notation and simply say that $\X$ is a Polish space, omitting specifying the topology since it is always clear from context.
Specifically, we equip product spaces with the natural product topology, and any normed vector space with the topology induced by its norm.
In particular, $\R$ is equipped with its usual topology (for which it is Polish), a complete subset of a separable Hilbert space is Polish, and a product of Polish spaces is Polish.
Then, a Polish space is \emph{locally compact} if every point has a compact neighborhood; this is for instance the case of finite-dimensional vector spaces.
Finally, if $\X$ and $\Y$ are Polish, then $C(\X;\Y)$ is the set of continuous functions from $\X$ to $\Y$.
If, additionally, $\Y$ is metric, $\Cb(\X;\Y)$ is the set of continuous bounded functions from $\X$ to $\Y$.
If $\X$ is compact, then $\Cb(\X;\Y)$ equipped with the topology of uniform convergence is separable.

We equip any Polish space $\X$ with its $\sigma$-algebra of Borel sets, that is, the $\sigma$-algebra generated by its topology.
We denote it by $\B(\X)$.
In this context, we define $\Pcal(\X)$ as the set of probability measures on the measurable space $(\X,\B(\X))$.
For any probability measure $P\in\Pcal(\X)$ and $f\in\Cb(\X;\R)$, we define the standard notation \begin{equation*}
    Pf := \int_\X f(x)\d P(x)
\end{equation*}
We also use this notation if $P$ is not a probability measure but only a finite sum or difference thereof, making it a signed measure.
We endow $\Pcal(\X)$ with the topology of weak convergence:
recall that a sequence of probability measures $(P_n)_{n\in\Nstar}\subset\Pcal(\X)$ converges weakly to a probability measure $Q\in\Pcal(\X)$ if, for every $f\in\Cb(\X;\R)$,\begin{equation*}
    P_n\,f\goesto{n\to\infty} Q\,f.
\end{equation*}
We denote this as $P_n\cvw Q$.
Further, it is known that if $\X$ is Polish, then $\Pcal(\X)$ is metrizable; see for instance \cite{SGF2010}.

Finally, if $\X$ is a topological space, $P\in\Pcal(\X)$, and $\G$ is a Banach space, we define $\L_0(\X;\G)$  the set of (Bochner) measurable functions from $\X$ to $\G$.
Then, $\L_\infty(\X;\G)$ and $\L_2(\X,P;\G)$ are its subsets consisting of bounded functions and of functions with finite $2$ norm w.r.t. $P$, $L_2(\X,P,\G)$ is the Bochner space of equivalence classes of functions in $\L_2(\X,P;\G)$, and $L_\infty(\X,P;\G)$ is the set of $P$-essentially bounded functions from $\X$ to $\G$.

\subsection{Markov kernels and random elements}
All the random variables that we consider in this work are defined on a complete, standard Borel probability space $(\Omega,\A,\P)$.
We consider discrete stochastic processes, which we index by the set of positive integers $\Nstar$ without loss of generality unless mentioned otherwise.
If $X$ is a process taking values in $\X$, we consistently use the notation $X_n$ for its value at time $n\in\Nstar$ and do not introduce this notation again later.
We also use the notation $X_{1:n}$ to denote the random $\X^n$-valued vector $(X_1,\dots,X_n)$, $n\in\Nstar$.
Further, if $Z$ is an $\X\times\Y$-valued process, the notation $Z = (X,Y)$ means that $X$ and $Y$ are $\X$- and $\Y$-valued processes, respectively, such that $Z_n = (X_n, Y_n)$ for all $n\in\Nstar$.

For an $\X$-valued random variable $X$ and a sub-$\sigma$-algebra $F\subset\B(\X)$, we denote by $\E[X\mid F]$ and $\P[X\in\cdot\,\mid\,F]$ versions of the conditional expectation and law of $X$ given $F$.
The specific choices of conditional expectation and law do not play a role in this article.
If $F$ is the $\sigma$-algebra generated by one or many random variables, we conveniently replace $F$ with the appropriate variables in the above notations.
Finally, if $Y$ is a $\Y$-valued process, we allow the notation $Y_{1:0}$ when conditioning to denote conditioning w.r.t. the trivial $\sigma$-algebra $\{\emptyset,\Omega\}$.

Next, we introduce Markov kernels \cite{Kal2017}, which we consider in two contexts: stochastic input-output maps, and random measures.
\begin{definition}[Markov kernel]\label{def:markov kernel}
    Let $(\X,\A_\X)$ and $(\Y,\A_\Y)$ be measurable spaces.
    A \emph{Markov kernel} from $\X$ to $\Y$ is a map $p:\A_\Y\times\X\to [0,\infty)$ such that \begin{enumerate}
        \item \label{def:markov kernel:pointwise measure} for all $x\in\X$, $p(\cdot, x)$ is a probability measure on $\Y$;
        \item \label{def:markov kernel:setwise measurability} for all $A\in\A_\Y$, $p(A, \cdot)$ is $\A_\X$-measurable.
    \end{enumerate}
\end{definition}
The following notion generalizes continuity to such kernels.
It is classically defined for continuous processes, but readily applies to discrete ones as well~\cite{li2009criteria}.
\begin{definition}
    Let $\X$ and $\Y$ be Polish spaces.
    We say that a Markov kernel $p$ from $\X$ to $\Y$ is Feller-continuous if the map $x\in\X\mapsto p(\cdot, x)f$ is in $\Cb(\X;\R)$ for any $f\in\Cb(\Y;\R)$.
\end{definition}
\begin{definition}
    Let $\X$ be Polish space.
    We call a measurable map $P:\Omega\to\Pcal(\X)$ a \emph{random measure}.
\end{definition}
An alternative definition is through Markov kernels: a map $P:\Omega\to\Pcal(\X)$ is a random measure if, and only if, the map $p:\B(\X)\times\Omega\to\R$ defined by $p(B,\omega) = P(\omega)(B)$ for all $B\in\B(\X)$ and $\omega\in\Omega$ is a Markov kernel; see Lemma 1.14 in \cite{Kal2017}.
If $P$ is a random measure on $\X$, we abuse notation and often omit its dependency on $\omega\in\Omega$.
The \emph{empirical (probability) measure} of an $\X$-valued process $X$ at time $n\in\Nstar$ is \begin{equation*}
    \eta^{X}_{n} = n^{-1}\sum_{i=1}^{n}\delta_{X_i},
\end{equation*}
where $\delta_x\in\Pcal(\X)$ is the Dirac measure with mass at $x\in\X$.
The empirical measure is a random probability measure.
We also use the notation $\eta_n^X$ when $X$ is a deterministic sequence; then, $\eta_n^X$ is a probability measure.

\subsection{Elements of statistical learning theory}

We generalize here the definitions of \cite{SC2008} to the case of an infinite-dimensional output space, which is necessary for our main result (Theorem~\ref{thm:consistency svm under ewc}) to also encompass \acp{CKME}.
The definitions here are straightforward extensions of those of the above reference.
More interesting is the resulting general representer theorem, which generalizes Theorem 5.9 in the reference to separable infinite-dimensional output spaces.
Although it is new to the best of our knowledge, its proof is a direct generalization of its counterpart in \cite{SC2008}.
For this reason, we defer the statement and proof of this general representer theorem to Appendix~\ref{apdx:slt separable hilbert}.

In what follows, $\X$ is a Polish space, $\G$ is a Hilbert space, and $\Y$ is a complete subset of $\G$.
\begin{definition}
    A function $L:\X\times\Y\times\G\to[0,\infty)$ is called a \emph{loss function} (on $\X\times\Y\times\G$) if it is measurable.
    Then, it is convex (resp. continuous) if, for all $x\in\X,y\in\Y$, the function $L(x,y,\cdot)$ is convex (resp. continuous).
    Further, $L$ is \emph{locally bounded} if, for all bounded $A\subset\G$, the restriction $L_{\mid\X\times\Y\times A}$ is bounded, and \emph{locally Lipschitz continuous} if, for all $a>0$, the following quantity is finite: \begin{equation*}
        \lvert L\rvert_{a,1}:= \sup_{\substack{(t,t^\prime)\in\G^2\\0<\norm{t-t^\prime}_\G \leq 2a}}\sup_{\substack{x\in\X\\y\in\Y}}\frac{\lvert L(x,y,t) - L(x,y,t^\prime)\rvert}{\norm{t-t^\prime}_\G}.
    \end{equation*}
    It is \emph{Lipschitz continuous} if $\lvert L\rvert_1 = \sup_{a>0}\lvert L\rvert_{a,1}<\infty$.
    Finally, the loss is (Fréchet) differentiable if the map $L(x,y,\cdot)$ is Fréchet differentiable on $\G$ for all $(x,y)\in\X\times\Y$, that is, for all $t\in\G$ there exists a vector $A_{x,y,t}\in\G$ such that\begin{equation*}
        \lim_{\substack{h\to 0\\h\neq 0}} \frac{\lvert L(x,y,t+h) - L(x,y,t) - \scalar{A_{x,y,t},h}\rvert}{\norm{h}_\G} = 0.
    \end{equation*}
    The vector $A_{x,y,t}$ is then written $\nabla L(x,y,t)$.
    We say that $L$ is \emph{continuously (Fréchet) differentiable} if the map $(x,y,t)\in\X\times\Y\times\G\mapsto \nabla L(x,y,t)$ is continuous.
\end{definition}
\begin{definition}
    A loss function $L:\X\times\Y\times\G\to\R$ is called a Nemitski loss function if there exists a measurable function $b:\X\times\Y\to[0,\infty)$ and an increasing function $h:[0,\infty)\to[0,\infty)$ such that, for all $(x,y,t)\in\X\times\Y\times\G$, \begin{equation*}
        L(x,y,t) \leq b(x,y) + h(\norm{t}_\G).
    \end{equation*}
    Further, it is a Nemitski loss of order $p\in(0,\infty)$ if there exists a constant $c>0$ such that the above holds with $h(s) = c\cdot s^p$, for $s>0$.
    Finally, we say that $L$ is a $J$-integrable Nemitski loss if $b\in \L_1(\X\times\Y,J;\R)$ for $J\in\Pcal(\X\times\Y)$.
\end{definition}
\begin{definition}
    Let $L$ be a loss function and $J\in\Pcal(\X\times\Y)$.
    For any measurable function $f:\X\to\G$, we define its \emph{$L$-risk} as \begin{equation*}
        \Rcal_{L,J}(f) = \int_{\X\times\Y} L(x,y,f(x))\d J(x,y).
    \end{equation*}
    The \emph{Bayes $L$-risk} is then $\Rcal^\star_{L,J} = \inf\{\Rcal_{L,J}(f)\mid f\in\L_0(\X;\G)\}$.
\end{definition}
\begin{definition}
    Let $L$ be a loss function and $\F$ a set of measurable functions from $\X$ to $\G$.
    For $J\in\Pcal(\X\times\Y)$, we say that $\F$ is $(L,J)$-rich if \begin{equation*}
        \Rcal_{\F,L,J} := \inf_{f\in\F}\Rcal_{L,J}(f) = \Rcal^\star_{L,J}.
    \end{equation*}
\end{definition}
A \emph{learning method} $\Lfrak$ is then a mapping that maps any training set $Z=((x_i,y_i))_{i=1}^n\in(\X\times\Y)^n$, $n\in\Nstar$, to a unique measurable function $f_Z:\X\to\G$.
We postpone introducing the notion of consistency that we consider to Section~\ref{sec:consistency}, since it requires first defining \ac{EWC} processes.

\subsection{Vector-valued RKHSs and SVMs}
In this section, $\X$ is a measurable space and $\G$ a separable Hilbert space.
\begin{definition}
    A $\G$-valued \ac{RKHS} $\H$ on $\X$ is a Hilbert space $(\H, \scalar{\cdot,\cdot}_\H)$ of functions such that for all $x\in\X$, the evaluation operator $S_x:f\in\H\mapsto f(x)\in\G$ is continuous.
    Then, we define $K(\cdot,x) = S_x^\star$ and $K(x, x^\prime) = S_xS_{x^\prime}^\star$, for all $x,x^\prime\in\X$.
    The map $K:\X\times\X\to\L(\G)$ is called the (operator-valued) reproducing kernel of $\H$, where $\L(\G)$ is the Banach space of continuous linear operators on $\G$.
\end{definition}
\begin{theorem}
    Let $\H$ be a $\G$-valued \ac{RKHS} with kernel $K$.
    Then, $K$ is Hermitian, positive semi-definite\footnote{Recall that a bivariate function $\phi:\X\times\X\to\L(\G)$ is Hermitian if $\phi(x,x^\prime) = \phi(x^\prime, x)^\star$. Further, it is positive semi-definite if for all $n\in\Nstar$, $(x_i)_{i=1}^n\in\X^n$, and $(g_i)_{i=1}^n\in\G^n$, $\sum_{i=1}^n\sum_{j=1}^n\scalar{g_i, \phi(x_i,x_j)g_j}_\G \geq 0$.}, and the reproducing property holds for all $x\in\X,\,f\in\H$, and $g\in\G$: \begin{equation*}
        \scalar{f(x), g}_\G = \scalar{f, K(\cdot, x)g}_\H.
    \end{equation*}
\end{theorem}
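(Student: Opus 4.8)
The plan is to derive all three properties directly from the defining adjoint relation of the evaluation operators, which is the only tool required. Since each $S_x:\H\to\G$ is continuous by assumption, it admits a bounded adjoint $S_x^\star:\G\to\H$, so that $K(\cdot,x)=S_x^\star$ is a well-defined bounded operator and $K(x,x')=S_xS_{x'}^\star$ is a composition of bounded operators, hence lies in $\L(\G)$. The single identity I would invoke repeatedly is the definition of the adjoint, $\scalar{S_xf,g}_\G=\scalar{f,S_x^\star g}_\H$, valid for all $f\in\H$ and $g\in\G$.

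First I would establish the reproducing property, since the remaining two claims reduce to the same manipulation. Writing $f(x)=S_xf$ and applying the adjoint relation gives $\scalar{f(x),g}_\G=\scalar{S_xf,g}_\G=\scalar{f,S_x^\star g}_\H=\scalar{f,K(\cdot,x)g}_\H$, which is exactly the asserted identity.

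For the Hermitian property, I would compute the adjoint of $K(x,x')=S_xS_{x'}^\star$ using $(AB)^\star=B^\star A^\star$ together with the involution $(A^\star)^\star=A$; this yields $K(x,x')^\star=S_{x'}S_x^\star=K(x',x)$, i.e.\ $K(x,x')=K(x',x)^\star$. For positive semi-definiteness, the idea is to recognize the double sum as a squared norm in $\H$. Using $K(x_i,x_j)=S_{x_i}S_{x_j}^\star$ and the adjoint relation, each summand becomes $\scalar{g_i,S_{x_i}S_{x_j}^\star g_j}_\G=\scalar{S_{x_i}^\star g_i,S_{x_j}^\star g_j}_\H$; summing over $i,j$ and invoking bilinearity of the inner product collapses the expression to $\norm{\sum_{i=1}^nS_{x_i}^\star g_i}_\H^2\geq0$.

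I do not expect a genuine obstacle: the argument is entirely formal once continuity of $S_x$ guarantees that the adjoints exist and are bounded. The only points requiring a little care are confirming that $K(x,x')$ indeed maps into $\L(\G)$ — which follows because it composes two bounded maps — and keeping track of the order of the arguments in the Hermitian identity, since transposition reverses the roles of $x$ and $x'$.
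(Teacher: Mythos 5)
Your proof is correct and is the standard argument; the paper itself states this theorem without proof as a known fact (deferring to the cited literature on operator-valued kernels), and your derivation via the adjoint relation $\scalar{S_xf,g}_\G=\scalar{f,S_x^\star g}_\H$ is exactly the canonical one. All three steps — the reproducing property, the Hermitian identity $K(x,x')^\star=S_{x'}S_x^\star=K(x',x)$, and collapsing the double sum to $\norm{\sum_i S_{x_i}^\star g_i}_\H^2$ — check out against the paper's definitions.
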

It is well known that, for every positive semi-definite function $K:\X\times\X\to\L(\G)$, there exists a unique $\G$-valued \ac{RKHS} of which $K$ is the unique reproducing kernel; see \cite{CDVT2006}.
A special case of vector-valued \acp{RKHS} is that of scalar-valued \acp{RKHS}, that is, when $\G = \R$.
Then, we introduce a symmetric, positive definite function $k:\X\times\X\to\R$ such that $K(x,x^\prime) = k(x,x^\prime)\Id_\R$ (which always exists), and refer to $k$ as the reproducing kernel of $\H$.
\begin{definition}
    We say that a kernel $K$ is \emph{bounded} if \begin{equation*}
        \norm{K}_\infty := \sup_{x\in\X}\sqrt{\norm{K(x,x)}_{\L(\G)}} < \infty.
    \end{equation*}
\end{definition}
\par
Given a convex loss $L$, a $\G$-valued \ac{RKHS} $\H$, and a training set $Z=((x_i,y_i))_{i=1}^n\in(\X\times\Y)^n$, $n\in\Nstar$, there exists a unique minimizer to the following minimization problem, where $\lambda>0$ is a regularization parameter (cf. Lemma~\ref{lemma:existence uniqueness SVM solutions}): \begin{equation}\label{eq:SVM}
    f_{Z,\lambda} = \arg\min_{f\in\H} \Rcal_{L,\eta^Z_n}(f) + \lambda \norm{f}_\H^2.
\end{equation}
We call the learning method that maps, given a sequence of regularization parameters $(\lambda_n)_{n\in\Nstar}$, a sequence of training sets $(Z_n)_{n\in\Nstar}$ to the sequence $(f_{Z_n,\lambda_n})$ the $(\lambda_n)$-\ac{SVM}.

\section{Empirical weak convergence} \label{sec:ewc}
In this section, we introduce the notion of \ac{EWC} processes.
We show some first properties and discuss the connections with other standard notions.
Finally, we examine \ac{EWC} of joint processes.

\subsection{Definition and first properties}\label{sec:ewc definition}
We are interested in processes that possess an asymptotic data distribution.
A reasonable definition is to take for such an asymptotic distribution the limit of $(\eta_n^X)_{n\in\Nstar}$ considered as an element of a topological space; here, $\Pcal(\X)$ endowed with a suitable topology.
We choose that of weak convergence, as it is allows for a broad class of processes while still providing strong guarantees, as we will see it.
Further, since $(\eta_n^X)_{n\in\Nstar}$ is a \emph{random} measure, the convergence should be in a probabilistic sense.
\begin{definition} \label{def:ewc}
    Let $\X$ be a Polish space and $d$ be any metric that metrizes the weak convergence on $\Pcal(\X)$.
    We say that an $\X$-valued process $X$ is \emph{\acf{EWC}} \emph{in probability} if there exists a random measure $P$ on $\X$ such that and $\eta_n^X\cvw P$ in probability, that is, \begin{equation}
        \lim_{n\to\infty}d(\eta_n^X,P) = 0, \label{eq:ewc}
    \end{equation}
    where convergence is in probability.
    Further, we say that $X$ is \emph{\ac{EWC} \ac{as}} if the weak convergence $\eta_n^X\cvw P$ occurs \ac{as}, that is, if the convergence in \eqref{eq:ewc} holds \ac{as}
    In either case, the random measure $P$ is called a limit measure of the process.
\end{definition}
\begin{proposition} \label{prop:uniqueness limit}
    Let $\X$ be a Polish space.
    The notions of \ac{EWC} in probability and \ac{EWC} \ac{as} are independent of the metric $d$ that metrizes weak convergence.
    Furthermore, the random limit measure $P$ is unique \Pas.
\end{proposition}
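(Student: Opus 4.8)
The plan is to split the statement into the metric-independence claim and the uniqueness claim, and within the former to handle the \ac{as} and in-probability notions by different arguments. Throughout I will use that any two metrics $d, d'$ metrizing weak convergence on $\Pcal(\X)$ induce the same topology, and that $\Pcal(\X)$ is metrizable (hence Hausdorff), so that weak limits are unique whenever they exist. A preliminary measurability remark makes the statement well-posed for each choice of metric: for a random measure $P$ and each $n$, the map $\omega\mapsto d(\eta_n^X(\omega), P(\omega))$ is measurable, being the composition of the measurable map $\omega\mapsto(\eta_n^X(\omega), P(\omega))$ into $\Pcal(\X)^2$ with the continuous, thus Borel, metric $d$.

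For the \ac{as} notion, metric independence is immediate. On a set of full probability, the condition $d(\eta_n^X(\omega), P(\omega))\to 0$ says exactly that $\eta_n^X(\omega)\cvw P(\omega)$ in the weak topology. This is a purely topological statement for each fixed $\omega$, and is therefore equivalent to $d'(\eta_n^X(\omega), P(\omega))\to 0$ for any other metric $d'$ metrizing the same topology.

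The main obstacle is the in-probability notion, since topological equivalence does not by itself transfer to convergence in probability. Here I would invoke the subsequence characterization: a sequence of real random variables converges to $0$ in probability if and only if every subsequence admits a further subsequence converging \ac{as} to $0$. Assume $d(\eta_n^X, P)\to 0$ in probability and fix any other admissible metric $d'$. Given an arbitrary subsequence, extract a further subsequence along which $d(\eta_n^X, P)\to 0$ holds \ac{as}; along it $\eta_n^X\cvw P$ \ac{as}, so $d'(\eta_n^X, P)\to 0$ \ac{as} by the \ac{as} case already treated. Thus every subsequence of $(d'(\eta_n^X, P))_n$ has a further subsequence tending to $0$ \ac{as}, whence $d'(\eta_n^X, P)\to 0$ in probability, as required.

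Finally, for uniqueness, suppose $P$ and $P'$ are both limit measures. It suffices to treat the in-probability case, since \ac{as} convergence implies convergence in probability. Fixing any admissible metric $d$, the triangle inequality yields $d(P, P')\le d(\eta_n^X, P) + d(\eta_n^X, P')$, and both terms on the right tend to $0$ in probability. Since the left-hand side is independent of $n$, for every $\epsilon>0$ the constant probability $\P(d(P,P')>\epsilon)$ is bounded above by $\P(d(\eta_n^X,P) + d(\eta_n^X, P')>\epsilon)$, which vanishes as $n\to\infty$; hence $\P(d(P,P')>\epsilon)=0$ for all $\epsilon>0$, giving $d(P,P')=0$ and so $P = P'$ \Pas. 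I expect the only genuine subtlety to be the correct use of the subsequence criterion in the in-probability argument; the measurability remark and the triangle-inequality step are routine.
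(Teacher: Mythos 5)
Your proposal is correct and follows essentially the same route as the paper: the a.s.\ case is treated as a purely topological statement pointwise in $\omega$, the in-probability case is reduced to it via the subsequence criterion (the paper cites Corollary 20.8 in \cite{Bau2011} for exactly this), and uniqueness is obtained from the triangle inequality plus a union-type bound on the tail probabilities. The only cosmetic difference is that your uniqueness step bounds $\P(d(\eta_n^X,P)+d(\eta_n^X,P')>\epsilon)$ directly rather than splitting into two events, which is if anything slightly cleaner.
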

It is clear that \ac{as} \ac{EWC} implies its counterpart in probability with the same limit measure, and that the converse is not true in general.
\begin{remark}
    The above definition raises the question of the measurability of the map $\omega\mapsto d(\eta_n^X,P)(\omega)$.
    It is indeed measurable, since $(\eta_n^X,P)$ is measurable (cf. Theorem 14.8 in \cite{Kle2013}) and $d$ is trivially continuous.
\end{remark}
A meaningful question is then on the asymptotic behaviour of the empirical averages $\eta_n^Xf$, where $f$ is a test function.
Usual results on weak convergence hint that they should converge to $Pf$ as long as $f\in\Cb(\X;\R)$.
This is what the following result guarantees, with the additional technicality that the convergence is here again probabilistic.
In fact, this is an equivalent characterization of \ac{EWC} under an additional compactness assumption.
\begin{theorem}\label{thm:characterization ewc as}
    Let $\X$ be a Polish space, $X$ an $\X$-valued process, and $P$ a random measure on $\X$. Consider the following statements:
    \begin{enumerate}[label=(\roman*)]
        \item \label{stmt:ewc as:i} $X$ is \ac{EWC} \ac{as} with limit measure $P$;
        \item \label{stmt:ewc as:ii} $\P\left[\forall f\in\Cb(\X;\R),\limsup_{n}\lvert \eta_n^X f - Pf\rvert = 0\right]=1$;
        \item \label{stmt:ewc as:iii} for all $f\in\Cb(\X;\R)$, $\lim_{n\to\infty} \eta_n^Xf = Pf$, where the convergence is \ac{as};
    \end{enumerate}
    Then, \ref{stmt:ewc as:i} $\iff$ \ref{stmt:ewc as:ii} $\implies$ \ref{stmt:ewc as:iii}.
    If, additionally, $\X$ is compact, the implication \ref{stmt:ewc as:iii} $\implies$ \ref{stmt:ewc as:ii} also holds and all of the statements are equivalent.
\end{theorem}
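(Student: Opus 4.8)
The plan is to treat the three implications separately, with essentially all of the substance concentrated in the last one. For \ref{stmt:ewc as:i} $\iff$ \ref{stmt:ewc as:ii}, I would argue that both statements are merely two descriptions of the same event. By the definition of weak convergence via test functions, $\eta_n^X(\omega)\cvw P(\omega)$ holds exactly when $\eta_n^X(\omega)f\to P(\omega)f$ for every $f\in\Cb(\X;\R)$, and since $d$ metrizes weak convergence, this is in turn equivalent to $d(\eta_n^X(\omega),P(\omega))\to 0$. Observing that $\limsup_n\lvert\eta_n^X f - Pf\rvert = 0$ is the same as $\eta_n^X f\to Pf$, the event in \ref{stmt:ewc as:ii} coincides pointwise in $\omega$ with $\{d(\eta_n^X,P)\to 0\}$, which is the defining event of \ac{EWC} \ac{as} To dispatch the apparent measurability issue with the uncountable intersection over $f$ in \ref{stmt:ewc as:ii}, I note that this coincidence is precisely what makes that intersection measurable: the map $\omega\mapsto d(\eta_n^X,P)(\omega)$ is measurable, as in the preceding remark, so $\{d(\eta_n^X,P)\to 0\}$ is measurable, and the two statements both assert that this single event has probability one.

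For \ref{stmt:ewc as:ii} $\implies$ \ref{stmt:ewc as:iii}, I would simply observe that the single almost-sure event furnished by \ref{stmt:ewc as:ii} works for every $f$ at once, so \emph{a fortiori} the convergence $\eta_n^X f\to Pf$ holds almost surely for each individual $f$.

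The main work is \ref{stmt:ewc as:iii} $\implies$ \ref{stmt:ewc as:ii} under compactness, where the obstacle is upgrading from a null set that may depend on $f$ to a single null set valid simultaneously for the uncountable family $\Cb(\X;\R)$. This is exactly where compactness of $\X$ enters, through the separability of $\Cb(\X;\R)$ in the uniform topology recalled in Section~\ref{sec:preliminaries:sets}. Fixing a countable dense subset $\{f_k\}_{k\in\N}$, statement \ref{stmt:ewc as:iii} provides null sets $N_k$ outside of which $\eta_n^X f_k\to Pf_k$; their union $N=\bigcup_k N_k$ is still null, and on $\Omega_0=\Omega\setminus N$ convergence holds for every $f_k$ simultaneously.

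It then remains to extend this to all of $\Cb(\X;\R)$ on the fixed event $\Omega_0$. For arbitrary $f\in\Cb(\X;\R)$ and $\varepsilon>0$, I would pick $f_k$ with $\norm{f-f_k}_\infty<\varepsilon$ and split
\begin{equation*}
    \lvert\eta_n^X f - Pf\rvert \le \lvert\eta_n^X(f-f_k)\rvert + \lvert\eta_n^X f_k - Pf_k\rvert + \lvert P(f_k-f)\rvert.
\end{equation*}
Because $\eta_n^X$ and $P$ are probability measures, the outer two terms are each bounded by $\norm{f-f_k}_\infty<\varepsilon$, while the middle term tends to $0$ on $\Omega_0$. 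Hence $\limsup_n\lvert\eta_n^X f - Pf\rvert\le 2\varepsilon$, and letting $\varepsilon\to 0$ gives $\limsup_n\lvert\eta_n^X f - Pf\rvert = 0$ for every $f\in\Cb(\X;\R)$ on $\Omega_0$, which is exactly \ref{stmt:ewc as:ii}. I expect the only genuinely delicate point to be the clean invocation of separability to collapse the uncountable family to a countable dense one; the rest is the standard three-term estimate, controlled by the fact that both the empirical and the limit measures are probabilities.
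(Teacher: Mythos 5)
Your proposal is correct and follows essentially the same route as the paper's proof: the equivalence \ref{stmt:ewc as:i} $\iff$ \ref{stmt:ewc as:ii} as an identity of events, the trivial implication to \ref{stmt:ewc as:iii}, and under compactness the separability of $\Cb(\X;\R)$ with a countable dense family, a union of null sets, and the standard approximation estimate. The only cosmetic difference is that you split the error into three terms bounded by $\norm{f-f_k}_\infty$ each via the fact that $\eta_n^X$ and $P$ are probability measures, whereas the paper groups the two outer terms through the total variation of the signed measure $\eta_n^X - P$; the two bookkeeping choices are equivalent.
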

\begin{theorem}\label{thm:characterization ewc prob}
    Let $\X$ be a Polish space, $X$ an $\X$-valued process, and $P$ a random measure. Consider the following statements:
    \begin{enumerate}[label=(\roman*)]
        \item \label{stmt:ewc prob:i} $X$ is \ac{EWC} in probability with limit measure $P$;
        \item \label{stmt:ewc prob:ii} for any strictly increasing sequence $(k_n)_{n\in\Nstar}\subset\Nstar$, there exists a subsequence $(k_{m_n})_{n\in\Nstar}$ such that \begin{equation*}
            \P[\forall f\in\Cb(\X;\R),\limsup_{n}\lvert \eta_{k_{m_n}}^X f - Pf\rvert = 0] = 1.
        \end{equation*}
        \item \label{stmt:ewc prob:iii} for all $f\in\Cb(\X;\R)$, $\lim_{n\to\infty} \eta_n^Xf = Pf$, where the convergence is in probability.
    \end{enumerate}
    Then, \ref{stmt:ewc prob:i} $\iff$ \ref{stmt:ewc prob:ii} $\implies$ \ref{stmt:ewc prob:iii}.
    If, additionally, $\X$ is compact, the implication \ref{stmt:ewc prob:iii} $\implies$ \ref{stmt:ewc prob:ii} also holds and all of the statements are equivalent.
\end{theorem}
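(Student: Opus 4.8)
The plan is to mirror the almost-sure characterization (Theorem~\ref{thm:characterization ewc as}) using the standard \emph{subsequence principle} for convergence in probability: a sequence of real random variables $Z_n$ converges to $Z$ in probability if and only if every subsequence $(Z_{k_n})$ admits a further subsequence $(Z_{k_{m_n}})$ converging to $Z$ almost surely. Throughout I write $D_n := d(\eta_n^X, P)$, so that statement~\ref{stmt:ewc prob:i} is, by Definition~\ref{def:ewc}, exactly $D_n \to 0$ in probability. The one fact I would use repeatedly is the \emph{pathwise metrization} of weak convergence: for any fixed $\omega\in\Omega$ and any index sequence $(j_n)$, one has $D_{j_n}(\omega)\to 0$ if and only if $\eta_{j_n}^X(\omega) f \to P(\omega) f$ for every $f\in\Cb(\X;\R)$, since $d$ metrizes weak convergence. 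In particular, the event in~\ref{stmt:ewc prob:ii} associated with a subsequence $(k_{m_n})$ coincides, $\omega$ by $\omega$, with $\{D_{k_{m_n}}\to 0\}$, which is measurable because each $D_n$ is (cf. the remark following Definition~\ref{def:ewc}).

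For \ref{stmt:ewc prob:i} $\iff$ \ref{stmt:ewc prob:ii} I would simply apply the subsequence principle to $Z_n=D_n$ and $Z=0$: $D_n\to 0$ in probability iff every subsequence has a further subsequence with $D_{k_{m_n}}\to 0$ almost surely, and by the preceding paragraph $\P[\{D_{k_{m_n}}\to 0\}]=1$ holds precisely when the displayed event of~\ref{stmt:ewc prob:ii} has probability one. This is a verbatim translation.

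For \ref{stmt:ewc prob:ii} $\implies$ \ref{stmt:ewc prob:iii}, fix $f\in\Cb(\X;\R)$ and set $Y_n := |\eta_n^X f - Pf|\ge 0$. To show $Y_n\to 0$ in probability I invoke the subsequence principle once more: given any subsequence $(k_n)$, statement~\ref{stmt:ewc prob:ii} furnishes a further subsequence $(k_{m_n})$ on which, almost surely, $\eta_{k_{m_n}}^X g\to Pg$ for \emph{all} $g$, hence in particular $Y_{k_{m_n}}\to 0$ almost surely. Thus every subsequence of $(Y_n)$ has an almost-surely null further subsequence, so $Y_n\to 0$ in probability, which is~\ref{stmt:ewc prob:iii}.

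The substantive direction is \ref{stmt:ewc prob:iii} $\implies$ \ref{stmt:ewc prob:ii} under compactness of $\X$, where the hypothesis does real work. Since $\X$ is compact, $\Cb(\X;\R)=C(\X;\R)$ is separable; fix a countable uniformly dense family $(f_j)_{j\in\Nstar}$. Given any subsequence $(k_n)$, I would apply~\ref{stmt:ewc prob:iii} to $f_1$ to extract a further subsequence along which $\eta^X f_1\to Pf_1$ almost surely, then extract again for $f_2$, and so on, taking the diagonal subsequence $(k_{m_n})$; intersecting the countably many almost-sure events yields a single event $\Omega_0$ with $\P[\Omega_0]=1$ on which $\eta_{k_{m_n}}^X f_j\to Pf_j$ for every $j$. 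The upgrade from the dense family to all of $C(\X;\R)$ uses that each $\eta_{k_{m_n}}^X(\omega)$ and $P(\omega)$ is a probability measure (total mass one): for $g\in C(\X;\R)$ and $\varepsilon>0$, choosing $f_j$ with $\norm{g-f_j}_\infty<\varepsilon$ gives $\limsup_n |\eta_{k_{m_n}}^X(\omega) g - P(\omega) g|\le 2\varepsilon$, whence $\eta_{k_{m_n}}^X(\omega) g\to P(\omega) g$ for all $g$. By pathwise metrization this means $\Omega_0\subseteq\{D_{k_{m_n}}\to 0\}$, so the event of~\ref{stmt:ewc prob:ii} has probability one. I expect this upgrade to be the main obstacle: it is exactly where compactness is indispensable, both for separability of $C(\X;\R)$ and to preclude escaping mass, since on a non-compact space convergence of a dense family of integrals need not force weak convergence.
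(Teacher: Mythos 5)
Your proposal is correct and follows essentially the same route as the paper: the equivalence \ref{stmt:ewc prob:i} $\iff$ \ref{stmt:ewc prob:ii} and the implication \ref{stmt:ewc prob:ii} $\implies$ \ref{stmt:ewc prob:iii} via the subsequence principle for convergence in probability (the paper cites Corollary 20.8 in \cite{Bau2011} for this), and the compact case via separability of $C(\X;\R)$, a diagonal extraction over a countable dense family, a single union of null sets, and the $2\varepsilon$ density upgrade using that $\eta_n^X$ and $P$ have total mass one. No gaps; the measurability of the event in \ref{stmt:ewc prob:ii} is handled the same way, by identifying it pathwise with $\{d(\eta_{k_{m_n}}^X,P)\to 0\}$.
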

We point that, in the second statement of the above two theorems, the set of which the probability is evaluated is indeed measurable.
The next results are very useful properties of \ac{EWC} processes: empirical averages of Hilbert-space-valued continuous and bounded maps also converge.
They are key in the proof of Theorem~\ref{thm:consistency svm under ewc}, but their generality makes them of independent interest.
\begin{theorem}\label{thm:averages of Cb(H)}
    Let $\X$ be a locally compact Polish space and $\H$ be a separable Hilbert space.
    Let $X$ be an $\X$-valued process, and assume that $X$ is \ac{EWC} in probability (resp. \ac{as}) with limit measure $P$.
    Then, for all $\phi\in\Cb(\X;\H)$, we have \begin{equation*}
        \lim_{n\to\infty} \eta_n^X \phi = P\phi,
    \end{equation*}
    where the convergence is in probability (resp. \ac{as}).
\end{theorem}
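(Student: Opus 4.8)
The plan is to first reduce the statement in probability to the almost-sure statement, and then to prove the latter by a finite-dimensional approximation of $\phi$ combined with tightness. To begin, I would check that $\norm{\eta_n^X\phi-P\phi}_\H$ is a genuine random variable. The map $\omega\mapsto\eta_n^X\phi$ is measurable, being the finite average $n^{-1}\sum_{i=1}^n\phi(X_i)$ of $\H$-valued random variables. For $\omega\mapsto P\phi$, fix an orthonormal basis $(e_k)_k$ of $\H$ and let $\Pi_N$ denote the orthogonal projection onto $\operatorname{span}(e_1,\dots,e_N)$; then $\omega\mapsto P\Pi_N\phi=\sum_{k=1}^N (P\scalar{\phi,e_k})\,e_k$ is measurable because each $\scalar{\phi,e_k}\in\Cb(\X;\R)$, and these converge pointwise to $P\phi$ by dominated convergence, since $\norm{\phi-\Pi_N\phi}_\H\to0$ pointwise and is bounded by $2\norm{\phi}_\infty$. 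Because a sequence converges in probability if and only if every subsequence admits a further subsequence converging almost surely, and because Theorem~\ref{thm:characterization ewc prob} lets me extract from any subsequence a further one on which the almost-sure event of statement \ref{stmt:ewc as:ii} in Theorem~\ref{thm:characterization ewc as} holds, it suffices to prove the almost-sure statement.

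For the almost-sure case, I would fix $\omega$ in the probability-one event of Theorem~\ref{thm:characterization ewc as}\ref{stmt:ewc as:ii}, so that $\eta_n^X f\goesto{n\to\infty}Pf$ for every $f\in\Cb(\X;\R)$; in particular $\eta_n^X\cvw P$. Set $M=\norm{\phi}_\infty$ and fix $\epsilon>0$. Since $P$ is tight and $\X$ is locally compact, I can enclose a compact set carrying most of the $P$-mass in a relatively compact open set $U$ with $P(\X\setminus U)<\epsilon$; taking $K=\overline{U}$, the Portmanteau theorem applied to the closed set $\X\setminus U$ gives $\limsup_n\eta_n^X(K^c)\le\limsup_n\eta_n^X(\X\setminus U)\le P(\X\setminus U)<\epsilon$. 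On the compact image $\phi(K)\subset\H$ the projections converge uniformly, $\sup_{x\in K}\norm{\phi(x)-\Pi_N\phi(x)}_\H\goesto{N\to\infty}0$ by Dini's theorem, so I may pick $N$ with this supremum at most $\delta$.

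Then I would split, using the triangle inequality for Bochner integrals, $\norm{\int g\,\d\mu}_\H\le\int\norm{g}_\H\,\d\mu$:
\begin{equation*}
\norm{\eta_n^X\phi-P\phi}_\H\le \eta_n^X\norm{\phi-\Pi_N\phi}_\H + \norm{\eta_n^X\Pi_N\phi-P\Pi_N\phi}_\H + P\norm{\phi-\Pi_N\phi}_\H.
\end{equation*}
The middle term tends to $0$, since $\eta_n^X\Pi_N\phi=\sum_{k=1}^N(\eta_n^X\scalar{\phi,e_k})\,e_k$ is a finite sum of scalar empirical averages, each converging to $P\scalar{\phi,e_k}$. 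For the outer terms, splitting the integrals over $K$ and $K^c$ gives $\eta_n^X\norm{\phi-\Pi_N\phi}_\H\le\delta+2M\,\eta_n^X(K^c)$ and likewise $P\norm{\phi-\Pi_N\phi}_\H\le\delta+2M\,P(K^c)$. Passing to the limit superior yields $\limsup_n\norm{\eta_n^X\phi-P\phi}_\H\le 2\delta+4M\epsilon$, and letting $\delta\to0$ and then $\epsilon\to0$ concludes.

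I expect the genuinely delicate point to be the control of the escaping mass $\eta_n^X(K^c)$: weak convergence only constrains integrals of bounded continuous scalar functions and does not by itself prevent mass of the empirical measures from drifting to infinity, so transferring the tightness of the limit $P$ into a uniform-in-$n$ bound on $\eta_n^X(K^c)$ is the crux. This is exactly where local compactness of $\X$ is indispensable, as it is what lets me enclose the compact set carrying the $P$-mass inside a relatively compact open set before invoking Portmanteau. A secondary subtlety is measurability: if one preferred a direct argument for the in-probability case instead of the subsequence reduction above, the compact set $K$ would have to be chosen measurably in $\omega$, which is precisely the role of the random-compact-set results of Appendix~\ref{apdx:random sets}.
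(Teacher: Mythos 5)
Your proof is correct, but it reaches the conclusion by a genuinely different route from the paper's. The paper works ``globally in $\omega$'': it invokes Theorem~\ref{theorem:prokhorov for random measures} to produce a single \emph{random} compact set $K_\epsilon$ satisfying $P(K_\epsilon)\geq 1-\epsilon$ and $\eta_n^X(K_\epsilon)\geq 1-\epsilon$ for all $n$ simultaneously, then uses the bounded approximation property of $\H$ to build a \emph{measurable} finite-rank operator $S$ and a measurable orthonormal basis of $S\H$, and finally assembles the quantitative bound $\sup_{n\geq n_\epsilon}\norm{\eta_n^X\phi-P\phi}_\H\leq(2b+5)\epsilon$ on an event of probability at least $1-\epsilon$. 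The entire measurable-selection machinery of Appendix~\ref{apdx:random sets} exists to make these intermediate objects measurable. You instead observe that the only object whose measurability matters is $\norm{\eta_n^X\phi-P\phi}_\H$ itself --- which you verify up front via the projections $\Pi_N$ --- so that it suffices to prove the \emph{pointwise} inclusion of the full-measure event of Theorem~\ref{thm:characterization ewc as}\,\ref{stmt:ewc as:ii} in the event $\{\limsup_n\norm{\eta_n^X\phi-P\phi}_\H=0\}$. For a fixed $\omega$ in that event the choices of $K$, $U$, and $N$ may be made non-measurably: Ulam tightness of $P(\omega)$, a relatively compact open neighborhood (this is where local compactness enters for you), and the Portmanteau theorem give $\limsup_n\eta_n^X(K^c)\leq\epsilon$, and Dini's theorem on $\phi(K)$ replaces the bounded approximation property. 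Both arguments prove the same statement; yours dispenses with Appendix~\ref{apdx:random sets} and the measurable finite-rank operator entirely, at the price of losing the (unused) uniform-in-$n$, quantified-in-probability intermediate estimate. One further observation: on your full-measure event you already have $\eta_n^X(\omega)\cvw P(\omega)$, so the classical Prokhorov theorem makes the family $\{\eta_n^X(\omega)\}_{n\in\Nstar}\cup\{P(\omega)\}$ uniformly tight on the Polish space $\X$ with no further hypotheses; substituting this for your Portmanteau/relatively-compact-open-set device would remove the need for local compactness from this particular proof altogether, whereas the paper's construction genuinely uses it (through the ``nice closed balls'' metric in Appendix~\ref{apdx:random sets}).
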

\begin{corollary}\label{clry:uniform averages of Cb(H)}
    Let $\X$ be a locally compact Polish space, $\H$ a separable Hilbert space, and $\F$ a separable subset of $\Cb(\X;\H)$.
    Let $X$ be an $\X$-valued process, and assume that $X$ is \ac{EWC} in probability (resp. \ac{as}) with limit measure $P$.
    Then, for any strictly increasing sequence $(k_n)_{n\in\Nstar}\subset\Nstar$, there exists a subsequence $(k_{m_n})_{n\in\Nstar}$ such that  \begin{equation*}
        \P[\forall f\in\F,\,\limsup_n \norm{\eta_{k_{m_n}}f - Pf}_\H = 0] = 1.
    \end{equation*}
    If, additionally, $X$ is \ac{EWC} \ac{as}, then \begin{equation*}
        \P[\forall f\in\F,\,\limsup_n \norm{\eta_nf - Pf}_\H = 0] = 1.
    \end{equation*}
\end{corollary}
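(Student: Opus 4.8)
The plan is to reduce the uniform statement over $\F$ to a countable pointwise statement via separability, invoke Theorem~\ref{thm:averages of Cb(H)} on a countable dense subset, and then uniformize through an elementary approximation estimate. Let $\{f_j\}_{j\in\Nstar}$ be a countable dense subset of $\F$ for the topology of uniform convergence, and write $\norm{g}_\infty = \sup_{x\in\X}\norm{g(x)}_\H$ for $g\in\Cb(\X;\H)$. The cornerstone is the pointwise-in-$\omega$ inequality, valid for every $f\in\F$, every $j$, every $n$, and every $\omega$:
\[
\norm{\eta_n^X f - Pf}_\H \le \norm{\eta_n^X f_j - Pf_j}_\H + 2\norm{f - f_j}_\infty.
\]
It follows from the triangle inequality together with the Bochner estimate $\norm{\mu g}_\H \le \mu\norm{g}_\H \le \norm{g}_\infty$, valid for any probability measure $\mu$ (here $\mu = \eta_n^X$ and $\mu = P(\omega)$) and any $g\in\Cb(\X;\H)$. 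Choosing $j$ with $\norm{f-f_j}_\infty$ arbitrarily small shows that, on the event $\{\forall j,\ \limsup_n\norm{\eta_n^X f_j - Pf_j}_\H = 0\}$, one actually has $\limsup_n\norm{\eta_n^X f - Pf}_\H = 0$ simultaneously for every $f\in\F$. In particular, the event appearing in the corollary coincides with this countable intersection, which settles its measurability as well.

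For the \ac{as} case this essentially finishes the argument: by Theorem~\ref{thm:averages of Cb(H)} each event $\{\limsup_n\norm{\eta_n^X f_j - Pf_j}_\H = 0\}$ has probability one, a countable intersection of such events has probability one, and the approximation inequality above upgrades it to the uniform statement over $\F$.

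For the in-probability case I would avoid per-$j$ subsequence bookkeeping by aggregating. Given the strictly increasing $(k_n)_{n\in\Nstar}$, set
\[
D_n = \sum_{j\in\Nstar} 2^{-j}\min\!\left(1,\ \norm{\eta_{k_n}^X f_j - Pf_j}_\H\right).
\]
Each summand is bounded by $2^{-j}$, and by Theorem~\ref{thm:averages of Cb(H)} (convergence in probability is inherited along the subsequence $(k_n)$) each $\norm{\eta_{k_n}^X f_j - Pf_j}_\H \to 0$ in probability; truncating the tail of the series then gives $D_n\to 0$ in probability. Since a real sequence converging in probability admits an \ac{as}-convergent subsequence, there is a strictly increasing $(m_n)_{n\in\Nstar}$ with $D_{m_n}\to 0$ \ac{as} On that full-probability event every summand vanishes in the limit, so $\limsup_n\norm{\eta_{k_{m_n}}^X f_j - Pf_j}_\H = 0$ for all $j$, and the approximation inequality of the first paragraph again yields the uniform statement over $\F$.

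The only genuinely delicate points are the passage from the dense set $\{f_j\}$ to all of $\F$ and the requirement that a single subsequence work for the whole countable family at once. The first is handled uniformly by the Bochner bound $\norm{\mu g}_\H \le \norm{g}_\infty$, which crucially does \emph{not} depend on $\mu$ and therefore applies verbatim to the random measure $P(\omega)$; the second is precisely what forces the aggregation into $D_n$ (equivalently, a diagonal extraction) in the in-probability case, since the subsequence must be selected once and for all before passing to the almost-sure limit.
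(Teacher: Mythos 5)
Your proposal is correct and follows the same skeleton as the paper's proof: the paper proves this corollary by repeating, verbatim, the approximation argument from the implications (iii)$\implies$(ii) of Theorems~\ref{thm:characterization ewc as} and~\ref{thm:characterization ewc prob}, with a countable dense family of $\F$ in place of one for $\Cb(\X;\R)$ and $\norm{\cdot}_\H$ in place of absolute values. Your key inequality $\norm{\eta_n^X f - Pf}_\H \le \norm{\eta_n^X f_j - Pf_j}_\H + 2\norm{f-f_j}_\infty$ is exactly the paper's bound \eqref{eq:bound proof characterization ewc}, with the factor $2$ obtained by splitting the signed measure into two probability measures rather than via its total variation --- an equivalent step. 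The one place where you genuinely diverge is the in-probability case: the paper constructs a single subsequence valid for all $f_j$ by an explicit nested diagonal extraction (the double-indexed sequence $m_{n,p}$ in the proof of Theorem~\ref{thm:characterization ewc prob}), whereas you aggregate the countable family into the single real-valued sequence $D_n = \sum_j 2^{-j}\min(1,\norm{\eta_{k_n}^X f_j - Pf_j}_\H)$, show $D_n\to 0$ in probability by truncating the tail, and extract one \ac{as}-convergent subsequence. Both are standard devices and both are correct here; your aggregation avoids the bookkeeping of nested subsequences at the cost of introducing the auxiliary series, and it transparently settles the measurability of the event as a countable intersection, which the paper leaves implicit.
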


\subsection{Examples and connections} \label{sec:connections}
\review{%
We begin with examples of \ac{EWC} processes to build intuition and establish its relevance as a generalization of existing notions before exhibiting some general connections.
If unspecified, $\X$ is a Polish space in this section.
\par
The following example is a compelling and simple instance where \ac{EWC} strictly generalizes mixing or the \ac{LLNE}.
\begin{example}\label{ex:simple EWC}
    Let $Y$ be a Rademacher variable and $X_n = (1 - \frac1n)Y$ for all $n\in\Nstar$.
    The process $X$ is \ac{EWC} \ac{as} with limit measure $\delta_{Y}$.
\end{example}
This example abstracts a dynamical system that takes an irreversible step.
Notice indeed that an equivalent definition is $X_1=Y$ and $X_{n+1} = \frac{n^2}{(n-1)(n+1)}X_n$; the irreversible step is taken at time $n=1$.
It can for instance model the position of a ball on a hill at the origin and subject to gravity: the ball falls on a random side of the hill and goes to the corresponding asymptotic position $\pm 1$.
The following example generalizes the idea so that every step is irreversible.
\begin{example}\label{ex:ewc to cantor set}
    Let $Y=(Y_n)_{n\in\Nstar}$ be \ac{iid} Rademacher random variables. Let $X_1 = \frac12+\frac13Y_1$, and $X_{n+1} = X_{n} + \frac{1}{3^{n+1}}Y_{n+1}$.
    The process $X$ converges \ac{as} to a variable $Z$ taking values in the Cantor set. 
    In particular, $X$ is \ac{EWC} with limit measure $\delta_Z$.
\end{example}
Neither of these simple examples is captured by existing assumptions, and yet they are sufficiently well-behaved --- for instance, every trajectory converges --- so that one can expect meaningful learning.
Consider, for instance, the case where data is of the form $(X_n, f(X_n)+\epsilon_n)$, with $X_n$ defined in Example~\ref{ex:ewc to cantor set}, $f$ a function one wants to learn, and $\epsilon_n$ \ac{iid} noise.
Results that guarantee successful learning with such data (under technical assumptions) are in Section~\ref{sec:consistency SVM}.
Before that, we connect \ac{EWC} to other notions, confirming the generality of the assumption.
}
\paragraph{Independent, ergodic, and mixing processes}
Consistently with our initial motivation, \ac{iid} and mixing processes are \ac{EWC} when $\X$ is compact.
More generally, any ergodic process is.
Indeed, recall that an $\X$-valued process $X$ is ergodic if it satisfies the condition of Birkhoff's pointwise ergodic theorem, that is, if there exists a measure $P\in\Pcal(\X)$ such that for every $f\in\L_1(\X,P;\R),\,\lim_{n\to\infty}\eta_n^Xf \to Pf$, where convergence is \ac{as}~
Importantly, ergodicity is the weakest of all of the above notions: $\alpha$--mixing is the weakest notion of mixing among $\beta$-- and $\phi$--mixing, and is stronger than ergodicity (see \cite{Bra2005} for details).
But then, it is immediately clear from \ref{stmt:ewc as:iii} in Theorem~\ref{thm:characterization ewc as} that such a process is \ac{EWC} \ac{as} with constant limit measure $P$.
The question is more delicate if $\X$ is not compact, as the null set of non-convergence of $\eta_n^Xf$ to $Pf$ in the definition of an ergodic process depends on the function $f$ in general.
Albeit interesting, this question is out of scope.

\paragraph{Measure-preserving dynamical systems}
Birkhoff's pointwise ergodic theorem applied to a general non-ergodic, measure-preserving dynamical systems yields convergence of $\eta_n^Xf$ to a conditional expectation, i.e., an expectation w.r.t. a random measure \cite{AQ2020}.
Hence, such trajectories are \ac{EWC} \ac{as} if $\X$ is compact.

\paragraph{Weak Convergence}
The notions of \ac{EWC} and weakly converging processes differ, despite similar naming.
The latter condition involves weak convergence of the sequence of \emph{marginals} $M_n\in\Pcal(\X)$ of the process $X$ to a non-random probability measure $M\in\Pcal(\X)$.
Intuitively, it thus involves ensemble averages (and thus disregards correlations), whereas \ac{EWC} involves time averages (and thus considers correlations).
Both concepts are independent; there exist \ac{EWC} processes that do not converge weakly, and vice-versa:
\begin{lemma}\label{lemma:non WC but EWC}
    The process $X$ defined on $\X:=\{-1,1\}$ by $X_n = (-1)^n$ \ac{as} is not weakly converging, but it is \ac{EWC} \ac{as} with limit measure $\frac12(\delta_{-1} + \delta_1)$.
\end{lemma}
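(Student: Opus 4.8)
The plan is to verify the two assertions separately, exploiting that $\X=\{-1,1\}$ is finite (hence compact, and carrying the discrete topology) and that the process is deterministic, so that each $\eta_n^X$ is a genuine probability measure and every convergence statement holds \ac{as} trivially.

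For the \ac{EWC} claim I would first compute the empirical measure explicitly. Since $X_i=(-1)^i$, we have $X_i=1$ exactly when $i$ is even and $X_i=-1$ exactly when $i$ is odd; among the indices $1,\dots,n$ there are $\floor{n/2}$ even ones and $n-\floor{n/2}$ odd ones, so that
\begin{equation*}
    \eta_n^X = \frac{n-\floor{n/2}}{n}\,\delta_{-1} + \frac{\floor{n/2}}{n}\,\delta_{1}.
\end{equation*}
Both coefficients converge to $\tfrac12$ as $n\to\infty$. On the finite space $\X$ every real function is in $\Cb(\X;\R)$, and weak convergence is equivalent to convergence of the mass assigned to each of the two points; hence $\eta_n^X\cvw P$ with $P\defeq\tfrac12(\delta_{-1}+\delta_1)$. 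As $\eta_n^X$ is deterministic, this weak convergence holds \ac{as} with the constant random measure $P$ as limit, which is exactly the statement that $X$ is \ac{EWC} \ac{as} with limit measure $P$. Equivalently, since $\X$ is compact one may invoke \ref{stmt:ewc as:iii} of Theorem~\ref{thm:characterization ewc as}: as $\Cb(\X;\R)$ is spanned by the constant function and $f(x)=x$, it suffices to check $\eta_n^Xf\to Pf$ for these two, which the explicit formula yields at once.

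For the failure of weak convergence I would note that the marginal law of $X_n$ is the deterministic measure $M_n=\delta_{(-1)^n}$, equal to $\delta_{1}$ for even $n$ and to $\delta_{-1}$ for odd $n$. Testing against $f(x)=x\in\Cb(\X;\R)$ gives $M_nf=(-1)^n$, which does not converge; therefore $(M_n)_{n\in\Nstar}$ has no weak limit and $X$ is not weakly converging.

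There is no genuine obstacle here, as the lemma is a minimal illustration: the only point requiring care is to keep the two notions cleanly separated — \ac{EWC} concerns the time average encoded in $\eta_n^X$, whereas weak convergence of the process concerns the ensemble average encoded in the marginals $M_n$ — which is precisely the contrast the example is designed to exhibit.
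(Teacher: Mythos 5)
Your proposal is correct and follows essentially the same route as the paper: the paper likewise observes that $M_nf=f((-1)^n)$ fails to converge for non-constant $f$, while $\eta_n^Xf$ is the Cesàro average of that sequence and converges to $\frac12(f(-1)+f(1))$. Your explicit count of even and odd indices is just a more detailed rendering of the same computation.
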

\begin{proof}
    For $f\in\Cb(\X;\R)$ and $n\in\Nstar$, $M_nf = f((-1)^n)$, which does not not converge unless $f$ is constant.
    In contrast, $\eta_n^Xf$ is the Cesàro average of that sequence, which converges to $\frac12(f(-1) + f(1)) = \left[\frac12(\delta_{-1} + \delta_{1})\right]f$.
\end{proof}
\begin{lemma}\label{lemma:WC but non EWC}
    The process $X$ defined on $\X=\{0,1\}$ by $X_1\sim\B(0.5)$ and \begin{equation*}
        X_n = \begin{cases}
            X_1,&\text{if}~\floor{\log_{10}(n)}\equiv 0 \pmod{2},\\
            1-X_1,&\text{otherwise},
        \end{cases}
    \end{equation*}
    is weakly converging with limit measure $\frac12(\delta_{0} + \delta_1)$, but it is not \ac{EWC} in probability.
\end{lemma}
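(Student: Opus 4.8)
The plan is to treat the two claims separately, as the weak-convergence part is immediate and the failure of \ac{EWC} is where the work lies. For weak convergence, I would observe that since $X_1\sim\B(0.5)$, the variable $1-X_1$ is also $\B(0.5)$-distributed; hence the marginal law $M_n$ of $X_n$ equals $\tfrac12(\delta_0+\delta_1)$ for \emph{every} $n$, irrespective of the parity of $\floor{\log_{10}(n)}$. Thus $M_nf=\tfrac12(f(0)+f(1))$ is constant in $n$ for all $f\in\Cb(\X;\R)$, and $(M_n)$ converges (trivially) to $\tfrac12(\delta_0+\delta_1)$.

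For the failure of \ac{EWC} in probability, I would first reduce to a scalar statement. Since $\X=\{0,1\}$ is finite, hence compact Polish, and $\ind_{\{1\}}\in\Cb(\X;\R)$, compactness lets me invoke the full equivalence in Theorem~\ref{thm:characterization ewc prob}: if $X$ were \ac{EWC} in probability with limit $P$, then statement~\ref{stmt:ewc prob:iii} with $f=\ind_{\{1\}}$ would give $\eta_n^X(\{1\})\to P(\{1\})=:V$ in probability, for a single $[0,1]$-valued random variable $V$. I then plan to exhibit two deterministic subsequences along which $\eta_n^X(\{1\})$ converges \ac{as} to two genuinely different random variables, contradicting the \ac{as}-uniqueness of a limit in probability.

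The concrete computation is the main obstacle, though it is only geometric-series bookkeeping. Conditioning on $X_1=1$, the count $S_n:=\#\{i\le n:X_i=1\}$ equals the number of $i\le n$ with $\floor{\log_{10}(i)}$ even; partitioning $\{1,\dots,n\}$ into the decades $\{10^k,\dots,10^{k+1}-1\}$ of size $9\cdot10^k$ and summing $9\cdot10^k$ over even $k$ gives closed forms at the decade boundaries. Evaluating at $n=10^{2j+2}-1$ (just after an all-zeros decade) yields $\eta_n^X(\{1\})=\tfrac1{11}$ exactly, while at $n=10^{2j+1}-1$ (just after an all-ones decade) one finds $\eta_n^X(\{1\})\to\tfrac{10}{11}$ as $j\to\infty$. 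By symmetry, conditioning on $X_1=0$ swaps the two values. Hence along $n=10^{2j+1}-1$ the sequence converges \ac{as} to $V_1=\tfrac{10}{11}\ind_{\{X_1=1\}}+\tfrac1{11}\ind_{\{X_1=0\}}$, and along $n=10^{2j+2}-1$ to $V_2=\tfrac1{11}\ind_{\{X_1=1\}}+\tfrac{10}{11}\ind_{\{X_1=0\}}$. Since $\P(X_1=1)=\P(X_1=0)=\tfrac12$, we have $V_1\neq V_2$ \ac{as} Convergence of $\eta_n^X(\{1\})$ to $V$ in probability is inherited by both subsequences, and \ac{as} convergence implies convergence in probability; by uniqueness this would force $V=V_1=V_2$ \ac{as}, which is impossible. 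Therefore $X$ is not \ac{EWC} in probability.
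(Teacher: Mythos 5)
Your proposal is correct and rests on the same core idea as the paper's proof: evaluate $\eta_n^X f$ at the decade boundaries $n=10^p-1$ and exploit the alternation of $X$ between $X_1$ and $1-X_1$ on successive decades, then conclude via statement~\ref{stmt:ewc prob:iii} of Theorem~\ref{thm:characterization ewc prob} and \Pas-uniqueness of limits in probability. The only (cosmetic) difference is that the paper works with a general $f\in\Cb(\X;\R)$ and shows $f(X_1)=f(1-X_1)$ \ac{as} by sending the consecutive differences $u_{n_{p+1}}-u_{n_p}$ to zero, whereas you fix $f=\ind_{\{1\}}$ and exhibit the two explicit subsequential limits $\tfrac1{11}$ and $\tfrac{10}{11}$ --- both computations are the same geometric-series bookkeeping and both suffice.
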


\paragraph{Asymptotic mean stationarity}
As announced, processes that satisfy the \ac{LLNE} are \ac{EWC} with limit measure the stationary mean.
\begin{lemma}\label{lemma:llne implies EWC}
Let $\X$ be a Polish space and $X$ be an $\X$-valued process that satisfies the \ac{WLLNE} with stationary mean $P\in\Pcal(\X)$. Then, $X$ is \ac{EWC} in probability with limit measure constant equal to $P$.
If, additionally, $X$ satisfies the \ac{SLLNE}, then $X$ is \ac{EWC} \ac{as}
\end{lemma}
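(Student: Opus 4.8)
The plan is to deduce the (path-dependent) weak convergence $\eta_n^X\cvw P$ from the convergence of empirical frequencies of \emph{events} granted by the \ac{LLNE}, and to phrase the conclusion through the characterizations already established. Concretely, I will verify statement~\ref{stmt:ewc as:ii} of Theorem~\ref{thm:characterization ewc as} in the strong case and statement~\ref{stmt:ewc prob:ii} of Theorem~\ref{thm:characterization ewc prob} in the weak case; since the limit is here the nonrandom $P\in\Pcal(\X)$, both reduce to checking $\eta_n^X f\to Pf$ for every $f\in\Cb(\X;\R)$ on a single favorable event. Two ingredients are needed: (a) since $\X$ is Polish, $\Pcal(\X)$ is separable metrizable for weak convergence, so there is a \emph{countable} family $(f_k)_{k\in\Nstar}\subset\Cb(\X;\R)$ that is convergence-determining, i.e.\ $\mu_n f_k\to\mu f_k$ for all $k$ forces $\mu_n\cvw\mu$; and (b) a bridge from the set convergence supplied by the \ac{LLNE} (for each $B\in\B(\X)$, $\eta_n^X(B)\to P(B)$ in probability resp.\ \ac{as}) to these continuous test functions.

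For the bridge, fix $k$ and assume without loss of generality $0\le f_k\le M_k$ (adding a constant is harmless as all measures at play are probability measures). I approximate $f_k$ from below by the simple functions
\begin{equation*}
 s_m^{(k)} = \frac{M_k}{m}\sum_{j=1}^{m}\ind_{\{f_k\ge jM_k/m\}},\qquad m\in\Nstar,
\end{equation*}
which satisfy $\norm{f_k-s_m^{(k)}}_\infty\le M_k/m$. Each $s_m^{(k)}$ is a finite combination of indicators of the closed super-level sets $\{f_k\ge jM_k/m\}\in\B(\X)$, and the collection of all such sets (over $k,m,j$) is countable. In the strong case, the \ac{SLLNE} gives, for each fixed such set $B$, $\eta_n^X(B)\to P(B)$ \ac{as}; intersecting the corresponding countably many full-measure events yields a single event $\Omega_0$ with $\P(\Omega_0)=1$ on which $\eta_n^X(B)\to P(B)$ for \emph{all} these sets simultaneously, hence $\eta_n^X s_m^{(k)}\to P s_m^{(k)}$ for all $k,m$. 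On $\Omega_0$, the triangle inequality and the uniform bound give, for every $k$ and $m$,
\begin{equation*}
 \limsup_n\lvert \eta_n^X f_k - Pf_k\rvert \le 2M_k/m,
\end{equation*}
and letting $m\to\infty$ shows $\eta_n^X f_k\to Pf_k$ for all $k$ on $\Omega_0$. By the determining property (a), $\eta_n^X\cvw P$ on $\Omega_0$, which is precisely statement~\ref{stmt:ewc as:ii} of Theorem~\ref{thm:characterization ewc as}; thus $X$ is \ac{EWC} \ac{as} with constant limit $P$.

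In the weak case the same countable pool of super-level sets is used, but the almost-sure intersection is replaced by the subsequence principle. Given any strictly increasing $(k_n)_{n\in\Nstar}$, the \ac{WLLNE} gives $\eta_{k_n}^X(B)\to P(B)$ in probability for each set $B$ in the pool; enumerating the pool, extracting nested subsequences, and diagonalizing produces a further subsequence $(k_{m_n})$ along which $\eta_{k_{m_n}}^X(B)\to P(B)$ \ac{as} for every $B$ simultaneously, on a common full-measure event. Repeating the squeeze of the previous paragraph on that event yields $\eta_{k_{m_n}}^X\cvw P$ \ac{as}, which is exactly statement~\ref{stmt:ewc prob:ii} of Theorem~\ref{thm:characterization ewc prob}; hence $X$ is \ac{EWC} in probability with constant limit $P$.

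The only genuine difficulty is the passage from events to continuous functions while keeping the exceptional null set independent of the test function: naively testing against every $f\in\Cb(\X;\R)$, or every super-level set, would involve uncountably many null sets. This is what forces both the reduction to a countable convergence-determining family and the uniform simple-function approximation built from a single countable pool of super-level sets, so that one countable intersection (strong case) or one diagonal extraction (weak case) suffices. The remaining points — existence of the countable determining family on a Polish space, and measurability of the events in the characterization theorems — are standard and already accounted for by the cited metrizability of weak convergence and by Theorems~\ref{thm:characterization ewc as}--\ref{thm:characterization ewc prob}.
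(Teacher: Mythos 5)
Your proof is correct, but it takes a more self-contained route than the paper's. The paper simply cites Lemma~2.5 of \citet{SHS2009}, which already upgrades the \ac{LLNE} from events to all of $\L_\infty(\X;\R)$, notes $\Cb(\X;\R)\subset\L_\infty(\X;\R)$ to obtain statement~\ref{stmt:ewc prob:iii} (resp.~\ref{stmt:ewc as:iii}) of Theorem~\ref{thm:characterization ewc prob} (resp.~\ref{thm:characterization ewc as}), and then invokes those theorems to conclude; your simple-function approximation by super-level sets essentially re-proves the cited lemma for continuous test functions. The more substantive difference is in the second step: the implications \ref{stmt:ewc as:iii}$\implies$\ref{stmt:ewc as:ii} and \ref{stmt:ewc prob:iii}$\implies$\ref{stmt:ewc prob:ii} are only stated (and proved) in the paper under the additional hypothesis that $\X$ is compact, which the lemma does not assume, and the paper's proof glosses over this with ``since $\X$ is separable''. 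You avoid the issue by replacing a dense subset of $(\Cb(\X;\R),\norm{\cdot}_\infty)$ (which exists only for compact $\X$) with a countable \emph{convergence-determining} family, which exists on any Polish space and suffices here because the limit is the single deterministic probability measure $P$. So your argument is not only valid but arguably closes a small gap in the paper's appeal to its own characterization theorems, at the cost of redoing the event-to-function bridge that the paper outsources to \citet{SHS2009}.
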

This is no longer true if $X$ is only \ac{AMS}\footnote{See Definition 2.2 in \cite{SHS2009} for the definition.}, however.
For instance, the process of Lemma~\ref{lemma:WC but non EWC} is \ac{AMS} but not \ac{EWC} in probability as the empirical averages only converge \emph{on expectation}.
In fact, there is no relationship between \ac{AMS} and \ac{EWC}, as the requirements for \ac{AMS} are stronger on some aspects (\ac{AMS} requires convergence in total variation instead of weak convergence) but weaker on others (\ac{EWC} requires the whole random limit measure to exist).
This leads us to introduce the following two notions, which strengthen \ac{EWC} (resp. weaken \ac{AMS}) to relate to \ac{AMS} (resp. to \ac{EWC}), as summarized on Figure~\ref{fig:summary notions}.
\begin{definition}\label{def:esc}
    Let $\X$ be a Polish space and $\dtv$ a metric on $\Pcal(\X)$ that metrizes convergence in total variation.
    An $\X$-valued process $X$ is \emph{\acf{ESC}} in probability (resp. \ac{as}) if there exists a random measure $P$ on $\X$ such that $\dtv(\eta_n^X,P)$ is measurable, $n\in\Nstar$, and \eqref{eq:ewc} holds with $\dtv$ instead of $d$, where the convergence is in probability (resp. \ac{as}).
    Then, $P$ is unique \Pas~and is called the limit measure of $X$.
\end{definition}
\begin{definition}
    Let $\X$ be a Polish space.
    We say that an $\X$-valued process $X$ is \emph{weakly \ac{AMS}} if there exists a probability measure $P\in\Pcal(\X)$ such that \begin{equation*}
        \forall f\in\Cb(\X;\R), \lim_{n\to\infty}\frac1n\sum_{i=1}^n \E[f(X_i)] = Pf.
    \end{equation*}
    It is called the weak asymptotic mean of $X$ and is unique.
\end{definition}
\begin{remark}
    Assuming that $\dtv(\eta_n^X,P)$ is measurable in Definition~\ref{def:esc} certainly lacks elegance, but it is also solves a nontrivial technical difficulty.
    Indeed, the topology of convergence in total variation on $\Pcal(\X)$ is strictly finer than that of weak convergence in general, and the definition of random measures only gives measurability w.r.t. the Borel $\sigma$-algebra generated by the latter.
    Further exploration of this assumption is left for future work; we refer the interested reader to Chapter 4 in \cite{Kal2017}.
\end{remark}
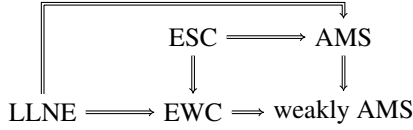
\begin{figure}
    \centering
    \begin{tikzpicture}[>=stealth,>=implies]
    \node(EWC) at (-2,0){\ac{EWC}};
    \node(ESC) at (-2,1){\ac{ESC}};
    \node(LLNE) at (-4,0){\ac{LLNE}};

    \node(wAMS) at (0,0){weakly \ac{AMS}};
    \node(AMS) at (0,1){\ac{AMS}};
    
    \draw[->,double] (LLNE) -- (EWC);
    \draw[->,double] (ESC) -- (EWC);
    \draw[->,double] (ESC) -- (AMS);
    \draw[->,double] (EWC) -- (wAMS);
    \draw[->,double] (AMS) -- (wAMS);
    \draw[->,double] (LLNE.north) -- ++(0,1.2)  -| (AMS.north);
\end{tikzpicture}
    \caption{Relation between the the different notions.
    All implications hold with the same limit measure or with the intensity measure thereof, and both in probability and \ac{as} when applicable.
    \review{%
    The implication \ac{LLNE} $\implies$ AMS is shown in \citet{SHS2009}.
    }
    }
    \label{fig:summary notions}
\end{figure}
\begin{proposition}\label{prop:summary notions}
    Let $\X$ be a Polish space, $X$ an $\X$-valued process, and $P$ a random measure on $\X$.
    If $X$ is \ac{ESC} in probability (resp. \ac{as}) with limit measure $P$, then it is both \ac{EWC} in probability (resp. \ac{as}) with the same limit measure and \ac{AMS}.
    If $X$ is \ac{EWC} in probability with limit measure $P$, then it is weakly \ac{AMS}.
    In either case, the corresponding asymptotic mean $\bar P$ satisfies $\E P = \bar P$, where $\E P$ is the usual intensity measure of a random measure.
\end{proposition}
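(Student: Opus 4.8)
The proof splits into three implications together with the identification of the asymptotic means, and all of them rest on one mechanism: passing from (almost sure or probabilistic) convergence of the random averages $\eta_n^X f$ to convergence of their expectations, which is legitimate here only because these averages are uniformly bounded. I begin with the implication \ac{ESC} $\Rightarrow$ \ac{EWC}. The sole input is the deterministic fact that, on $\Pcal(\X)$, convergence in total variation implies weak convergence: if $\dtv(\mu_n,\mu)\to 0$ then $\mu_n g\to\mu g$ for every bounded measurable $g$, hence in particular for every $g\in\Cb(\X;\R)$, so $\mu_n\cvw\mu$ and $d(\mu_n,\mu)\to 0$ for any $d$ metrizing weak convergence. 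In the almost sure case I apply this pointwise on the event $\{\dtv(\eta_n^X,P)\to 0\}$ to obtain $d(\eta_n^X,P)\to 0$ \ac{as} with the same limit measure $P$. In the in-probability case I invoke the subsequence characterization of convergence in probability: given any subsequence, \ac{ESC} yields a further subsequence along which $\dtv(\eta_n^X,P)\to 0$ \ac{as}, hence $d(\eta_n^X,P)\to 0$ \ac{as} along it; since every subsequence admits such a further subsequence, $d(\eta_n^X,P)\to 0$ in probability. Measurability of $d(\eta_n^X,P)$ follows as in the remark after Definition~\ref{def:ewc}.

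Next I treat \ac{EWC} in probability $\Rightarrow$ weakly \ac{AMS} with $\bar P=\E P$. Fix $f\in\Cb(\X;\R)$. By Theorem~\ref{thm:characterization ewc prob} (\ref{stmt:ewc prob:i} $\Rightarrow$ \ref{stmt:ewc prob:iii}), $\eta_n^X f\to Pf$ in probability. Since $\lvert\eta_n^X f\rvert\le\norm{f}_\infty$ uniformly in $n$, the sequence is uniformly integrable, so this convergence upgrades to convergence in mean and the expectations converge: $\E[\eta_n^X f]\to\E[Pf]$. By linearity $\E[\eta_n^X f]=\frac1n\sum_{i=1}^n\E[f(X_i)]$, and $\E[Pf]=(\E P)f$ by definition of the intensity measure (equivalently, by a monotone-class argument, $(\E P)g=\E[Pg]$ for all bounded measurable $g$). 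As $\E P$ is a probability measure ($(\E P)(\X)=\E[P(\X)]=1$, with countable additivity following from monotone convergence), this is exactly the statement that $X$ is weakly \ac{AMS} with weak asymptotic mean $\E P$; uniqueness of the latter gives $\bar P=\E P$.

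Finally, \ac{ESC} $\Rightarrow$ \ac{AMS} with $\bar P=\E P$ runs the same argument at the level of sets rather than continuous test functions. For every $A\in\B(\X)$ one has $\lvert\eta_n^X(A)-P(A)\rvert\le\dtv(\eta_n^X,P)$, so \ac{ESC} gives $\eta_n^X(A)\to P(A)$ in probability; since $\eta_n^X(A),P(A)\in[0,1]$, taking expectations yields $\frac1n\sum_{i=1}^n\P[X_i\in A]\to(\E P)(A)$ for every Borel set $A$. This setwise (total-variation) convergence of the averaged marginals is precisely the content of asymptotic mean stationarity, and it identifies the stationary mean with $\E P$ (cf. Definition~2.2 in \cite{SHS2009}). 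Note that, since \ac{AMS} is a statement about the deterministic quantities $\frac1n\sum_i\P[X_i\in A]$, the in-probability and almost sure forms of \ac{ESC} yield the same conclusion.

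I expect two points to require care. The first is the interchange of limit and expectation, which would be invalid for arbitrary random measures but is harmless here because of the uniform bound $\norm{f}_\infty$ (respectively the bound $1$ for indicators) that makes the averages uniformly integrable; this is the genuine technical content and it is what forces me to route the argument through Theorem~\ref{thm:characterization ewc prob} rather than through the metric $d$ directly. The second, and the one I regard as the main obstacle, is aligning the conclusion of the last step with the precise definition of \ac{AMS}: the measure-theoretic work only delivers convergence of the averaged one-coordinate marginals in total variation, so one must verify that this is what Definition~2.2 of \cite{SHS2009} demands of the stationary mean. By contrast, the \ac{ESC} $\Rightarrow$ \ac{EWC} step is essentially a topological comparison of the two convergence modes and should be routine once the in-probability case is dispatched by the subsequence principle.
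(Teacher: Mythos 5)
Your proposal is correct and follows essentially the same route as the paper's proof: the ESC\,$\Rightarrow$\,EWC step is the same topology comparison (with the subsequence principle for the in-probability case), and the passage to the (weak) asymptotic mean is the identical uniform-integrability argument upgrading convergence in probability of the bounded averages $\eta_n^X f$ (resp.\ $\eta_n^X(A)$) to convergence of their expectations, identifying $\bar P=\E P$. The paper merely leaves the ESC\,$\Rightarrow$\,AMS case to the reader via an $\L_\infty$ analogue of the characterization theorem, which is exactly what your indicator-function argument supplies.
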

In other words, \ac{ESC} and \ac{EWC} require the whole limit measures to exist whereas \ac{AMS} and weak \ac{AMS} only require convergence on expectation.
It is still unclear whether the implication \ac{LLNE} $\implies$ \ac{ESC} holds, as it would require a result similar to Theorems~\ref{thm:characterization ewc as} and~\ref{thm:characterization ewc prob} for \ac{ESC} processes, for which the implication (iii) $\implies$ (ii) is more challenging because of the non-separability of $\L_\infty(\X;\R)$.

\subsection{Joint processes} \label{sec:stability}
We now focus on characterizing \ac{EWC} for joint processes.
Indeed, it is a core assumption for learning, as the main assumption of Theorem~\ref{thm:consistency svm under ewc} is \ac{EWC} of the input-output process.
We begin with continuous transformations of a process, as they include among others marginalization.
\begin{theorem}\label{thm:continuous transformation}
Let $\X$ and $\Y$ be Polish spaces, $X$ be an $\X$-valued process, and $g:\X\to\Y$ be continuous.
Assume that $X$ is \ac{EWC} in probability (resp. \ac{as}) with limit measure $P$.
Then, the $\Y$-valued process $g(X)$ is \ac{EWC} in probability (resp. \ac{as}) with limit measure $Q = P\circ g^{-1}$. 
\end{theorem}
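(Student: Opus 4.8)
The plan is to recognize the empirical measure of the transformed process as the pushforward of $\eta_n^X$ under $g$, and then to invoke continuity of the pushforward map together with stability of almost-sure and in-probability convergence under continuous maps. First I would record the elementary identity $\eta_n^{g(X)} = \eta_n^X\circ g^{-1}$, valid for every $n\in\Nstar$ and every realization, since for any Borel $B\subset\Y$ one has $\eta_n^{g(X)}(B) = n^{-1}\sum_{i=1}^n\ind[g(X_i)\in B] = n^{-1}\sum_{i=1}^n\ind[X_i\in g^{-1}(B)] = \eta_n^X(g^{-1}(B))$. Introducing the pushforward map $T:\Pcal(\X)\to\Pcal(\Y),\ \mu\mapsto\mu\circ g^{-1}$, this reads
\[
    \eta_n^{g(X)} = T(\eta_n^X),
\]
so that the candidate limit measure is exactly $Q = T(P)$.

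Second, I would show that $T$ is continuous for the weak topologies. By the change-of-variables formula, $(T\mu)f = \mu(f\circ g)$ for all $f\in\Cb(\Y;\R)$ and $\mu\in\Pcal(\X)$, and $f\circ g\in\Cb(\X;\R)$ because $g$ is continuous while $f$ is continuous and bounded. Hence $\mu_n\cvw\mu$ implies $(T\mu_n)f = \mu_n(f\circ g)\to\mu(f\circ g) = (T\mu)f$ for every such $f$, i.e. $T\mu_n\cvw T\mu$; since $\Pcal(\X)$ is metrizable, this sequential continuity yields continuity of $T$. As an immediate consequence, $Q = T\circ P$ is measurable, being a composition of the measurable $P$ with the continuous (hence Borel-measurable) $T$, so $Q$ is a bona fide random measure.

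Third, I would transfer the convergence. Fix metrics $d_\X,d_\Y$ metrizing weak convergence on $\Pcal(\X)$ and $\Pcal(\Y)$; by Proposition~\ref{prop:uniqueness limit} the notion of \ac{EWC} is independent of this choice. In the \ac{as} case, on the probability-one event $\{d_\X(\eta_n^X,P)\to 0\}$ one has $\eta_n^X\cvw P$ pointwise in $\omega$, so continuity of $T$ gives $T(\eta_n^X)\cvw T(P)$, that is $d_\Y(\eta_n^{g(X)},Q)\to 0$ on the same event, proving \ac{EWC} \ac{as} with limit measure $Q$. In the in-probability case, I would pass through characterization~\ref{stmt:ewc prob:ii} of Theorem~\ref{thm:characterization ewc prob}: every strictly increasing sequence admits a subsequence along which $\eta_n^X\cvw P$ holds \ac{as}, and along that subsequence the \ac{as} argument yields $T(\eta_n^X)\cvw T(P)$ \ac{as}, which is precisely~\ref{stmt:ewc prob:ii} for $g(X)$ and $Q$.

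The argument is in essence a continuous-mapping theorem, so no single step is deep; the only point requiring care is the in-probability transfer, which I expect to be the main obstacle. A direct $\varepsilon$--$\delta$ estimate on $d_\Y$ is awkward because $T$ need not be uniformly continuous, so I would deliberately route the in-probability case through the subsequence characterization of Theorem~\ref{thm:characterization ewc prob} rather than attempt a quantitative bound, reducing it cleanly to the already-established \ac{as} statement.
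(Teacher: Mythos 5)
Your proof is correct and follows essentially the same route as the paper, whose one-line argument is to verify characterization \ref{stmt:ewc as:ii} of Theorem~\ref{thm:characterization ewc as} (resp.\ Theorem~\ref{thm:characterization ewc prob}) for $g(X)$ and $Q$ --- the common key fact being that $f\circ g\in\Cb(\X;\R)$ for every $f\in\Cb(\Y;\R)$, which you package as weak continuity of the pushforward $T$. Your write-up additionally makes explicit two points the paper leaves implicit, namely that $Q=T\circ P$ is measurable (hence a genuine random measure) and the subsequence argument handling the in-probability case.
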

\begin{proof}
    This follows immediately by showing that \ref{stmt:ewc as:ii} in Theorem~\ref{thm:characterization ewc as} (resp. Theorem~\ref{thm:characterization ewc prob}) holds for $Y$ and $Q$.
\end{proof}
Next, we formalize the mapping of a process through a Markov kernel.
\begin{definition} \label{def:transition pairs}
    Let $\X$ and $\Y$ be Polish spaces and $p$ be a Markov kernel from $\X$ to $\Y$.
    A \emph{data set of transition pairs of $p$} is an $\X\times\Y$-valued process $Z = (X, Y)$ such that the following holds \Pas:\begin{equation}
        \forall n\in\Nstar,\,\P\left[Y_n\in \cdot \mid X_{1:n}, Y_{1:n-1}\right] = p(\cdot, X_n).
    \end{equation}
\end{definition}
An example of a data set of transition pairs is the process $((X_n,X_{n+1}))_{n\in\Nstar}$, where $X$ is a Markov chain.
\begin{theorem} \label{thm:ewc transition pairs}
    Let $\X$ and $\Y$ be compact Polish spaces and $Z=(X,Y)$ be a data set of transition pairs of a Feller-continuous Markov kernel $p$ from $\X$ to $\Y$.
    Then $Z$ is \ac{EWC} in probability (resp \ac{as}) if, and only if, $X$ is.
    In this case, with $P$ the limit measure of $X$ and $J$ that of $Z$, we have \Pas~for all $A\in\B(\X)$ and $B\in\B(\Y)$\begin{equation}
        \label{eq:joint limit measure}
        J(A\times B) = \int_A p(B, x)\d P(x).
    \end{equation}
\end{theorem}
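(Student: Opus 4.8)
The plan is to prove the two implications separately and extract the formula \eqref{eq:joint limit measure} along the way. The easy direction, $Z$ \ac{EWC} $\implies$ $X$ \ac{EWC}, is immediate from Theorem~\ref{thm:continuous transformation}: writing $\pi:\X\times\Y\to\X$ for the (continuous) projection, we have $X=\pi(Z)$, so $X$ is \ac{EWC} with limit measure the $\X$-marginal of the limit $J$ of $Z$. For the converse, the central object is the kernel acting on test functions: for $f\in\Cb(\X\times\Y;\R)$ I set $(pf)(x)\defeq\int_\Y f(x,y)\,p(\d y,x)$. First I would show $pf\in\Cb(\X;\R)$. Feller-continuity only gives this when $f$ depends on $y$ alone, so an extension is needed. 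Since $\X\times\Y$ is compact, Stone--Weierstrass makes finite sums $\sum_k g_k\otimes h_k$ with $g_k\in\Cb(\X;\R)$ and $h_k\in\Cb(\Y;\R)$ uniformly dense in $\Cb(\X\times\Y;\R)$, and for such a sum $p\bigl(\sum_k g_k\otimes h_k\bigr)=\sum_k g_k\cdot(p h_k)$ is continuous by Feller-continuity. As each $p(\cdot,x)$ is a probability measure, $\norm{pf-pf'}_\infty\le\norm{f-f'}_\infty$, so $pf$ is a uniform limit of continuous functions and lies in $\Cb(\X;\R)$. This step is where compactness of both spaces is essential.

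Next I would fix $f$ and decompose, for each $n\in\Nstar$,
\[
    \eta_n^Z f = \frac1n\sum_{i=1}^n\bigl(f(X_i,Y_i)-(pf)(X_i)\bigr) + \eta_n^X(pf).
\]
The second term converges to $P(pf)$ (\ac{as}, resp.\ in probability) by \ac{EWC} of $X$ and statement (iii) of Theorem~\ref{thm:characterization ewc as} (resp.\ Theorem~\ref{thm:characterization ewc prob}), since $pf\in\Cb(\X;\R)$. For the first term, set $M_i\defeq f(X_i,Y_i)-(pf)(X_i)$; the transition-pair property of Definition~\ref{def:transition pairs} gives $\E[f(X_i,Y_i)\mid X_{1:i},Y_{1:i-1}]=(pf)(X_i)$, so $(M_i)$ is a martingale-difference sequence for $\F_i=\sigma(X_{1:i+1},Y_{1:i})$ with $\lvert M_i\rvert\le 2\norm{f}_\infty$. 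In probability this is easy: orthogonality of the increments yields $\E[(n^{-1}\sum_i M_i)^2]=n^{-2}\sum_i\E[M_i^2]\le 4\norm{f}_\infty^2/n\to 0$. For the \ac{as} case, $\sum_i M_i/i$ is an $L_2$-bounded martingale since $\sum_i\E[M_i^2]/i^2<\infty$, hence converges \ac{as} by the martingale convergence theorem, and Kronecker's lemma yields $n^{-1}\sum_i M_i\to 0$ \ac{as}

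Combining, $\eta_n^Z f\to P(pf)$ (\ac{as}, resp.\ in probability) for every fixed $f\in\Cb(\X\times\Y;\R)$. Because $\X\times\Y$ is compact, $\Cb(\X\times\Y;\R)$ is separable, so I would apply the implication (iii)$\implies$(i) of Theorem~\ref{thm:characterization ewc as} (resp.\ Theorem~\ref{thm:characterization ewc prob}) to conclude that $Z$ is \ac{EWC} with a limit measure $J$ obeying $Jf=P(pf)$ for all continuous $f$. Finally, the right-hand side of \eqref{eq:joint limit measure} defines the random measure $\tilde J=P\otimes p$, for which Fubini gives $\tilde J f=P(pf)$ for all bounded measurable $f$; since two finite Borel measures on a Polish space that agree on $\Cb(\X\times\Y;\R)$ coincide, we get $J=\tilde J$ \Pas, which is exactly \eqref{eq:joint limit measure}.

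The main obstacle I anticipate is coupling the martingale estimate with the ``all $f$ at once'' nature of \ac{EWC}: the fixed-$f$ limit is routine, but \ac{EWC} requires a single almost-sure event valid simultaneously for every continuous $f$, and it is separability of $\Cb(\X\times\Y;\R)$ under compactness, fed into the characterization theorems, that upgrades pointwise-in-$f$ convergence to genuine \ac{EWC}. The extension of Feller-continuity from functions of $y$ to joint functions is the second place where compactness does real work.
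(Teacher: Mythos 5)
Your proof is correct and follows essentially the same route as the paper's: the projection argument for the easy direction, and for the converse the decomposition of $\eta_n^Z f$ into a bounded martingale-difference average plus $\eta_n^X(pf)$ with $(pf)(x)=\int_\Y f(x,y)\,p(\d y,x)$, concluded via the characterization theorems under compactness of $\X\times\Y$. The only differences are minor: the paper invokes a strong law for bounded martingale differences (Cs\"org\H{o}) where you use the $L^2$-bounded-martingale-plus-Kronecker argument, and you explicitly justify (via Stone--Weierstrass) that $pf\in\Cb(\X;\R)$ for joint test functions $f$, a step the paper asserts directly from Feller continuity.
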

\section{Consistency of kernel methods}\label{sec:consistency}
We now state our results on the consistency of kernel methods.
We focus first on \acp{KME} and show \emph{statistical consistency} of the standard estimator, i.e., that it recovers the \ac{KME} of the random limit measure in the infinite-sample limit.
We then attend to \acp{SVM} and show \emph{$L$-consistency}, which focuses on the achieved risk instead of on the estimator itself and is the standard notion for a learning method (see \cite{SC2008}).
\subsection{Statistical consistency of kernel mean embeddings}\label{sec:consistency kme}
\begin{theorem}\label{thm:consistency kme}
    Let $\X$ be a Polish space, $X$ an $\X$-valued process, and $\H$ a scalar-valued \ac{RKHS} of measurable functions on $\X$ with bounded kernel $k$.
    Finally, let $P$ be a random measure on $\X$.
    If $X$ is \ac{EWC} with limit measure $P$, then \begin{equation}\label{eq:convergence kme}
        \lim_{n\to\infty} \frac{1}{n}\sum_{i=1}^n k(\cdot, X_i) = \int_\X k(\cdot, x)\d P(x) =: \mu_P,
    \end{equation}
    where the convergence is in probability (resp. \ac{as}).
    Conversely, if the \ac{MMD} \begin{equation*}
        \mmd:(Q,R)\in\Pcal(\X)^2\mapsto \norm{\mu_Q-\mu_R}_\H
    \end{equation*}
    metrizes weak convergence in $\Pcal(\X)$, then \eqref{eq:convergence kme} implies that $X$ is \ac{EWC} in probability (resp. \ac{as}).
\end{theorem}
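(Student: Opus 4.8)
The plan is to recognize the empirical \ac{KME} as a Hilbert-space valued empirical average and to reduce its convergence to the scalar characterizations of \ac{EWC}. Write $\phi:x\in\X\mapsto k(\cdot,x)\in\H$ for the canonical feature map. Then $\frac1n\sum_{i=1}^n k(\cdot,X_i)=\eta_n^X\phi$ (as a Bochner integral) and $\mu_P=P\phi$, and the reproducing property gives $\norm{\phi(x)}_\H=\sqrt{k(x,x)}\leq\norm{k}_\infty$, so $\phi$ is bounded and both objects are well defined. The claim \eqref{eq:convergence kme} is thus exactly $\eta_n^X\phi\to P\phi$ in $\H$. Since $\phi$ is a bounded continuous $\H$-valued map, the most direct route is to invoke Theorem~\ref{thm:averages of Cb(H)}; to respect the stated generality (only $\X$ Polish) I would instead carry out the equivalent reduction below, which avoids local compactness.

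For the forward implication I would expand the squared error via the reproducing property into
\begin{equation*}
\norm{\eta_n^X\phi-P\phi}_\H^2=(\eta_n^X\otimes\eta_n^X)k-2(\eta_n^X\otimes P)k+(P\otimes P)k,
\end{equation*}
i.e.\ the integral of the bounded symmetric kernel $k$ against the signed product measure $(\eta_n^X-P)^{\otimes2}$. Treating first the \ac{as} case, I would work on the almost sure event, furnished by Theorem~\ref{thm:characterization ewc as}, on which $\eta_n^X\cvw P$ holds simultaneously for all test functions. Fixing such an $\omega$, weak convergence is preserved under products on Polish spaces, so $\eta_n^X\otimes\eta_n^X\cvw P\otimes P$ and the first term converges to $(P\otimes P)k$; for the cross term I would note that $g:=\int k(\cdot,y)\,\d P(y)$ is continuous and bounded (by dominated convergence, using boundedness and continuity of $k$), whence $\eta_n^X g\to Pg=(P\otimes P)k$. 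The three terms then cancel and the squared error vanishes almost surely. The in-probability case follows by the subsequence characterization of Theorem~\ref{thm:characterization ewc prob}: along a suitable further subsequence the \ac{as} argument applies, and the usual subsequence principle upgrades this to convergence in probability.

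The converse is essentially a restatement. Assuming that $\mmd$ metrizes weak convergence, observe that $\mmd(\eta_n^X,P)=\norm{\mu_{\eta_n^X}-\mu_P}_\H$ is precisely the quantity in \eqref{eq:convergence kme}; hence \eqref{eq:convergence kme} says that $\mmd(\eta_n^X,P)\to0$ in probability (resp.\ \ac{as}). Since $\mmd$ metrizes weak convergence, this is the convergence $\eta_n^X\cvw P$ defining \ac{EWC}, and Proposition~\ref{prop:uniqueness limit} guarantees that the choice $d=\mmd$ is legitimate, as the notion is independent of the metrizing metric.

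I expect the main obstacle to be the interplay between the randomness of the limit measure $P$ and the passage from scalar weak convergence to the Hilbert-valued (product) convergence: the exceptional null sets must be controlled uniformly over all test functions, which is exactly why I would rely on the ``for all $f$ simultaneously'' form of Theorems~\ref{thm:characterization ewc as} and~\ref{thm:characterization ewc prob} rather than on pointwise convergence for fixed $f$. Relatedly, the cross term forces the limit's own embedding $g=\mu_P$ to serve as an admissible (continuous and bounded) test function, and the role of the boundedness and continuity of $k$ is precisely to make $\phi$ a legitimate $\Cb(\X;\H)$ integrand so that these reductions go through.
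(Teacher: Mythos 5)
Your reduction of \eqref{eq:convergence kme} to $\eta_n^X\phi\to P\phi$ and your converse direction are fine; the converse in particular matches the paper's, which likewise observes that $\mmd(\eta_n^X,P)$ is exactly the displayed quantity and then invokes Proposition~\ref{prop:uniqueness limit}. The forward direction, however, has a genuine gap. Your expansion $\norm{\eta_n^X\phi-P\phi}_\H^2=(\eta_n^X\otimes\eta_n^X)k-2(\eta_n^X\otimes P)k+(P\otimes P)k$ only yields convergence of the first term if $k$ is an admissible test function for weak convergence on $\X\times\X$, i.e.\ if $k\in\Cb(\X\times\X;\R)$, and your treatment of the cross term likewise needs $g=\int k(\cdot,y)\,\d P(y)$ to be continuous. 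But Theorem~\ref{thm:consistency kme} assumes only that $k$ is \emph{bounded} and that $\H$ consists of \emph{measurable} functions; continuity of $k$ is not among the hypotheses, and the discussion immediately after the theorem explicitly advertises that the result ``assumes neither local compactness nor a continuous kernel.'' You concede the point yourself in your closing paragraph when you say that the role of ``boundedness and continuity of $k$'' is to make $\phi$ a legitimate $\Cb(\X;\H)$ integrand --- that continuity is precisely what you are not given. For a merely bounded measurable $k$, weak convergence $\eta_n^X\otimes\eta_n^X\cvw P\otimes P$ does not control $(\eta_n^X\otimes\eta_n^X)k$, so your argument proves the theorem only for continuous kernels, a strictly weaker statement.

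The paper closes this gap by a different mechanism: it quotes the lower bound in (ii) of Theorem~21 of \cite{SGF2010}, which gives $\mmd(Q_1,Q_2)\leq C\,\beta(Q_1,Q_2)$ for the Dudley metric $\beta$ under boundedness and measurability alone, so that pathwise weak convergence of $\eta_n^X$ to $P$ --- on the almost-sure event in the \ac{as} case, or along the almost-surely convergent subsequences furnished by Corollary~20.8 of \cite{Bau2011} in the in-probability case --- immediately forces $\mmd(\eta_n^X,P)\to 0$. Your subsequence bookkeeping and your care with the ``for all $f$ simultaneously'' form of the null sets are correct as far as they go; to repair the proof you must either add continuity of $k$ to the hypotheses (weakening the theorem) or replace the product-measure computation by an appeal to a domination result of this kind.
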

While this result seems to follow from Theorem\,\ref{thm:averages of Cb(H)}, we emphasize that it has weaker assumptions.
In particular, Theorem~\ref{thm:consistency kme} assumes neither local compactness nor a continuous kernel.
This remarkable generalization relies on the fact that the topology generated by the \ac{MMD} is coarser than that of weak convergence under the assumptions of the theorem (by Theorem 21 in \cite{SGF2010}).
The metrization of weak convergence by the \ac{MMD} is well-studied; see \cite{SBSM2023} for a comprehensive overview.
For instance, a sufficient condition is that $\X$ is compact and $k$ is bounded, continuous, and characteristic (Theorem 7 in the reference; recall that any Polish space is Hausdorff).

\subsection{$L$-consistency of SVMs}
\label{sec:consistency SVM}
We generalize consistency of a learning method to \ac{EWC} data.
\begin{definition}\label{def:consistency}
    Let $\X$ and $\Y$ be Polish spaces, $L$ be a loss function, and $Z = (X,Y)$ be an $\X\times\Y$-valued process. 
    Assume that $Z$ is \ac{EWC} with limit measure $J$.
    We say that a learning method $\Lfrak$ is $L$-consistent if, \begin{equation*}
        \lim_{n\to\infty} \Rcal_{L,J}(f_{Z_{1:n}}) = \Rcal_{L,J}^\star,
    \end{equation*}
    where convergence is in probability.
    Further, if the convergence is \ac{as}, we say that $\Lfrak$ is strongly $L$-consistent.
\end{definition}
This definition raises the immediate concern of the measurability of the sets $\{\Rcal_{L,J}(f_{Z_{1:n}}) \leq \Rcal^\star_{L,J} + \epsilon\}$, where $\epsilon>0$.
For \iac{SVM} on a Polish space with outputs in a separable Hilbert space and in a separable \ac{RKHS} of measurable functions, such sets are measurable.
This result was already known in the case of a non-random measure $J$ and scalar outputs (Lemmas 6.3 and 6.23 in \cite{SC2008}); we extend it to our setting in Appendix~\ref{apdx:measurability SVMs}.

The next theorem is our main result.
It guarantees that \acp{SVM} are consistent with \ac{EWC} data, up to technical assumptions on the loss, \ac{RKHS}, and $\Cb(\X\times\Y;\G)$.
\begin{theorem}\label{thm:consistency svm under ewc}
    Let $\X$ be a locally compact Polish space, $\G$ a separable Hilbert space, $\Y$ a complete subset of $\G$, and $\H$ a separable $\G$-valued \ac{RKHS} of continuous functions on $\X$ with bounded kernel $K$ and continuous feature map $\Phi:x\mapsto K(\cdot,x)$.
    Assume that $\Cb(\X\times\Y;\H)$ is separable.
    Let $L$ be a convex, continuously differentiable, locally Lipschitz continuous, and locally bounded loss function.
    Let $Z = (X,Y)$ be an $\X\times\Y$-valued process, and assume that $Z$ is \ac{EWC} in probability (resp. \ac{as}) with limit measure $J$.
    Assume that $\H$ is $(L,J)$-rich, \Pas~
    Then, there exists a sequence of strictly positive real numbers $(\lambda_n)_{n\in\Nstar}$ such that $\lim_{n\to\infty}\lambda_n = 0$ and the $(\lambda_n)$-\ac{SVM} is $L$-consistent (resp. strongly $L$-consistent) for $Z$.
\end{theorem}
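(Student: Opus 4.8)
The plan is to adapt the non-quantitative, law-of-large-numbers-based consistency argument of \citet{SC2009} to the \emph{random} limit measure $J$ and to infinite-dimensional outputs, replacing the classical LLN by the empirical averaging results of Section~\ref{sec:ewc}. I would argue pathwise (for fixed $\omega$) wherever possible and only pass to the probabilistic mode of convergence at the very end. Two preliminaries set the stage. Plugging $f=0$ into \eqref{eq:SVM} and using local boundedness of $L$ gives the a priori bound $\norm{f_{Z_{1:n},\lambda}}_\H\le\sqrt{M/\lambda}$ with $M:=\sup_{x\in\X,y\in\Y}L(x,y,0)<\infty$; the same bound holds for the population regularized minimizer $f_{J,\lambda}:=\arg\min_{f\in\H}\Rcal_{L,J}(f)+\lambda\norm{f}_\H^2$. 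Hence both live in the ball $B_r:=\{f\in\H:\norm{f}_\H\le r\}$ with $r=\sqrt{M/\lambda}$. Second, since $\H$ is $(L,J)$-rich \Pas, the approximation error $A(\lambda):=\big[\inf_{f\in\H}\Rcal_{L,J}(f)+\lambda\norm{f}_\H^2\big]-\Rcal^\star_{L,J}$ is, for each fixed realization of $J$, nonincreasing as $\lambda\downarrow 0$ with infimum $\Rcal_{\H,L,J}-\Rcal^\star_{L,J}=0$; thus $A(\lambda)\to 0$ as $\lambda\to 0$, \Pas

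Next comes the core decomposition. Writing $\hat f=f_{Z_{1:n},\lambda_n}$ and $f^\ast=f_{J,\lambda_n}$, both in $B_{r_n}$ with $r_n=\sqrt{M/\lambda_n}$, the optimality of $\hat f$ for the empirical regularized risk yields $0\le\Rcal_{L,J}(\hat f)-\Rcal^\star_{L,J}\le 2E_n(r_n)+A(\lambda_n)$, where $E_n(r):=\sup_{f\in B_r}\lvert\Rcal_{L,J}(f)-\Rcal_{L,\eta^Z_n}(f)\rvert$ is the uniform estimation error over $B_r$. Everything reduces to controlling $E_n(r)$. For fixed $f$, the integrand $\phi_f:(x,y)\mapsto L(x,y,f(x))$ is bounded (local boundedness of $L$ together with $\norm{f(x)}_\G\le\norm{K}_\infty\norm{f}_\H$) and, thanks to continuous differentiability of $L$ and continuity of $\Phi$, jointly continuous; so $\phi_f\in\Cb(\X\times\Y;\R)$ with $\Rcal_{L,\eta^Z_n}(f)=\eta^Z_n\phi_f$ and $\Rcal_{L,J}(f)=J\phi_f$. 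The map $f\mapsto\phi_f$ is moreover Lipschitz from $B_r$ into $\Cb(\X\times\Y;\R)$ (via local Lipschitz continuity of $L$ and the bounded kernel), so $\F_r:=\{\phi_f:f\in B_r\}$ is separable (a continuous image of the separable ball $B_r$), and the family is uniformly bounded and uniformly equicontinuous.

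I would then prove $E_n(r)\to 0$ (in probability, resp. \ac{as}) for each fixed $r$. From \ac{EWC} of $Z$ one has $\eta^Z_n\cvw J$ — fully \Pas in the \ac{as}\ case, and along a.s.\ subsequences in the in-probability case — by Theorems~\ref{thm:characterization ewc as} and~\ref{thm:characterization ewc prob}, with the simultaneous-over-$\F_r$ refinement furnished by the averaging results of Section~\ref{sec:ewc} (for which separability of $\Cb(\X\times\Y;\H)$ is invoked). On any such path, tightness of $\{\eta^Z_n\}\cup\{J\}$ (Prokhorov) confines mass to compacta up to $\varepsilon$, and on those compacta uniform boundedness and uniform equicontinuity of $\F_r$ upgrade the pointwise weak convergence to convergence uniform over $\F_r$, by an Arzel\`a--Ascoli argument with a metric Urysohn cutoff; this yields $E_n(r)\to 0$, in probability via the subsequence principle in the weaker mode. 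Because the $\lambda_n$ may be annealed arbitrarily slowly, a diagonalization over $r=1,2,\dots$ produces a single \emph{deterministic} sequence $\lambda_n\to 0$ with $r_n=\sqrt{M/\lambda_n}\to\infty$ slowly enough that $E_n(r_n)\to 0$ in the appropriate mode; combined with $A(\lambda_n)\to 0$ this gives the claim. Measurability of the risk functionals and of $\hat f$ is supplied by Appendix~\ref{apdx:measurability SVMs}.

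The main obstacle is the uniform estimation step: upgrading the pointwise weak convergence $\eta^Z_n\cvw J$ to uniformity of $\eta^Z_n\phi_f\to J\phi_f$ over $f\in B_r$ on the non-locally-compact, infinite-dimensional domain $\X\times\Y$, which is why the tightness-plus-equicontinuity route above (rather than a direct application of Corollary~\ref{clry:uniform averages of Cb(H)}, whose domain is locally compact) is needed. This is precisely where continuous differentiability of $L$ enters — to secure joint continuity and equicontinuity of $\F_r$ — and it explains why the weak-convergence assumption forces a stronger loss than the merely continuous loss admissible under total-variation convergence. A secondary delicate point, specific to the \ac{as}\ statement, is extracting one deterministic annealing sequence $(\lambda_n)$ valid for \Pas every path despite the path-dependent rates of $E_n(r)$; this is resolved by exploiting the freedom of arbitrarily slow annealing in the diagonalization.
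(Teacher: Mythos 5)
Your route is genuinely different from the paper's: you bound the excess risk by a \emph{uniform} estimation error $E_n(r)=\sup_{f\in B_r}\lvert \Rcal_{L,J}(f)-\Rcal_{L,\eta_n^Z}(f)\rvert$ over the ball $B_r\subset\H$ and try to control it via tightness plus Arzel\`a--Ascoli, whereas the paper never needs uniformity over a ball: it invokes a stability bound (Theorem~\ref{thm:stability SVM solutions}, via the general representer theorem) that controls $\norm{f_{\eta_n^Z,\lambda}-f_{J,\lambda}}_\H$ by the deviation of the \emph{single} $\H$-valued empirical average $\eta_n^Z(\Phi h_{J,\lambda})$ from $J(\Phi h_{J,\lambda})$, where $h_{J,\lambda}(x,y)=\nabla L(x,y,f_{J,\lambda}(x))$, and then applies Theorem~\ref{thm:averages of Cb(H)}/Corollary~\ref{clry:uniform averages of Cb(H)}. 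Your oracle-type decomposition $\Rcal_{L,J}(\hat f)-\Rcal^\star_{L,J}\le 2E_n(r_n)+A(\lambda_n)$ is correct, and the treatment of $A(\lambda_n)$ is fine (monotonicity even gives the \ac{as} statement for free).

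The gap is in the claimed uniform equicontinuity of $\F_r=\{(x,y)\mapsto L(x,y,f(x)):f\in B_r\}$ on compacta of $\X\times\Y$. Splitting
\begin{equation*}
\lvert L(x,y,f(x))-L(x',y',f(x'))\rvert\le \lvert L(x,y,f(x))-L(x,y,f(x'))\rvert+\lvert L(x,y,f(x'))-L(x',y',f(x'))\rvert,
\end{equation*}
the first term is handled uniformly over $f\in B_r$ by $\lvert L\rvert_{a,1}$ and continuity of $\Phi$. But the second term requires a modulus of continuity for $(x,y)\mapsto L(x,y,t)$ that is \emph{uniform over} $t$ in the set $\{f(x'):f\in B_r,\,x'\in K_\X\}$, which is a bounded but in general \emph{non-relatively-compact} subset of the infinite-dimensional $\G$ (for the CKME kernel $K(x,x')=k(x,x')\Id_\G$, $\{f(x'):\norm{f}_\H\le r\}$ is a full ball of $\G$). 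Joint continuity --- or continuous differentiability --- of $L$ gives no such uniform modulus on a non-compact set, so the Arzel\`a--Ascoli step fails exactly in the infinite-dimensional-output regime that the theorem is designed to cover; your argument would go through for finite-dimensional $\G$, where bounded sets are precompact. A related symptom: you use continuous differentiability of $L$ only to assert "joint continuity and equicontinuity," for which mere continuity would already (not) suffice; the paper actually needs $C^1$ so that $\partial L$ is a singleton and the representer $h_{J,\lambda}$ is the single continuous bounded function $\nabla L(\cdot,\cdot,f_{J,\lambda}(\cdot))$ (Corollary~\ref{clry:continuous representer}), reducing everything to the convergence of one Hilbert-valued empirical average --- which is precisely how the paper sidesteps the uniformity-over-$B_r$ problem your proof runs into.
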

We emphasize again that usual consistency results do \emph{not} assume that the loss function is continuously differentiable.
The necessity for this requirement comes directly from the weak convergence that \ac{EWC} guarantees: this assumption is more general than what is usually assumed, but comes at the price of weaker guarantees.
\review{
Furthermore, Theorem~\ref{thm:consistency svm under ewc} does not indicate at what speed the sequence $(\lambda_n)_{n\in\Nstar}$ should decrease.
Unfortunately, no such speed exists without further assumptions, for two reasons.
First, similarly to works with assumptions stronger than \ac{EWC}, there is no such choice of $(\lambda_n)_{n\in\Nstar}$ uniform in the limit measure $J$; see for instance\,\citet{SHS2009} or Section 6.1 in \citet{SC2008}.
Second, \ac{EWC} allows the data to approximate the limit distribution arbitrarily slowly, adding another layer of necessary assumptions.
In other words, a finite-sample analysis requires handling both the classical question of bounding the approximation error function\,\cite{SC2008} and imposing a minimum speed on the \ac{EWC} convergence.
}

\section{Conclusion and outlook}

We introduce the new notion of empirical weak convergence to address the consistency of kernel methods in the presence of dependent data.
We show consistency for \acp{SVM} for \ac{EWC} processes and allow an infinite-dimensional output, which is crucial to include related methods such as \acp{CKME}.
We discuss in details how \ac{EWC} relates to and generalizes the existing usual assumptions for consistency, establishing it as a suitable basis paving the way for a more general theory of learning with dependent data.
\par
An important open question is on the generalization of Theorem~\ref{thm:consistency svm under ewc} to non-separable sets of continuous bounded functions.
Indeed, that assumption is typically achieved through compactness of the input and output sets, which is relatively restrictive.
Another relevant topic is to extend the theorem to stronger forms of convergence to a random measure, such as \ac{ESC}.
Finally, the topic of learning rates is a promising area for future work.
While it is clear that the convergence involved in \ac{EWC} may be arbitrarily slow, translating bounds on this speed into oracle inequalities for \acp{SVM} is an interesting extension of our results.

\section*{Acknowledgements}
We thank Christian Fiedler and the anonymous reviewers for detailed and helpful comments.

\section*{Impact statement}
This paper presents work whose goal is to advance the field of Machine Learning. 
There are many potential societal consequences of our work, none which we feel must be specifically highlighted here.

\bibliography{references}
\bibliographystyle{icml2024}

\newpage
\appendix
\onecolumn
\section{Proofs}\label{apdx:proofs}
\subsection{Proofs for Section \ref{sec:ewc definition}}
\begin{proof}[Proof of Proposition\,\ref{prop:uniqueness limit}]
    Let $d_1$ and $d_2$ be two metrics that metrize weak convergence on $\Pcal(\X)$.
    By definition, sequences of $\Pcal(\X)$ that converge for $d_1$ also converge for $d_2$, and vice versa.
    It immediately follows that the notion of \ac{EWC} \ac{as} is unaffected by the choice of the metric.
    Furthermore, we can apply Corollary 20.8 in \cite{Bau2011} to the sequence $(d(\eta_n^X,P))_{n\in\Nstar}$ to show that a process $X$ is \ac{EWC} in probability if, and only if, for any strictly increasing sequence $k\subset\Nstar$, there exists a subsequence $(k_{m_n})\subset\Nstar$ such that $\lim_{n\to\infty}\eta_{k_{m_n}}^X = P$, where the convergence is \ac{as}~
    Since \ac{as} convergence of random measures is independent of the chosen metric that metrizes weak convergence, we deduce that so is the notion of a process \ac{EWC} in probability.

    We are left to show uniqueness of the limit measure. 
    We only consider the case where $X$ is \ac{EWC} in probability as the other case follows immediately since any process \ac{EWC} \ac{as} is also \ac{EWC} in probability with the same limit measure.
    The result follows immediately from uniqueness of the limit in probability in a metric space; we repeat the proof for completeness.
    Let $d$ be a metric that metrizes weak convergence on $\Pcal(\X)$ and $P$ and $Q$ be two limit (random) measures of $X$.
    For any $\epsilon>0$ and $n\in\Nstar$, we have \begin{align*}
        \P[d(P,Q)>\epsilon]
            &\leq \P[d(\eta_n^X,P)>\epsilon~\text{or}~d(\eta_n^X,Q)>\epsilon]\\
            &\leq \P[d(\eta_n^X,P)>\epsilon] + \P[d(\eta_n^X,Q)>\epsilon],
    \end{align*}
    by the triangle inequality for $d$ and the union bound.
    Since the \ac{rhs} goes to $0$ as $n\to\infty$, we deduce that $d(P,Q) = 0$ \ac{as}, which is the desired result. 
\end{proof}

\begin{proof}[Proof of Theorem~\ref{thm:characterization ewc as}]
    The equivalence \ref{stmt:ewc as:i} $\iff$ \ref{stmt:ewc as:ii} immediately follows from the definition of weak convergence.
    Indeed, we have \begin{align*}
        \{\eta_n^X\cvw P\} 
            &= \{\forall f\in\Cb(\X;\R), \lim_{n\to\infty}\eta_n^X f = Pf\}\\
            &= \{\forall f\in\Cb(\X;\R), \limsup_{n}\lvert\eta_n^X f - Pf\rvert = 0\}.
    \end{align*}
    Since $d$ is continuous, it follows that both sets are measurable.
    Taking probabilities on both sides shows the equivalence.

    Next, it is also clear that \ref{stmt:ewc as:ii} $\implies$ \ref{stmt:ewc as:iii}, since for any $g\in\Cb(\X;\R)$, \begin{equation*}
        \{\forall f\in\Cb(\X;\R), \lim_{n\to\infty}\eta_n^X f = Pf\} \subset \{\lim_{n\to\infty}\eta_n^X g = Pg\},
    \end{equation*}
    and thus the \ac{rhs} has probability $1$ if \ref{stmt:ewc as:ii} holds.

    Finally, we further assume that $\X$ is compact and show \ref{stmt:ewc as:iii} $\implies$ \ref{stmt:ewc as:ii}.
    Assume that \ref{stmt:ewc as:iii} holds.
    By compactness and separability of $\X$, $(\Cb(\X;\R),\norm{\cdot}_\infty)$ is separable; let $\Phi=(\phi_m)_{m\in\Nstar}$ be a dense family thereof.
    Define for all $n\in\Nstar$ the event \begin{equation*}
        N_m = \left\{\limsup_n \lvert\eta_n^X \phi_m - P\phi_m\rvert>0\right\}.
    \end{equation*}
    It is the set of non-convergence of $\eta_n^X\phi_m$ to $P\phi_m$.
    Let \begin{equation*}
        N = \bigcup_{m=1}^\infty N_m.
    \end{equation*}
    By assumption, $\P[N_m] = 0$ for all $m\in\Nstar$ and thus $\P[N] = 0$.
    Now, let $f\in\Cb(\X;\R)$ be fixed, and take $(\psi_m)_{m\in\Nstar}\subset\Phi$ be such that $\norm{\psi_m - f}_{\infty}\to 0$ as $m\to\infty$.
    Let $\omega\in\Omega\setminus N$; from now on, we consider that all variables are evaluated in $\omega$ but drop the explicit dependency for conciseness.
    Let $\epsilon>0$ be arbitrary, and take $M\in\Nstar$ such that for all $m\geq M$, $\norm{\psi_m-f}_\infty \leq\frac\epsilon2$.
    Then, for all $m\geq M$ and $n\in\Nstar$, \begin{equation}\label{eq:bound proof characterization ewc}\begin{split}
        \left\lvert\eta_n^Xf - Pf\right\rvert 
            &\leq \left\lvert(\eta_n^X-P)(f-\psi_m)\right\rvert + \left\lvert\eta_n^X \psi_m - P\psi_m\right\rvert\\
            &\leq \lvert\eta_n^X - P\rvert\cdot\lvert f-\psi_m\rvert + \left\lvert\eta_n^X \psi_m - P\psi_m\right\rvert\\
            &\leq 2\cdot \frac\epsilon2+\left\lvert\eta_n^X \psi_m - P\psi_m\right\rvert,
    \end{split}\end{equation}
    where the second inequality comes from the triangle inequality for the signed measure $\eta_n^X - P$ of which $\lvert\eta_n^X - P\rvert$ is the absolute variation, and the third uses the fact that $\lvert \eta_n^X - P\rvert\ind_\X \leq \eta_n^X\ind_\X +P\ind_\X = 2$.
    Taking the $\limsup$ over $n$ thus yields \begin{equation*}
        \limsup_n\left\lvert\eta_n^Xf - Pf\right\rvert\leq \epsilon,
    \end{equation*}
    for all $\epsilon > 0$.
    Indeed, the second term vanishes by assumption on $\omega$.
    We deduce that $\limsup_n\left\lvert\eta_n^Xf - Pf\right\rvert = 0$ for all $f\in\Cb(\X;\R)$ and $\omega\in\Omega\setminus N$, which concludes the proof.
\end{proof}

\begin{proof}[Proof of Theorem~\ref{thm:characterization ewc prob}]
    Let $d$ be a metric that metrizes weak convergence on $\Pcal(\X)$.
    The equivalence \ref{stmt:ewc as:i} $\iff$ \ref{stmt:ewc as:ii} immediately follows from Corollary (20.8) in \cite{Bau2011}. 
    Indeed, by the cited corollary, the process $X$ is \ac{EWC} with limit measure $P$ if, and only if, for any strictly increasing sequence $k\subset\Nstar$, there exists another strictly increasing sequence $m\subset\Nstar$ such that $\lim_{n\to\infty}d(\eta_{k_{m_n}}^X,P) = 0$ with probability $1$.
    Then, for any fixed such sequence $k$ and $m$, the definition of weak convergence shows that $\lim_{n\to\infty}d(\eta_{k_{m_n}}^X,P) = 0$ holds with probability $1$ if, and only if, the event $\{\forall f\in\Cb(\X;\R), \lim_{n\to\infty}\eta_{k_{m_n}}f = Pf\}$ has probability $1$.
    The equivalence follows.

    It is also clear that \ref{stmt:ewc prob:ii} implies \ref{stmt:ewc prob:iii} by using the converse implication of Corollary (20.8) in \cite{Bau2011}.
    Indeed, \ref{stmt:ewc prob:ii} immediately implies that for all $f\in\Cb(\X;\R)$ and all strictly increasing sequence $k\subset\Nstar$, there exists another strictly increasing sequence $m\subset\Nstar$ such that $\lim_{n\to\infty}\eta_{k_{m_n}}f = Pf$ \ac{as}~
    The cited result then implies \ref{stmt:ewc prob:iii}.

    Finally, we further assume that $\X$ is compact and show \ref{stmt:ewc as:iii} $\implies$ \ref{stmt:ewc as:ii}.
    Let $k\subset\Nstar$ be a strictly increasing sequence.
    We proceed by constructing a strictly increasing sequence $m\subset\Nstar$ so that the event $\{\forall f\in\Cb(\X;\R)$, $\lim_{n\to\infty}\eta_{k_{m_n}}^Xf= Pf\}$ has probability $1$; i.e., neither the subsequence $m$ nor the null set of non-convergence depend on the chosen function $f$, contrary to the ones given by \ref{stmt:ewc prob:iii}.
    Since $\X$ is compact, $(\Cb(\X;\R),\norm{\cdot}_\infty)$ is separable; let $\Phi=(\phi_p)_{p\in\Nstar}$ be a dense family thereof.
    The construction of the sequence $m$ follows from a diagonal argument.
    First, we construct inductively a double-indexed sequence $(m_{n,p})_{(n,p)\in(\Nstar)^2}$ that satisfies the following properties for all $p\in\Nstar$:\begin{itemize}
        \item the sequence $(m_{n,p})_{n\in\Nstar}$ is strictly increasing;
        \item $\lim_{n\to\infty}\eta_{k_{m_{n,p}}}\phi_p = P\phi_p$ on all of $\Omega\setminus N_p$ for some measurable $N_p\in\A$ with $\P[N_p] = 0$;
        \item $(k_{m_{n,p+1}})_{n\in\Nstar}$ is a subsequence of $(k_{m_{n,p}})_{n\in\Nstar}$.
    \end{itemize}
    Such a construction is indeed possible by applying \ref{stmt:ewc prob:iii} and Corollary 20.8 in \cite{Bau2011} to the sequence $(\eta_{k_{m_{n,p}}}^X\phi_{p+1})_{n\in\Nstar}$ at every induction step; we forego detailing it for conciseness.
    Then, we define inductively the sequence $m$ as follows.
    We first take $m_1 := m_{1,1}$.
    Then, assuming that $m_i$ is constructed for all $i\leq p$ for some $p\in\Nstar$, define \begin{equation*}
        m_{p+1} := \min\{m_{i,p+1}\mid i\in\Nstar\land m_{i,p+1}>m_p\}.
    \end{equation*}
    The set over which we minimize is indeed nonempty since $m_{i,p+1}\to\infty$ as $i\to\infty$.
    We claim that the hereby constructed sequence $m$ has the announced property.
    First, $m$ is clearly strictly increasing, by construction.
    Next, let $N = \bigcup_{p\in\Nstar}N_p$ and take $\omega\in\Omega\setminus N$; from now on, we consider that all variables are evaluated in $\omega$ but drop the explicit dependency for conciseness.
    For any $p\in\Nstar$, the sequence $m$ satisfies $\lim_{n\to\infty}{\eta_{k_{m_n}}}\phi_p = P\phi_p$.
    Indeed, by construction, the sequence $(k_{m_n})_{n\geq p}$ is a subsequence of the sequence $(k_{m_{n,p}})_{n\geq p}$, and $\lim_{n\to\infty}\eta_{k_{m_{n,p}}}^X\phi_p = P\phi_p$.
    Now, let $f\in\Cb(\X;\R)$ be arbitrary, and take $(\psi_q)_{q\in\Nstar}\subset\Phi$ be such that $\norm{\psi_q - f}_{\infty}\to 0$ as $n\to\infty$.
    Let $\epsilon>0$, and take $Q\in\Nstar$ such that $\norm{f-\psi_q}_\infty\leq\frac\epsilon2$ for all $q\geq Q$.
    The exact same calculations as the ones in \eqref{eq:bound proof characterization ewc} with the index $k_{m_n}$ instead of $n$ directly show that for all $n\in\Nstar$ and $q\geq Q$, \begin{equation*}
        \lvert\eta_{k_{m_n}}f - Pf\rvert \leq \epsilon + \lvert\eta_{k_{m_n}}\psi_q - P\psi_q\rvert.
    \end{equation*}
    The second term vanishes when taking the $\limsup$ over $n$ by the property of the sequence $m$.
    This shows that for all $f\in\Cb(\X;\R)$ and $\epsilon>0,\limsup_n \lvert\eta_{k_{m_n}}f - Pf\rvert \leq \epsilon$, and thus the \ac{lhs} is equal to $0$.
    This is true for all $\omega\in\Omega\setminus N$ and $N$ has probability $0$; hence, $\lim_{n\to\infty}\eta_{k_{m_n}}f = Pf$, where the convergence is almost sure.
    Since the sequence $m$ is independent of $f$, this shows the result and concludes the proof.
\end{proof}

\begin{proof}[Proof of Theorem~\ref{thm:averages of Cb(H)}]
    We begin by showing the claim in the case where $X$ is \ac{EWC} \ac{as}, and assume without loss of generality that for all $x\in\X$, $\norm{\phi(x)}_\H \leq 1$.
    \par
    Let $\epsilon > 0$. 
    We apply theorem~\ref{theorem:prokhorov for random measures} to the countable family of random measures $\Q = \{\eta_n^X\mid n\in\Nstar\}\cup\{P\}$, which is sequentially compact \ac{as} by assumption (here, we rely on completeness of $(\Omega,\A)$ to guarantee the measurability of the event).
    Therefore, there exists a random compact set $K_\epsilon$ such that the following holds \ac{as}:
    \begin{equation}
        \label{eq:pointwise properties random elements}
        \begin{split}
        P(K_\epsilon) &\geq 1-\epsilon,\\
        \forall n\in\Nstar,~\eta_n^X(K_\epsilon)&\geq 1-\epsilon.
    \end{split}
    \end{equation}
    Next, since $\H$ is a Hilbert space, it has the $b$-approximation property for some $b>0$ (see the discussion after Definition~4.1.34 in \cite{Meg2012}).
    Applying this property to the compact set $\phi(K_\epsilon)$ shows that for all $\omega\in \Omega$, there exists a bounded operator $S(\omega):\H\to\H$ with finite rank $m(\omega)\in\Nstar$ such that \begin{equation}\label{eq:approximation operator}
        \forall x\in K_{\epsilon}(\omega), \norm{S(\omega)\phi(\omega)(x) - \phi(\omega)(x)}_\H\leq \epsilon,~\text{and}~\norm{S(\omega)}_{\L(\H)}\leq b.
    \end{equation}
    It is clear from the proof of the bounded approximation property (Theorem 4.1.33 in \cite{Meg2012}) that the map $\omega\mapsto S(\omega)$ can be taken to be measurable; it defines a random operator $S:\H\to\H$ with finite rank $m$ that satisfies \eqref{eq:approximation operator} \ac{as}~
    We now take an $\H$-valued process\footnote{There is indeed a choice of the map $\omega\mapsto e(\omega)$ that is measurable from separability of $\H$; we skip the proof for conciseness.} $e$ such that $(e_1,\dots,e_m)$ is \ac{as} \iac{onb} of $S\H$, and let \begin{equation}\label{eq:definition fj's}
        \forall j\in\Nstar,f_j:\omega\in\Omega \mapsto \scalar{e_j(\omega),\,S(\omega)\phi(\omega)(\cdot)}_\H\in\R^\X.
    \end{equation}
    By continuity of $\phi$, of $S$, and of the scalar product, the process $f$ takes values in $\Cb(\X;\R)$.
    \par
    We are now equipped to show the convergence announced.
    We have the following bound: \begin{equation}\label{eq:triangle inequality}
        \norm{\eta_n^X\phi - P\phi}_\H \leq \norm{\eta_n^X\phi - \eta_n^XS\phi}_\H + \norm{\eta_n^X S\phi - PS\phi}_\H + \norm{PS\phi - P\phi}_\H.
    \end{equation}
    We further bound these three terms separately to show \ac{as} convergence of the \ac{lhs} to $0$.
    For this, we leverage the equivalent characterization of \ac{as} convergence of Lemma 20.6 in \cite{Bau2011} and show that for all $\alpha>0$,\begin{equation*}
        \lim_{n\to\infty}\P\left[\sup_{\ell\geq n}\norm{\eta_\ell^X\phi - P\phi}_\H>\alpha\right] = 0.
    \end{equation*}
    To this end, we first apply that same characterization to the variable $\sqrt{m}\sup_{j\in\{1,\dots,m\}}\lvert\eta_n^X f_j - Pf_j\rvert$, which converges to $0$ \ac{as}, yielding the existence of $n_\epsilon>0$ such that the following holds with probability not less than $1-\epsilon$: 
    \begin{equation}
        \label{eq:assumptions wp 1-eps}
        \begin{split}
            \sup_{n\geq n_\epsilon}\sup_{j\in\{1,\dots,m\}}\lvert\eta_n^X f_j - Pf_j\rvert &\leq \epsilon\cdot m^{-1/2}.
        \end{split}
    \end{equation}
    Let now $A\in \A$ be an event with probability $1$ such that \begin{enumerate*}
        \item \eqref{eq:pointwise properties random elements} and \eqref{eq:approximation operator} hold; and 
        \item $(e_1,\dots,e_m)$ is \iac{onb} of $S\H$ for all $\omega\in A$, $n\geq n_\epsilon$, and $\omega\in A$ that satisfies \eqref{eq:assumptions wp 1-eps}.\end{enumerate*}
    We assume that all variables are evaluated in $\omega\in A$, but drop the explicit dependency for conciseness until mentioned otherwise.
    The last term in \eqref{eq:triangle inequality} is bounded as follows: \begin{align*}
        \norm{P S\phi - P \phi}_\H 
            &\leq P\norm{S\phi - \phi}_\H\\
            &= P\norm{S\phi - \phi}_\H\cdot\ind_{K_\epsilon} + P\norm{S\phi - \phi}_\H\cdot\ind_{\X\setminus K_\epsilon}\\
            &\leq \epsilon P\ind_{K_\epsilon} + \sup_{x\in \X\setminus K_\epsilon}\norm{S\phi(x) - \phi(x)}_\H P\ind_{\X\setminus K_\epsilon}\\
            &\leq \epsilon + \sup_{x\in \X\setminus K_\epsilon}\norm{S\phi(x) - \phi(x)}_\H\cdot \epsilon\\
            &\leq \epsilon + (b+1)\cdot \epsilon = (b+2)\epsilon,
    \end{align*}
    where we use the facts that $\norm{S\phi - \phi}_\H\cdot\ind_{K_\epsilon} \leq \epsilon\cdot \ind_{K_\epsilon}$, $P(K_\epsilon)\geq 1-\epsilon$, $\sup_x\norm{\phi(x)}_\H\leq 1$, and $\norm{S}_{\L(\H)}\leq b$.
    The first term in \eqref{eq:triangle inequality} is bounded similarly, as the same computations with $\eta_n^X$ instead of $P$ show \begin{equation*}
        \norm{\eta_n^X\phi - \eta_n^XS\phi}_\H 
            \leq \eta_n^X\norm{S\phi - \phi}_\H\cdot\ind_{K_\epsilon} + \eta_n^X\norm{S\phi - \phi}_\H\cdot\ind_{\X\setminus K_\epsilon}\leq (b+2)\epsilon.
    \end{equation*}
    where we use $\eta_n^X (K_\epsilon) \geq 1-\epsilon$.
    Finally, we bound the second term by leveraging the fact that $S$ has finite rank: \begin{align*}
        \norm{\eta_n^X S\phi - PS\phi}_\H &= \left(\sum_{j=1}^m\scalar{e_j,\,\eta_n^X S\phi - PS\phi}^2\right)^{1/2}\\
        &= \left(\sum_{j=1}^m(\eta_n^Xf_j - Pf_j)^2\right)^{1/2}\\
        &\leq \sqrt{m}\sup_{j\in\{1,\dots,m\}} \lvert\eta_n^Xf_j - Pf_j\rvert\\
        &\leq \epsilon,
    \end{align*}
    where the last inequality comes from the fact that $\omega$ satisfies \eqref{eq:assumptions wp 1-eps}.
    Since $n\geq n_\epsilon$ was arbitrary, we have shown that \begin{equation}\label{eq:bound sup 10epsilon}
        \sup_{n\geq n_\epsilon} \norm{\eta_n^X\phi - P\phi} \leq (b+2)\epsilon + \epsilon + (b+2)\epsilon = (2b+5)\epsilon.
    \end{equation}
    This bound holds for any $\omega\in A$ that satisfies \eqref{eq:assumptions wp 1-eps}; therefore, we deduce that it holds with probability at least $1-\epsilon$.
    Since $\epsilon>0$ was arbitrary, the conclusion follows as announced by leveraging (20.6) in \cite{Bau2011}.
    \par
    Now, we handle the case when $X$ is only \ac{EWC} in probability.
    Corollary 20.8 in \cite{Bau2011} shows that this is equivalent to the fact that any subsequence $(d(\eta_{k_n}^X,P))_{n\in\Nstar}$ of $(d(\eta_n^X,P))_{n\in\Nstar}$ has itself a subsequence that converges to $0$ \ac{as}, say, $(d(\eta_{k_{m_n}}^X,P))_{n\in\Nstar}$.
    Here $k:\Nstar\to\Nstar$ and $m:\Nstar\to\Nstar$ are strictly increasing sequences.
    But then, $(\eta_{k_{m_n}}^X)_{n\in\Nstar}$ converges weakly to $P$ with probability $1$.
    We can thus repeat the above proof \emph{as is} by replacing $(\eta_n^X)_{n\in\Nstar}$ with $(\eta_{k_{m_n}}^X)_{n\in\Nstar}$.
    Indeed, the assumption that $X$ is \ac{EWC} \ac{as} only appears in two places: when applying Theorem~\ref{theorem:prokhorov for random measures}, and to establish the existence of $n_\epsilon$ such that \eqref{eq:assumptions wp 1-eps} hold.
    Both of these results only rely on the fact that $(\eta_n^X)_{n\in\Nstar}$ converges weakly to $P$ \ac{as}, which holds for $(\eta_{k_{m_n}}^X)_{n\in\Nstar}$ as well by construction of $\ell$.
    Overall, this shows that \eqref{eq:bound sup 10epsilon} holds for this specific sub-subsequence $(\eta_{k_{m_n}}^X)_{n\in\Nstar}$, and thus $\lim_{n\to\infty}\eta_{k_{m_n}}^X\phi = P\phi$ \ac{as} by (20.6) in \cite{Bau2011}.
    In other words, every subsequence of $(\eta_n^X\phi)_{n\in\Nstar}$ has itself a subsequence that converges to $P\phi$ \ac{as}, which is equivalent to convergence in probability of $(\eta_n^X\phi)_{n\in\Nstar}$ to $P\phi$ and concludes the proof.
\end{proof}
\begin{proof}[Proof of Corollary~\ref{clry:uniform averages of Cb(H)}]
    The proof is an identical repetition of the proofs of the implications \ref{stmt:ewc as:iii}$\implies$\ref{stmt:ewc as:ii} of Theorems~\ref{thm:characterization ewc as} and~\ref{thm:characterization ewc prob}, up to the replacement of the dense family of $\Cb(\X;\R)$ by a dense family of $\F$ and of absolute values by $\norm{\cdot}_\H$.
\end{proof}

\subsection{Proofs for Section~\ref{sec:connections}}
\begin{proof}[Proof of Lemma~\ref{lemma:WC but non EWC}]
    It is clear that the process $X$ converges weakly.
    We show that it is not \ac{EWC} in probability.
    To this end, we show that \ref{stmt:ewc prob:iii} in Theorem~\ref{thm:characterization ewc prob} does not hold.
    Let $f\in\Cb(\{0,1\})$, and assume that $u_n = \eta^X_n f$ converges in probability to some real-valued random variable $\ell$.
    We show that $f$ is constant.
    Let $n_p = 10^{p}-1$ for all $p\in\N$.
    Since $X$ is constant on index intervals of the form $[n_q+1, n_{q+1}]\cap\N$, grouping the terms in the sum on these intervals yields \begin{align*}
        u_{n_p} &= \frac{1}{n_p}\sum_{q=0}^{p-1}(n_{q+1} - n_q)f(X_{n_q+1})\\
            &= \frac{1}{n_p}\sum_{q=0}^{p-1}(n_{q+1}-n_q)\cdot\big(\ind_{2\N}(q)f(X_1) + \ind_{2\N+1}(q)f(1-X_1)\big),
    \end{align*}
    where $\ind_\S$ is the indicator function of a set $\S$.
    Now, let $v_p = u_{n_{p+1}} - u_{n_p}$.
    For an even value of $p$, we have \begin{align*}
        v_p &= \left(\frac{n_{p}}{n_{p+1}} - 1\right)u_{n_{p}} + \frac{n_{p+1}-n_p}{n_{p+1}}f(X_1)\\
            &= \frac{n_{p+1}-n_p}{n_{p+1}}\left(f(X_1) - u_{n_p}\right).
    \end{align*}
    For an odd value of $p$, similar calculations reveal \begin{equation*}
        v_p = \frac{n_{p+1}-n_p}{n_{p+1}}\left(f(1-X_1) - u_{n_p}\right).
    \end{equation*}
    We now use the facts that $v_p\to0$ in probability as $n\to\infty$ (by convergence in probability of $u$ and the definition of $v_p$) and $\frac{n_{p+1}-n_p}{n_{p+1}}\to\frac{9}{10}$ as $p\to\infty$ to conclude that both of the following hold \Pas, by \Pas-uniqueness of the limit of convergence in probability:\begin{align*}
        \frac{9}{10}\left(f(X_1) - \ell\right) &= 0,\Pas\\
        \frac{9}{10}\left(f(1-X_1) - \ell\right) &= 0,\Pas
    \end{align*}
    Therefore, $f(X_1) = f(1-X_1)$, \Pas, which shows that $f$ is constant.
    We can now conclude the proof: if $f$ is not constant, then the contraposition of the above shows that $\eta^X_nf$ does not converge in probability to any random variable.
    Since, by Theorem~\ref{thm:characterization ewc prob}, this is a necessary condition for \ac{EWC} in probability, $X$ is not \ac{EWC} in probability.
\end{proof}

\begin{proof}[Proof of Lemma~\ref{lemma:llne implies EWC}]
    The result follows immediately from Lemma~2.5 in \cite{SHS2009} together with Theorems~\ref{thm:characterization ewc as} and~\ref{thm:characterization ewc prob}.
    Indeed, assume that $X$ satisfies the \ac{WLLNE}, and let $P\in\Pcal(\X)$ be its asymptotic mean (which exists by Theorem 2.4 in \cite{SHS2009}). Then, for all $f$ in $\L_\infty(\X;\R)$, the following convergence in probability holds:\begin{equation}\label{eq:convergence of expectation for single f}
        Pf = \lim_{n\to\infty} \eta_n^X f,
    \end{equation}
    from Lemma 2.5 in the above reference.
    In particular, $\Cb(\X;\R)\subset\L_\infty(\X;\R)$, showing that \ref{stmt:ewc prob:iii} in Theorem~\ref{thm:characterization ewc prob} holds.
    Since $\X$ is separable, we deduce that $X$ is \ac{EWC} with limit measure $P$.
    Further, if $X$ satisfies the \ac{SLLNE}, then Lemma~2.5 in the same reference states that \eqref{eq:convergence of expectation for single f} holds where the convergence is now almost sure.
    The argument $\Cb(\X;\R)\subset\L_\infty(\X;\R)$ now enables to conclude that \ref{stmt:ewc as:iii} in Theorem~\ref{thm:characterization ewc as} holds, and thus $X$ is \ac{EWC} \ac{as}
\end{proof}

\begin{proof}[Proof of Proposition~\ref{prop:summary notions}]
    The implication that \ac{ESC} implies \ac{EWC} with the same limit measure follows trivially from the fact that the topology of convergence in total variation is finer than that of weak convergence.
    Finally, we only show that \ac{EWC} in probability implies weak \ac{AMS}, since the case for \ac{EWC} \ac{as} follows, and the proof that \ac{ESC} implies \ac{AMS} is similar by leveraging a result similar to \ref{stmt:ewc prob:i}$\implies$\ref{stmt:ewc prob:iii} in Theorem~\ref{thm:characterization ewc prob} for \ac{ESC} processes involving $\L_\infty(\X;\R)$ instead of $\Cb(\X;\R)$, and whose proof is left to the reader.

    Let $f\in\Cb(\X;\R)$.
    For any $n\in\Nstar$, we have \begin{align*}
        \left\lvert\frac1n\sum_{i=1}^N \E[f(X_i)] - (\E P)f\right\rvert 
            = \left\lvert\E[\eta_n^Xf - Pf]\right\rvert
            \leq \E[\lvert \eta_n^Xf - Pf\rvert],
    \end{align*}
    where the identity $(\E P)f = \E[Pf]$ follows from usual properties of the intensity measure.
    Now, from Theorem~\ref{thm:characterization ewc prob}, we have \begin{equation}\label{eq:convergence in probability of empirical averages}
        \lim_{n\to\infty}\eta_n^X f = Pf,
    \end{equation}
    where convergence is in probability.
    Furthermore, the sequence $(\eta_n^Xf)_{n\in\Nstar}$ is uniformly integrable by boundedness of $f$; indeed, we have $\E[\lvert\eta_n^Xf\rvert] \leq \norm{f}_\infty$ for all $n\in\Nstar$.
    It thus follows from known results on convergence of random variables that \eqref{eq:convergence in probability of empirical averages} is equivalent to convergence of $(\eta_n^Xf)_{n\in\Nstar}$ to $Pf$ in $L_1(\Omega,\P;\R)$.
    Hence, $\E[\lvert \eta_n^X f - Pf\rvert]\to 0$ as $n\to\infty$ and the result follows.
\end{proof}

\subsection{Proofs for Section~\ref{sec:stability}}
\begin{proof}[Proof of Theorem~\ref{thm:ewc transition pairs}]
    We begin with the direct implication and focus on the proof for $X$, as that for $Y$ is identical.
    Let $\pi_\X:(x,y)\in\X\times\Y\mapsto x\in\X$.
    Clearly, $\pi_\X$ is continuous.
    Now, if $Z$ is \ac{EWC} \ac{as}, it follows that \ref{stmt:ewc as:ii} in Theorem~\ref{thm:characterization ewc as} holds for $Z$, and in particular with $J$ the limit measure of $Z$\begin{equation*}
        \P[\forall f\in\Cb(\X;\R),~\limsup_n\lvert \eta_n^Z(f\circ \pi_\X) - J(f\circ \pi_\X)\rvert = 0] = 1,
    \end{equation*}
    because $f\circ \pi_\X\in\Cb(\X\times\Y;\R)$ for any $f\in\Cb(\X;\R)$.
    Furthermore, $\eta_n^Z(f\circ \pi_\X) = \eta_n^Xf$ and $J(f\circ\pi_\X) = Pf$, where $P$ is the random measure defined as $P[A] = J[A\times\Y]$ for all $A\in\B(\X)$.
    This shows that \begin{equation*}
        \P[\forall f\in\Cb(\X;\R),~\limsup_n\lvert \eta_n^Xf - Pf\rvert = 0] = 1,
    \end{equation*}
    and shows that $X$ is \ac{EWC} \ac{as} by Theorem~\ref{thm:characterization ewc as}.
    If $Z$ is only \ac{EWC} in probability, then the exact same reasoning as above but using Theorem~\ref{thm:characterization ewc prob} instead of Theorem~\ref{thm:characterization ewc as} shows that $X$ is \ac{EWC} in probability.

    We now assume that $\X$ and $\Y$ are compact and show the converse implication.
    We begin with the case where $X$ is \ac{EWC} \ac{as}~
    Let $f\in\Cb(\X\times\Y;\R)$ be fixed, and define for all $n\in\Nstar$ $\Delta_n = f(X_n,Y_n) - \E[f(X_n,Y_n)\mid X_{1:n},Y_{1:n-1}]$, where we recall the convention that conditioning on $Y_{1:0}$ means conditioning on the trivial $\sigma$-algebra $\{\emptyset,\Omega\}$.
    It satisfies $\E[\Delta_1] = 0$ and, for all $n\geq2$, $\E[\Delta_n\mid X_{1:n},Y_{1:n-1}] = 0$ \ac{as}~
    Further, the sequence $(\Delta_n)_{n\in\Nstar}$ is uniformly bounded by $2\cdot\norm{f}_\infty$.
    Hence, it satisfies the assumptions of Theorem~1 in \cite{Csoe1968} with $b_n = n$ for all $n\in\Nstar$.
    We deduce from that same theorem that \begin{equation}\label{eq:convergence Sn - Tn}
        \lim_{n\to\infty}\frac1n\sum_{i=1}^n\Delta_i = \lim_{n\to\infty}\frac1n\sum_{i=1}^n f(X_i,Y_i) - \E[f(X_i,Y_i)\mid X_{1:i},Y_{1:i-1}] = 0,
    \end{equation}
    where convergence is \ac{as}~
    Next, we show that \begin{equation}\label{eq:convergence Tn to Jf}
        \lim_{n\to\infty}\frac1n\sum_{i=1}^n\E[f(X_i,Y_i)\mid X_{1:i},Y_{1:i-1}] = Jf,
    \end{equation}
    with probability $1$, where $J$ is defined in \eqref{eq:joint limit measure}.
    Define $g:x\in\X\mapsto \int_{\Y}f(x,y)p(x,\d y)$; we have $g\in\Cb(\X;\R)$ since $p$ is Feller continuous.
    It holds with probability $1$ that \begin{align*}
        \frac1n\sum_{i=1}^n \E[f(X_i,Y_i)\mid X_{1:i},Y_{1:i-1}] 
            &= \frac1n\sum_{i=1}^n \E[f(X_i,Y_i)\mid X_{i}]\\
            &= \eta_n^X g,
    \end{align*}
    and thus by \ref{stmt:ewc as:iii} in Theorem~\ref{thm:characterization ewc as} applied to $X$, which is \ac{EWC}, \begin{equation*}
        \lim_{n\to\infty} \frac1n\sum_{i=1}^n \E[f(X_i,Y_i)\mid X_{1:i},Y_{1:i-1}] = Pg,\quad\Pas
    \end{equation*}
    Furthermore, \begin{equation*}
        Pg = \int_{\X}g(x)\d P(x) = \int_\X\int_\Y f(x,y)p(x,\d y)\d P(x) = Jf,
    \end{equation*}
    showing that \eqref{eq:convergence Tn to Jf} holds.
    Combining this with \eqref{eq:convergence Sn - Tn} shows \begin{equation}\label{eq:convergence to Jf}
        \lim_{n\to\infty} \eta_n^Z f = Jf,
    \end{equation}
    where the convergence is \ac{as}~
    The conclusion follows by leveraging Theorem~\ref{thm:characterization ewc as}, since $\X\times\Y$ is compact.

    Finally, the case where $X$ is only \ac{EWC} in probability follows by repeating the above steps by leveraging Theorem~\ref{thm:characterization ewc prob} instead of Theorem~\ref{thm:characterization ewc as}.
    The only difference is that the convergence in \eqref{eq:convergence Tn to Jf} is now in probability, and thus so is the one in \eqref{eq:convergence to Jf}, concluding the proof.
\end{proof}

\subsection{Proofs for Section~\ref{sec:consistency kme}}
\begin{proof}[Proof of Theorem~\ref{thm:consistency kme}]
    We begin with the following observation: from (ii) in Theorem 21 in \cite{SGF2010}, there exists a constant $C>0$ such that for all $Q_1$ and $Q_2$ in $\Pcal(\X)$, \begin{equation*}
        \mmd(Q_1,Q_2) \leq C\cdot \beta(Q_1,Q_2),
    \end{equation*}
    where $\beta$ is the Dudley metric and is known to metrize weak convergence on $\Pcal(\X)$ (see the discussion above (28) in the same reference).
    We emphasize that, although (ii) in Theorem 21 requires an additional separability assumption, that assumption is only used in the proof to show the upper bound on the Dudley metric; therefore, the lower bound still holds without the assumption.
    Consequently, if $(Q_n)_{n\in\Nstar}\subset\Pcal(\X)$ converges weakly to $Q\in\Pcal(\X)$, then $\mmd(Q_n,Q)\to0$ as $n\to\infty$.
    
    Let us now assume that $X$ is \ac{EWC} in probability with limit measure $P$.
    We use the same argument as in the end of the proof of Theorem~\ref{thm:averages of Cb(H)} to deduce from Corollary 20.8 in \cite{Bau2011} that this is equivalent to the fact that, for any strictly increasing $\ell\subset\Nstar$, there exists another strictly increasing  $m\subset\Nstar$ such that $\eta_{\ell_{m_n}}^X\cvw P$ \ac{as}.
    But then, $\lim_{n\to\infty}\mmd(\eta_{\ell_{m_n}}^X,P) = 0$ by the above upper bound, where convergence is \ac{as}
    By definition of the \ac{MMD}, this is equivalent to \begin{equation*}
        \lim_{n\to\infty}\frac{1}{\ell_{m_n}}\sum_{i=1}^{\ell_{m_n}}k(\cdot,X_i) = \mu_P,
    \end{equation*}
    where convergence is in $\norm{\cdot}_\H$ and is \ac{as}~
    It then follows from the converse implication of Corollary 20.8 in \cite{Bau2011} applied to $\norm{\frac{1}{\ell_{m_n}}\sum_{i=1}^{\ell_{m_n}}k(\cdot,X_i) - \mu_P}_\H$ that \eqref{eq:convergence kme} holds, where convergence is in probability.
    Finally, if $X$ is \ac{EWC} \ac{as}, we can repeat the above proof without going to a subsequence, showing the result in that case as well.

    The converse implication follows immediately from Proposition~\ref{prop:uniqueness limit}.
    \end{proof}

\subsection{Proofs for Section~\ref{sec:consistency SVM}}

The general structure of the proof of Theorem~\ref{thm:consistency svm under ewc} is identical to that of Theorem 2.17 in~\cite{SHS2009} thanks to Theorem~\ref{thm:stability SVM solutions} and Theorem~\ref{thm:averages of Cb(H)} which extend to our case Lemmas~4.2 and~4.4 in the same reference.
Only the last step differs, since the randomness of the limit measure $J$ makes the convergence of $\Rcal_{L,J}(f_{J,\lambda})$ to $\Rcal_{L,J,\H}$ as $\lambda\to0$ also a random event, and that convergence may not occur uniformly on (a full probability subset of) $\Omega$.
\begin{proof}[Proof of Theorem\,\ref{thm:consistency svm under ewc}]
    We begin with the case where $Z$ is \ac{EWC} \ac{as}~
    First, since $L$ is locally bounded, the function $L(\cdot,\cdot,0)$ is bounded, and thus we can assume without loss of generality that $\Rcal_{L,I}(0)\leq 1$ for all $I\in\Pcal(\X\times\Y)$.
    By Lemma~\ref{lemma:existence uniqueness SVM solutions}, we thus have $\norm{f_{I,\lambda}}_\H\leq\lambda^{-1/2}$ for all $\lambda > 0$.
    Furthermore, since $K$ is bounded, we can also assume without loss of generality that $\norm{K}_{\infty}\leq 1$ so that $\norm{f}_\infty\leq\norm{f}_\H$ for all $f\in\H$.
    Finally, by completeness of $(\Omega,\A)$ and measurability of $\Rcal_{L,J,\H}$ (cf. Corollary~\ref{clry:measurability optimal risk}), the assumption that $\H$ is $(L,J)$-rich $\Pas$ implies that $\Rcal_{L,J}^\star$ is measurable and is \ac{as} equal to $\Rcal_{L,J,\H}$.
    We denote by $\Omega_1$ a full probability set such that $\Rcal_{L,J,\H} = \Rcal_{L,J}^\star$.

    We have for all $n\in\Nstar$ and $\lambda>0$ the upper bound \begin{equation}\label{eq:upper bound excess risk}\begin{split}
        \left\lvert\Rcal_{L,J}(f_{\eta_n^Z,\lambda}) - \Rcal_{L,J}^\star\right\rvert
            &\leq \left\lvert\Rcal_{L,J}(f_{\eta_n^Z,\lambda}) - \Rcal_{L,J}(f_{J,\lambda})\right\rvert + \left\lvert\Rcal_{L,J}(f_{J,\lambda}) - \Rcal_{L,J}^\star\right\rvert\\
            &= \left\lvert\Rcal_{L,J}(f_{\eta_n^Z,\lambda}) - \Rcal_{L,J}(f_{J,\lambda})\right\rvert + \left\lvert\Rcal_{L,J}(f_{J,\lambda}) - \Rcal_{L,J,\H}\right\rvert\\
            &\leq \left\lvert L\right\rvert_{\lambda^{-1/2},1}\cdot\norm{f_{\eta_n^Z,\lambda} - f_{J,\lambda}}_\infty  + \left\lvert\Rcal_{L,J}(f_{J,\lambda}) - \Rcal_{L,J,\H}\right\rvert\\
            &\leq \frac{\left\lvert L\right\rvert_{\lambda^{-1/2},1}}{\lambda}\cdot\norm{\eta_n^Z(\phi h_{J,\lambda}) - J(\phi h_{J,\lambda})}_\H  + \left\lvert\Rcal_{L,J}(f_{J,\lambda}) - \Rcal_{L,J,\H}\right\rvert,
    \end{split}\end{equation}
    where the second step only holds on $\Omega_1$, the third one comes from local Lipschitz continuity and the integral definition of $\Rcal_{L,J}$, the last step comes from Theorem~\ref{thm:stability SVM solutions}, and $h_{J,\lambda}$ is as given in Corollary~\ref{clry:continuous representer}.
    We can now pick the regularization sequence $(\lambda_n)_{n\in\Nstar}$ as follows.
    Let $\epsilon>0$.
    First, note that by the expression of $h_{J,\lambda}$ in Corollary~\ref{clry:continuous representer} and by Lemma~\ref{lemma:measurability SVM}, the map $\omega\mapsto \Phi h_{J(\omega),\lambda}$ is measurable, and thus so are its integrals w.r.t. random measures (by Lemma~1.15.(i) in \cite{Kal2017}).
    Next, for all $n\in\Nstar$, and $\lambda>0$, let \begin{align*}
        F(\lambda,n) := \P\left[\sup_{m\geq n}\norm{\eta_m^Z(\phi h_{J,\lambda}) - J(\phi h_{J,\lambda})}_\H \geq \frac{\lambda}{\left\lvert L\right\rvert_{\lambda^{-1/2},1}}\epsilon\right].
    \end{align*}
    Now, $h_{J,\lambda}$ takes values in $\Cb(\X\times\Y;\G)$ by Corollary~\ref{clry:continuous representer}, and thus $\phi h_{J,\lambda}$ takes values in $\Cb(\X\times\Y;\H)$, which is separable by assumption.
    Therefore, by Corollary~\ref{clry:uniform averages of Cb(H)}, $\lim_{n\to\infty}\eta_n^Z(\phi h_{J,\lambda}) = J(\phi h_{J,\lambda})$, where convergence is \ac{as}~
    Consequently, it follows from Lemma 20.6 in \cite{Bau2011} that $\lim_{n\to\infty}F(\lambda,n) = 0$ for all $\lambda>0$.
    We now use Lemma~4.4 in \cite{SHS2009} to obtain a sequence $(\lambda_n)_{n\in\Nstar}$ such that $\lim_{n\to\infty}\lambda_n=0$ and $\lim_{n\to\infty}F(\lambda_n,n) = 0$.
    Let now $\delta>0$ be arbitrary, and take $n_0\in\Nstar$ such that for all $n\geq n_0$, both $F(n,\lambda_n)\leq \frac\delta2$ and $G_n\leq\frac\delta2$, where \begin{align*}
        G_n := \P\left[\sup_{m\geq n}\left\lvert\Rcal_{L,J}(f_{J,\lambda_m}) - \Rcal_{L,J,\H}\right\rvert \geq \epsilon\right].
    \end{align*}
    Such an $n_0$ exists since $\lim_{n\to\infty}G_n = 0$ by applying again Lemma 20.6 in \cite{Bau2011} to $\left\lvert\Rcal_{L,J}(f_{J,\lambda_n}) - \Rcal_{L,J,\H}\right\rvert$, which converges to $0$ \ac{as}~by a simple argument using Lemma~\ref{lemma:existence uniqueness SVM solutions}.
    It follows from \eqref{eq:upper bound excess risk} that for all $n\geq n_0$,
    \begin{align*}
        \P\left[\sup_{m\geq n}\left\lvert\Rcal_{L,J}(f_{\eta_m^Z,\lambda_m}) - \Rcal_{L,J}^\star\right\rvert \geq \epsilon\right]
            &\leq \P\left[
                    \sup_{m\geq n}\frac{\left\lvert L\right\rvert_{\lambda_m^{-1/2},1}}{\lambda_m}\cdot\norm{\eta_n^Z(\phi h_{J,\lambda_m}) - J(\phi h_{J,\lambda_m})}_\H \geq \epsilon
                ~\text{or}~\right.\\
                &\quad\qquad\left.
                \vphantom{\frac{\left\lvert L\right\rvert_{\lambda_m^{-1/2},1}}{\lambda_m}} 
                    \sup_{m\geq n}\left\lvert\Rcal_{L,J}(f_{J,\lambda_m}) - \Rcal_{L,J,\H}\right\rvert\geq \epsilon
            \right]\\
            &\leq F(\lambda_n,n) + G_n\\
            &\leq \delta.
    \end{align*}
    The converse implication of Lemma 20.6 in \cite{Bau2011} completes the proof of this case.
    The case when $Z$ is only \ac{EWC} in probability follows immediately by repeating the above steps without introducing the supremum $\sup_{m\geq n}$ in the definition of $F(\lambda,n)$ and of $G_n$, which concludes the proof.
\end{proof}
\section{Random compact sets}\label{apdx:random sets}
This section contains technical results necessary for the proof of Theorem~\ref{thm:averages of Cb(H)}.
The main result is Lemma~\ref{theorem:prokhorov for random measures}, which is a generalization of Prokhorov's theorem to finite or countable families of random measures.

\begin{definition}
    Let $(\X,d)$ be a metric space.
    We say that $\X$ has \emph{nice closed balls} if for any $x\in\X$ and $\epsilon>0$, the closed ball $B(x,\epsilon) = \{y\in\X\mid d(x,y)\leq \epsilon\}$ is either compact of equal to $\X$.
\end{definition}
\begin{lemma}
    Let $\X$ be a metrizable space. 
    Then, $\X$ has a compatible metric with nice closed balls if, and only if, $\X$ is locally compact.
\end{lemma}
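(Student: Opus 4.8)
The plan is to prove the two implications separately: the forward direction (nice closed balls $\Rightarrow$ local compactness) is elementary, while the converse rests on a standard structure theorem for locally compact metrizable spaces. For the forward implication, suppose $\X$ carries a compatible metric $d$ with nice closed balls and fix $x\in\X$. I would consider the set of radii $r>0$ for which the closed ball $B(x,r)$ is compact. If it is nonempty, I pick such an $r$: then $B(x,r)$ is a compact set containing the open ball $\{y:d(x,y)<r\}$, which is an open neighbourhood of $x$, so $x$ has a compact neighbourhood. If instead $B(x,r)$ is non-compact for every $r>0$, then by the defining property each such ball equals $\X$, and letting $r\to 0$ forces $d(x,y)=0$ for all $y$, i.e.\ $\X=\{x\}$, which is trivially compact. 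In either case every point has a compact neighbourhood, so $\X$ is locally compact.

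For the converse, assume $\X$ is locally compact and metrizable. The first step is to decompose $\X$ into tractable pieces: since $\X$ is metrizable it is paracompact (Stone's theorem), and a locally compact, paracompact Hausdorff space is the topological disjoint sum $\X=\bigsqcup_{i\in I}\X_i$ of a family of clopen, $\sigma$-compact, locally compact subspaces. On each summand $\X_i$ --- being $\sigma$-compact, locally compact and metrizable --- I would invoke the Heine--Borel metrization theorem (Vaughan) to obtain a compatible \emph{proper} metric $\rho_i$, i.e.\ one all of whose closed balls are compact. Truncating, I set $\sigma_i=\min(\rho_i,1)$, which is still compatible with the topology of $\X_i$ and whose closed balls of radius $<1$ coincide with (hence are) compact $\rho_i$-balls.

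The final step glues these into a single metric on $\X$ by declaring points in distinct summands to lie at distance exactly $1$:
\[
  d(x,y)=
  \begin{cases}
    \sigma_i(x,y), & x,y\in\X_i,\\
    1, & x,y\ \text{in distinct summands.}
  \end{cases}
\]
I would check routinely that $d$ is a metric --- the truncation at $1$ and the clopen-ness of the summands reduce every instance of the triangle inequality to one of a handful of cases --- and that it induces the topology of $\X$. For nice closed balls, fix $x\in\X_i$ and $\epsilon>0$. If $\epsilon<1$, every point outside $\X_i$ lies at distance $1>\epsilon$, so $B(x,\epsilon)=\{y\in\X_i:\rho_i(x,y)\le\epsilon\}$ is compact by properness of $\rho_i$. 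If $\epsilon\ge 1$, then every point of every other summand (distance $1$) and every point of $\X_i$ (distance $\sigma_i\le 1$) lies in $B(x,\epsilon)$, so $B(x,\epsilon)=\X$. Hence $d$ has nice closed balls.

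The main obstacle is the decomposition step, and it is conceptually the heart of the matter: the clause allowing $B(x,\epsilon)=\X$ exists precisely to accommodate spaces that fail to be $\sigma$-compact (e.g.\ an uncountable discrete space admits no proper metric), and the real work is reducing to $\sigma$-compact pieces on which a genuine Heine--Borel metric is available. I would rely on the two standard inputs --- paracompactness of metrizable spaces and the locally-compact--paracompact structure theorem --- rather than reprove them; the truncation-and-separation gluing is what converts the per-piece proper metrics into a global metric with the desired property. I note finally that in the separable setting relevant to this paper the decomposition is vacuous: a second-countable locally compact Hausdorff space is already $\sigma$-compact, so a single proper metric makes \emph{all} closed balls compact and the conclusion is immediate.
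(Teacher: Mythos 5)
Your proof is correct, but it takes a genuinely different route from the paper: the paper disposes of this lemma in one line by citing Theorem 5.1.12 in Beer's \emph{Topologies on Closed and Closed Convex Sets}, whereas you essentially reprove that theorem from scratch. Your forward direction is the easy dichotomy argument and is fine (including the degenerate case where every ball is forced to equal $\X$, which collapses the space to a point). For the converse, your chain --- Stone's paracompactness of metrizable spaces, the decomposition of a locally compact paracompact Hausdorff space into clopen $\sigma$-compact summands, Vaughan's proper-metric theorem on each summand, then truncation at $1$ and gluing with inter-summand distance $1$ --- is the standard proof of Beer's result, and your verification that the glued metric has nice closed balls (radius $<1$ gives a compact $\rho_i$-ball inside one summand; radius $\geq 1$ gives all of $\X$) is exactly where the ``or equal to $\X$'' clause earns its keep. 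What the paper's citation buys is brevity; what your argument buys is self-containedness and the illuminating remark that for the locally compact \emph{Polish} spaces actually used in the paper (second countable, hence $\sigma$-compact) the decomposition is vacuous and a single proper metric already makes every closed ball compact. The only caveat is that you lean on three nontrivial external theorems (Stone, the paracompact decomposition, Vaughan) rather than one, so the proof is not more elementary than the citation --- just more explicit.
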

\begin{proof}
    This follows immediately from Theorem 5.1.12 in \cite{Bee1993}.
\end{proof}

\begin{lemma}
    Let $\X$ be a Polish space and let $d$ be a metric that metrizes its topology.
    The set $\K(\X)$ of nonempty compact subsets of $\X$ can be equipped with the Hausdorff metric \begin{equation*}
        h:~(K_1, K_2)\in\K(\X)^2 \mapsto \max\left\{\sup_{x\in K_1}\inf_{y\in K_2} d(x,y), \sup_{x\in K_2}\inf_{y\in K_1} d(x,y)\right\}.
    \end{equation*}
    Then, $(\K(\X),h)$ is complete and separable.
    In particular, $\K(\X)$ equipped with the resulting topology is Polish.
\end{lemma}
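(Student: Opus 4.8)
The plan is to verify in turn that $h$ is a genuine metric, that $(\K(\X),h)$ is separable, and that it is complete; Polishness is then immediate, since a complete separable metric space is Polish. Throughout I fix $d$ to be a \emph{complete} metric metrizing the topology of $\X$, which exists precisely because $\X$ is Polish; completeness of $d$ is essential for the completeness argument and harmless for the (topological) final assertion. That $h$ is a metric is routine and I would dispatch it quickly: symmetry is built into the definition, $h(K_1,K_2)=0$ forces $K_1\subseteq\Cl{K_2}=K_2$ and symmetrically (using that compact sets are closed), and the triangle inequality descends from the elementary estimate $\inf_{z\in K_3}d(x,z)\le d(x,y)+\inf_{z\in K_3}d(y,z)$, optimized over $y\in K_2$ and then over $x$.

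For separability, let $D$ be a countable dense subset of $\X$ (which exists by separability of $\X$) and let $\F$ be the countable collection of nonempty finite subsets of $D$. Given $K\in\K(\X)$ and $\epsilon>0$, compactness yields finitely many $x_1,\dots,x_k\in K$ with $K\subseteq\bigcup_i B(x_i,\epsilon/2)$; picking $d_i\in D$ with $d(x_i,d_i)<\epsilon/2$ and setting $F=\{d_1,\dots,d_k\}\in\F$ gives $h(K,F)<\epsilon$, since every point of $K$ lies within $\epsilon$ of $F$ and each $d_i$ lies within $\epsilon/2$ of $x_i\in K$. Hence $\F$ is dense, and $(\K(\X),h)$ is separable.

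The substantive step is completeness. Given a Cauchy sequence $(K_n)_{n\in\Nstar}$ in $(\K(\X),h)$, I would take as candidate limit
\[
    K=\bigcap_{N\in\Nstar}\Cl{\bigcup_{n\ge N}K_n},
\]
which is closed by construction, and then establish three facts. \emph{Nonemptiness:} extracting a subsequence with $h(K_{n_k},K_{n_{k+1}})<2^{-k}$ lets me build $x_{n_k}\in K_{n_k}$ with $d(x_{n_k},x_{n_{k+1}})<2^{-k}$; this sequence is Cauchy, hence convergent by completeness of $d$, and its limit lies in $\Cl{\bigcup_{n\ge N}K_n}$ for every $N$, so in $K$. \emph{Compactness:} given $\epsilon>0$, the Cauchy property supplies $N$ with $\bigcup_{n\ge N}K_n$ inside the $\epsilon$-neighbourhood of $K_N$, whence $K$ lies in the closed $\epsilon$-neighbourhood of $K_N$; total boundedness of the compact set $K_N$ then forces $K$ to be totally bounded, and as a closed subset of the complete space $\X$ it is compact. \emph{Convergence $h(K_n,K)\to0$:} the bound $\sup_{y\in K}\inf_{z\in K_n}d(y,z)\to0$ follows by approximating $y\in K$ by a point of some $K_m$ with $m\ge N$ and transporting it into $K_n$ through $h(K_m,K_n)$, while the reverse bound $\sup_{x\in K_n}\inf_{y\in K}d(x,y)\to0$ follows by running the telescoping-subsequence construction used for nonemptiness, but started at an arbitrary $x\in K_n$, producing a point of $K$ within $\epsilon$ of $x$.

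I expect the convergence $h(K_n,K)\to0$ to be the main obstacle: it requires the telescoping construction to be carried out uniformly from an arbitrary starting point in $K_n$ and careful control of the geometric error, whereas nonemptiness and compactness of $K$ are comparatively direct once the candidate $K$ is fixed. With $h$ shown to be a complete, separable metric on $\K(\X)$, the resulting topology is Polish, which is the final claim.
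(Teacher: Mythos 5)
Your proof is correct, and it is genuinely more informative than the paper's, which disposes of the lemma entirely by citation (completeness to Theorem 3--3 of Henrikson, separability to Corollary 3.90 and Theorem 3.91 of Aliprantis--Border); the argument you give is essentially the standard one that those references contain, so there is no mathematical divergence, only a difference in how much is left to the reader. Two remarks. First, your decision to fix $d$ \emph{complete} at the outset is not merely a convenience: as stated, the lemma takes an arbitrary compatible metric $d$, and for an incomplete such $d$ the space $(\mathcal{K}(\mathcal{X}),h)$ need not be complete (e.g.\ $\mathcal{X}=\mathbb{R}$ with $d(x,y)=\lvert\arctan x-\arctan y\rvert$ and the singletons $K_n=\{n\}$, which are $h$-Cauchy with no compact limit). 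Your observation that the choice of complete $d$ is harmless for the final, topological assertion is exactly the right repair, since the Hausdorff-metric topology on nonempty compact sets coincides with the Vietoris topology and is therefore independent of the compatible metric; this is a point the paper's statement glosses over. Second, all the individual steps check out: the candidate limit $K=\bigcap_N \mathrm{Cl}\bigl(\bigcup_{n\ge N}K_n\bigr)$ is the standard one, the telescoping subsequence with $h(K_{n_k},K_{n_{k+1}})<2^{-k}$ gives both nonemptiness and (restarted from an arbitrary $x\in K_n$ with a summable error budget) the bound $\sup_{x\in K_n}\inf_{y\in K}d(x,y)\to 0$, and total boundedness of $K$ inherited from a nearby $K_N$ plus closedness in the complete space $\mathcal{X}$ gives compactness. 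You correctly identify the two-sided convergence $h(K_n,K)\to 0$ as the only step needing real care.
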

\begin{proof}
    Completeness is proven in Theorem 3-3 in \cite{Hen1999}, and separability follows from Corollary~3.90 and Theorem~3.91 in \cite{AB2006}.
\end{proof}
In what follows, we assume without loss of generality that, if $\X$ is locally compact, then the Hausdorff distance $h$ on $\K(\X)$ is defined w.r.t. a compatible metric $d$ that has nice closed balls.

Before moving on to the main result, we show two useful preliminary results.
\begin{lemma}\label{lemma:nice closed dilations}
    Let $(\X,d)$ be a metric space with nice closed balls.
    Then, for any $K\in\K(\X)$ and $\epsilon>0$, the set \begin{equation*}
        \dil(K,\epsilon) = \{x\in\X\mid \inf_{y\in K}d(x,y) \leq \epsilon\} = \{x\in\X\mid d(x,K) \leq \epsilon\}
    \end{equation*}
    is either compact or equal to $\X$.
\end{lemma}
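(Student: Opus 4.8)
The plan is to first rewrite the dilation in a form amenable to the nice-closed-balls hypothesis, and then split into two cases according to whether some $\epsilon$-ball centred in $K$ already exhausts $\X$. First I would record that $\dil(K,\epsilon) = \{x\in\X : d(x,K)\le\epsilon\}$ is closed, since $x\mapsto d(x,K)$ is $1$-Lipschitz, and that, because $K$ is compact, the infimum defining $d(x,K)$ is attained; hence $x\in\dil(K,\epsilon)$ if and only if there exists $y\in K$ with $d(x,y)\le\epsilon$, i.e. $\dil(K,\epsilon)=\bigcup_{y\in K}B(y,\epsilon)$.

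The easy case is when some $y\in K$ has $B(y,\epsilon)=\X$: then $\dil(K,\epsilon)\supseteq B(y,\epsilon)=\X$, so $\dil(K,\epsilon)=\X$ and we are done. Otherwise, by the nice-closed-balls hypothesis every ball $B(y,\epsilon)$ with $y\in K$ is compact, and I would aim to show that $\dil(K,\epsilon)$ is compact. Since it is already closed, it suffices to exhibit a compact superset. The naive choice --- a single large ball $B(y_0,R+\epsilon)$ containing the dilation, where $R=\sup_{y\in K}d(y_0,y)<\infty$ --- does \emph{not} work directly, because this enlarged ball may equal $\X$ while $\X$ is non-compact; this is the main obstacle.

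To circumvent it, I would cover $K$ by finitely many slightly enlarged compact balls rather than one big ball. The key sublemma is: for each $y\in K$ there is $\delta_y>0$ such that $B(y,\epsilon+\delta_y)$ is compact. Indeed, if $\X$ is compact this is automatic; if $\X$ is non-compact, then the compact ball $B(y,\epsilon)$ is a proper subset of $\X$, and were $B(y,\epsilon+\delta)=\X$ for every $\delta>0$, intersecting over $\delta>0$ would give $B(y,\epsilon)=\bigcap_{\delta>0}B(y,\epsilon+\delta)=\X$, a contradiction; so some $B(y,\epsilon+\delta_y)\ne\X$ and is therefore compact. With these radii in hand, the open balls $\{z:d(z,y)<\delta_y\}$, $y\in K$, cover $K$, and compactness yields a finite subcover centred at $y_1,\dots,y_N$. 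Then any $x\in\dil(K,\epsilon)$ satisfies $d(x,y)\le\epsilon$ for some $y\in K$, which lies within $\delta_{y_i}$ of some $y_i$, whence $d(x,y_i)<\epsilon+\delta_{y_i}$ and $x\in B(y_i,\epsilon+\delta_{y_i})$. Thus $\dil(K,\epsilon)\subseteq\bigcup_{i=1}^N B(y_i,\epsilon+\delta_{y_i})$, a finite union of compact sets, hence compact; being a closed subset of a compact set, $\dil(K,\epsilon)$ is itself compact, which completes the dichotomy.
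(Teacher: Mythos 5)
Your proof is correct, but it takes a different route from the paper's. The paper argues via sequential compactness: given a sequence $(x_n)\subset\dil(K,\epsilon)$, it picks anchors $z_n\in K$ with $d(x_n,z_n)\le\epsilon$, extracts a convergent subsequence $z_{m_n}\to z$ in the compact set $K$, and then applies the nice-closed-balls dichotomy to the single ball $B(z,\epsilon+\delta)$: either $B(z,\epsilon+\delta)=\X$ for every $\delta>0$, in which case the intersection identity forces $B(z,\epsilon)=\X$ and hence $\dil(K,\epsilon)=\X$, or some enlarged ball is compact and eventually contains the tail of $(x_{m_n})$, yielding a convergent sub-subsequence whose limit lies in the closed set $\dil(K,\epsilon)$. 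You instead make the case split global (does some $B(y,\epsilon)$ with $y\in K$ equal $\X$?) and, in the compact case, trap $\dil(K,\epsilon)$ inside a \emph{finite union} of compact balls obtained from an open subcover of $K$; the key sublemma producing a compact $B(y,\epsilon+\delta_y)$ uses exactly the same intersection trick $\bigcap_{\delta>0}B(y,\epsilon+\delta)=B(y,\epsilon)$ as the paper, just applied at each centre rather than at a single subsequential limit. You also correctly flag and avoid the trap of using one large ball $B(y_0,R+\epsilon)$, which is precisely the difficulty both proofs must circumvent. Your covering argument buys a slightly cleaner, sequence-free structure and an explicit compact superset; the paper's sequential argument is marginally shorter because it only needs the dichotomy at one point $z$. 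One small point you handle more carefully than the paper: the existence of an anchor $y\in K$ with $d(x,y)\le\epsilon$ requires the infimum $d(x,K)$ to be attained, which you justify via compactness of $K$, whereas the paper asserts it ``by definition.''
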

\begin{proof}
    Let $K\in\K(\X)$ and $\epsilon>0$.
    Let $(x_n)_{n\in\Nstar}\subset\dil(K,\epsilon)$.
    Take, for all $n\in\Nstar$, $z_n\in K$ such that $d(x_n, z_n)\leq \epsilon$; such a sequence exists by definition of $\dil(K,\epsilon)$.
    Since $(z_n)_{n\in\Nstar}$ is a sequence of the compact set $K$, it has a converging subsequence; say, $(z_{m_n})_{n\in\Nstar}$, and denote its limit by $z$.
    But then, for every $\delta>0$, there exists $N\in\Nstar$ such that $x_{m_n}\in B(z,\epsilon+\delta)$ for all $n\geq N$.
    We conclude by discussing two cases.
    If $B(z,\epsilon+\delta)=\X$ for all $\delta>0$, then in particular \begin{equation*}
        B(z,\epsilon) = \bigcap_{n\in\Nstar} B(z,\epsilon+n^{-1}) = \bigcap_{n\in\Nstar}\X = \X.
    \end{equation*}
    Since $B(z,\epsilon)\subset \dil(K,\epsilon)$, we deduce that $\dil(K,\epsilon) = \X$.
    Otherwise, there exists $\delta>0$ such that $B(z,\epsilon+\delta)$ is a proper subset of $\X$.
    Then, $(x_{m_n})_{n\in\Nstar}$ takes values in the compact set $B(z,\epsilon+\delta)$ from some finite rank $N\in\Nstar$ on.
    The sequence $(x_n)_{n\in\Nstar}$ thus has a subsequence converging in $B(z,\epsilon+\delta)$.
    Furthermore, since $\dil(K,\epsilon)$ is closed, we deduce that it must contain the limit of $(x_n)_{n\in\Nstar}$.
    In other words, either $\dil(K,\epsilon)=\X$, or each sequence thereof has a converging subsequence, showing compactness and concluding the proof.
\end{proof}
\begin{lemma}\label{lemma:evaluation is usc}
    Let $\X$ be a Polish space and $P\in\Pcal(\X)$.
    The map $e_P: K\in\K(\X)\mapsto P(K)\in[0,1]$ is upper semi-continuous, that is, for every $K\in\K(\X)$ and sequence $(K_n)_{n\in\Nstar}$ such that $h(K_n, K)\to 0$ as $n\to\infty$, we have $\limsup_n e_P(K_n) \leq e_P(K)$.
\end{lemma}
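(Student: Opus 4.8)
The plan is to exploit the relationship between Hausdorff convergence and the dilations $\dil(K,\epsilon)$ (using the notation of Lemma~\ref{lemma:nice closed dilations}), together with continuity from above of the finite measure $P$. The core observation is that $h(K_n,K)\leq\epsilon$ forces every point of $K_n$ to lie within distance $\epsilon$ of $K$, so that $K_n\subseteq\dil(K,\epsilon)$; shrinking $\epsilon$ then squeezes the dilations down onto $K$ itself. Note that no local compactness is needed here, consistently with $\X$ being only Polish.

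First I would fix $\epsilon>0$ and use $h(K_n,K)\to0$ to obtain $N\in\Nstar$ such that $h(K_n,K)\leq\epsilon$ for all $n\geq N$. By the first term in the maximum defining the Hausdorff metric, this yields $\sup_{x\in K_n}\inf_{y\in K}d(x,y)\leq\epsilon$, i.e. $d(x,K)\leq\epsilon$ for every $x\in K_n$, which is exactly the inclusion $K_n\subseteq\dil(K,\epsilon)$. Since $x\mapsto d(x,K)$ is continuous, $\dil(K,\epsilon)$ is closed and hence Borel measurable, so monotonicity of $P$ gives $e_P(K_n)=P(K_n)\leq P(\dil(K,\epsilon))$ for all $n\geq N$. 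Taking the $\limsup$ over $n$ then produces $\limsup_n e_P(K_n)\leq P(\dil(K,\epsilon))$.

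The final step is to let $\epsilon\to0$. Specializing to $\epsilon=1/m$, the sets $\dil(K,1/m)$ form a decreasing sequence of Borel sets, and their intersection is $\{x\in\X\mid d(x,K)=0\}=K$ because $K$, being compact, is closed. As $P$ is a probability (hence finite) measure, continuity from above gives $P(\dil(K,1/m))\to P(K)=e_P(K)$ as $m\to\infty$. Combining this with the uniform bound of the previous step yields $\limsup_n e_P(K_n)\leq\inf_{m\in\Nstar}P(\dil(K,1/m))=e_P(K)$, which is the claimed upper semi-continuity.

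I expect no serious obstacle: the argument is a standard ``outer-approximation-through-dilations'' computation. The only points requiring a little care are verifying measurability of the dilations, which is immediate from continuity of the distance-to-$K$ function, and justifying $\bigcap_{m}\dil(K,1/m)=K$, which hinges solely on $K$ being closed; continuity from above of $P$ then does the rest.
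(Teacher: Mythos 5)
Your proof is correct, but it takes a genuinely different route from the paper's. The paper argues via the set-theoretic upper limit: it shows $\limsup_n K_n \subset K$ (any point lying in infinitely many $K_n$ has $d(x,K) \leq h(K_{j_n},K) \to 0$, hence lies in the closed set $K$) and then invokes the reverse Fatou lemma to get $\limsup_n e_P(K_n) \leq e_P(\limsup_n K_n) \leq e_P(K)$. You instead sandwich the $K_n$ inside the closed dilations $\dil(K,\epsilon)$ for all large $n$, pass to the $\limsup$ there, and then shrink $\epsilon$ using continuity from above of the finite measure $P$ together with $\bigcap_m \dil(K,1/m) = K$. Both arguments are elementary and use only that $\X$ is metric and $K$ is closed; neither needs local compactness, and your use of Lemma~\ref{lemma:nice closed dilations} is purely notational (you only need closedness of the dilations, which follows from continuity of $x \mapsto d(x,K)$, not their compactness). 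Your approach buys a slightly more self-contained computation that avoids appealing to reverse Fatou, at the cost of an extra limiting step in $\epsilon$; the paper's is marginally shorter once the inclusion $\limsup_n K_n \subset K$ is established. All steps in your write-up check out.
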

\begin{proof}
    First of all, our definition of upper semi-continuity is indeed equivalent to the general notion since $\X$ is Polish; see for instance Lemma 7.13.b in \cite{BS1996} for details.
    Let $K$ and $(K_n)_{n\in\Nstar}$ be as described in the lemma.
    The result follows from the fact that $\limsup_n K_n \subset K$.
    Indeed, if $x\in\limsup_n K_n$, then there exists a strictly increasing sequence $j:\Nstar\to\Nstar$ such that $x\in K_{j_n}$ for all $n\in\Nstar$.
    But then, for all $n\in\Nstar$,\begin{equation*}
        d(x, K) = \inf_{y\in K}d(x,y) \leq \sup_{z\in K_{j_n}}\inf_{y\in K}d(z,y) \leq h(K_{j_n},K).
    \end{equation*}
    Since the \ac{rhs} goes to $0$ as $n\to\infty$, we deduce that $d(x,K) = 0$, and thus $x\in K$ since $K$ is closed, showing the inclusion.
    By the reverse Fatou's lemma, it follows that \begin{equation*}
        \limsup_n e_P(K_n) \leq e_P(\limsup_n K_n) \leq e_P(K),
    \end{equation*}
    concluding the proof.
\end{proof}

\begin{theorem}\label{theorem:prokhorov for random measures}
    Let $\X$ be a locally compact Polish space and $\Q = (P_n)_{n\in\Nstar}$ be a finite or countable collection of random measures on $\X$.
    Assume that the set $E=\{\Q\text{ is sequentially compact}\}$ is measurable and has probability $1$.
    Then, for all $\epsilon>0$, there exists a random compact subset $K_\epsilon$ of $\X$ such that for all $P\in\Q, P(K_\epsilon)\geq 1-\epsilon$, \ac{as}
\end{theorem}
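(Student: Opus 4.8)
The plan is to prove the statement $\omega$ by $\omega$ using the classical Prokhorov theorem and then to upgrade that pointwise construction to a jointly measurable one via a measurable selection argument. Two harmless reductions come first: for $\epsilon\geq 1$ the inequality $P(K_\epsilon)\geq 1-\epsilon$ is vacuous, so any fixed compact set works, and I may assume $0<\epsilon<1$; and since $h$ is taken w.r.t. a metric with nice closed balls, local compactness guarantees that closed balls of small radius are compact, which is what makes the compact witnesses produced below genuine. The central object is the set-valued map
\[
    \Gamma(\omega)=\bigcap_{n}\bigl\{K\in\K(\X):P_n(\omega)(K)\geq 1-\epsilon\bigr\}\subset\K(\X),
\]
whose measurable selections will be exactly the random compact sets we seek.

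First I would record two pointwise facts valid on the full-probability event $E$. For nonemptiness: since $\Q(\omega)$ is sequentially compact it is tight by Prokhorov's theorem in the Polish space $\X$, so there is a compact set $K$ with $P(K)\geq 1-\epsilon$ for every $P\in\Q(\omega)$; hence $K\in\Gamma(\omega)$. For closedness: by Lemma~\ref{lemma:evaluation is usc} each map $K\mapsto P_n(\omega)(K)$ is upper semi-continuous, so each constraint set is a superlevel set of an upper semi-continuous function and is therefore closed in $\K(\X)$, and so is their countable intersection. Thus $\Gamma$ takes nonempty closed values on $E$.

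The crux is the measurability of $\Gamma$. Here I would combine the two separate regularities of the evaluation map: for fixed $K$ the map $\omega\mapsto P_n(\omega)(K)$ is measurable because $P_n$ is a random measure, while for fixed $\omega$ the map $K\mapsto P_n(\omega)(K)$ is upper semi-continuous by Lemma~\ref{lemma:evaluation is usc}. Since $\K(\X)$ is Polish, a function that is measurable in one variable and upper semi-continuous in the other is jointly measurable (a Carathéodory/normal-integrand argument), so $(\omega,K)\mapsto P_n(\omega)(K)$ is $\A\otimes\B(\K(\X))$-measurable. Consequently the graph of $\Gamma$, namely $\bigcap_n\{(\omega,K)\in E\times\K(\X):P_n(\omega)(K)\geq 1-\epsilon\}$, is measurable.

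Finally, because $(\Omega,\A,\P)$ is complete and $\K(\X)$ is Polish, the measurable-selection theorem for multifunctions with measurable graph (Aumann--von~Neumann, or equivalently Kuratowski--Ryll-Nardzewski after passing through the measurable projection theorem) yields a measurable selection $K_\epsilon:\Omega\to\K(\X)$ with $K_\epsilon(\omega)\in\Gamma(\omega)$ for $\omega\in E$; on the null set $\Omega\setminus E$ I set $K_\epsilon$ equal to an arbitrary fixed compact set, which is harmless by completeness of $\P$. By construction $K_\epsilon$ is a random compact set satisfying $P(K_\epsilon)\geq 1-\epsilon$ for all $P\in\Q$, \ac{as}, as required. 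I expect the joint-measurability/selection step to be the main obstacle: for each fixed $\omega$ the claim is exactly classical Prokhorov, and all the genuine work lies in propagating that choice measurably across $\Omega$ — which is precisely what the upper semi-continuity of Lemma~\ref{lemma:evaluation is usc} (feeding the normal-integrand argument) together with the completeness of the probability space are there to enable.
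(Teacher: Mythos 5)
Your overall architecture coincides with the paper's: the same multifunction of admissible compact sets, nonemptiness via classical Prokhorov on the full-probability event $E$, closedness of the values via Lemma~\ref{lemma:evaluation is usc}, a constant compact set off $E$, and a measurable-selection theorem to finish. The gap is in the step you yourself identify as the crux. The principle you invoke --- that a function which is measurable in $\omega$ for each fixed $K$ and upper semi-continuous in $K$ for each fixed $\omega$ is jointly measurable --- is false in general, even on a product of Polish spaces and even modulo completion. The Carath\'eodory theorem requires \emph{continuity} in the second variable, and the normal-integrand machinery does not rescue mere semicontinuity: take $g:[0,1]\to[0,1]$ injective with non-measurable graph $S$ and set $f=\ind_S$; then $f(\omega,\cdot)$ is the indicator of a singleton (hence u.s.c.), $f(\cdot,x)$ is the indicator of a singleton or the empty set (hence measurable), yet $f$ is not jointly measurable. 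So joint measurability of $(\omega,K)\mapsto P_n(\omega)(K)$, and hence measurability of the graph of $\Gamma$, does not follow from the two separate regularities you list.

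The conclusion you want is nevertheless true, but it needs an argument: one can show that the evaluation $(\mu,K)\in\Pcal(\X)\times\K(\X)\mapsto\mu(K)$ is \emph{jointly} upper semi-continuous --- if $\mu_n\cvw\mu$ and $h(K_n,K)\to0$, then for every $\delta>0$ eventually $K_n\subset\dil(K,\delta)$, so $\limsup_n\mu_n(K_n)\leq\limsup_n\mu_n(\dil(K,\delta))\leq\mu(\dil(K,\delta))$ by the portmanteau theorem for the closed set $\dil(K,\delta)$, and letting $\delta\downarrow0$ gives $\limsup_n\mu_n(K_n)\leq\mu(K)$ --- whence $(\omega,K)\mapsto P_n(\omega)(K)$ is jointly Borel measurable as the composition of the measurable map $\omega\mapsto P_n(\omega)$ with a jointly u.s.c.\ function. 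With that repaired, the Aumann--von~Neumann selection from a measurable graph over a complete probability space is legitimate. Note that this repaired route actually differs from the paper's, which instead establishes \emph{weak} measurability of the multifunction by testing against open balls of $\K(\X)$ and exhausting them with the compact dilations $F_n=\dil(K_0,\eta(1-n^{-1}))$ of Lemma~\ref{lemma:nice closed dilations} before applying Kuratowski--Ryll-Nardzewski; that is precisely where the nice-closed-balls consequence of local compactness is used, whereas your preliminary remark about compact balls plays no role in the argument you actually give.
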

\begin{proof}
    Let $L$ be an arbitrary element of $\K(\X)$ and define 
    \begin{align*}
        A: \omega\in\Omega \mapsto \begin{cases}
            \big\{K\in\K(\X)\mid \forall P\in\Q,\,P_{\omega}(K) \geq 1-\epsilon\big\},&\text{if }\omega\in E,\\
            \big\{L\big\},&\text{otherwise}.
        \end{cases}
    \end{align*}
    The map $A$ takes values in the set of subsets of $\K(\X)$.
    We show that there exists a measurable selection of $A$ by using the Kuratowksi-Ryll-Nardzewski theorem; see for instance Theorem\,6.9.3 in \cite{BR2007}.
    We need to show that $A$ is weakly measurable and takes values in the set of nonempty closed subsets of $\K(\X)$.
    First, by Prokhorov's theorem, $A(\omega)\neq \emptyset$.
    Furthermore, it is also closed.
    Indeed, let $(K_n)_{n\in\Nstar}\subset A(\omega)$, and assume that it converges to some $K\in\K(\X)$.
    For any $P\in\Q$, we have $P_\omega(K) \geq \limsup_n P_\omega(K_n) \geq 1-\epsilon$, where the first inequality is from Lemma~\ref{lemma:evaluation is usc} and the second one from the assumption on the sequence $(K_n)_{n\in\Nstar}$.
    Since $P\in\Q$ is arbitrary, this shows closedness of $A(\omega)$.
    Summarizing, $A$ takes values in the set of nonempty closed subsets of $\K(\X)$.

    We are thus left to show that $A$ is weakly measurable, that is, that for all $U$ open in $\K(\X)$, the set \begin{align*}
        \hat A(U) &= \{\omega\in\Omega\mid A(\omega)\cap U \neq \emptyset\}\\
            &= \{\omega\in E\mid \exists K\in U,\forall P\in\Q,\, P_\omega(K)\geq 1-\epsilon\}\cup\{\omega\in \Omega\setminus E\mid L\in U\},
    \end{align*}
    belongs to $\A$.
    Since the second element of the \ac{rhs} is either $\emptyset$ or $\Omega\setminus E$, which are both measurable, we focus on the first element, which we denote by $\bar A_P(U)$.
    We have \begin{equation*}
        \bar A_P(U) = \{\omega\in E\mid \exists K\in U,\inf_{P\in\Q} P_\omega(K)\geq 1-\epsilon\}.
    \end{equation*}
    It is clear that it is sufficient to consider the case where $U$ is an open ball of $\K(\X)$, i.e., \begin{equation*}
        U = \{K\in\K(\X)\mid h(K_0,K)<\eta\},
    \end{equation*}
    where $K_0\in\K(\X)$ and $\eta>0$ are arbitrary.
    Define for all $n\in\Nstar$\begin{equation*}
        F_n = \dil\left(K_0,\eta(1-n^{-1})\right) = \{x\in\X\mid d(x,K_0)\leq \eta(1-n^{-1})\}.
    \end{equation*}
    where $d$ is the metric on $\X$ used to define $h$.
    It is clear that $K_0\subset F_n$ and that the sequence $F$ is nondecreasing.
    We discuss two cases.
    First, if $F_n = \X$ for some $n\in\Nstar$, then $U\supset\{K\in\K(\X)\mid K_0\subset K\}$; indeed, then any compact set $K$ in that \ac{rhs} satisfies $K\subset F_n$, and thus \begin{align*}
        h(K,K_0) 
            &= \max\left\{\sup_{x\in K}\inf_{y\in K_0} d(x,y), \sup_{x\in K_0}\inf_{y\in K} d(x,y)\right\}\\
            &\leq \max\left\{\sup_{x\in F_n}\inf_{y\in K_0} d(x,y), 0\right\}\\
            &= h(F_n,K_0) \leq \eta(1-n^{-1}) < \eta,
    \end{align*}
    which shows $K\in U$.
    In that case, $\bar A_P(U) = E$ by Prokhorov's theorem and is thus measurable.
    Second, and more interestingly, assume that $F_n\neq \X$ for all $n\in\Nstar$.
    Then, $F_n$ is compact by Lemma~\ref{lemma:nice closed dilations}, and thus $F_n\in U$ since \begin{align*}
        h(F_n, K_0) = \max\left\{\sup_{x\in F_n}\inf_{y\in K_0} d(x,y),0\right\} = \sup_{x\in F_n} d(x,K_0) < \eta.
    \end{align*}
    Furthermore, for any $K\in U$, it holds that $K\subset F_n$ for some $n\in\Nstar$; take for instance any $n\geq \left(1-\frac{h(K,K_0)}{\eta}\right)^{-1}$.
    Consequently, we have the equality \begin{equation*}
        \bar A_P(U) = \{\omega\in\Omega\mid \exists n\in\Nstar, \inf_{P\in\Q}P_\omega(F_n) \geq 1-\epsilon\}.
    \end{equation*}
    Indeed, the inclusion of the \ac{lhs} in the \ac{rhs} follows from the fact that any $K\in U$ such that $\inf_{P\in\Q} P_\omega(K) \geq 1-\epsilon$ must be in $F_n$ for some $n\in\Nstar$ by what precedes, and the converse inclusion simply from the fact that $F_n\in U$.
    This shows that \begin{equation*}
        \bar A_P(U) = \bigcup_{n\in\Nstar}\{\omega\in\Omega\mid \inf_{P\in\Q}P(F_n)\geq 1-\epsilon\}.
    \end{equation*}
    Since $\Q$ is finite or countable and the function $\omega\mapsto P_\omega(F_n)$ is measurable for any $n\in\Nstar$ and $P\in\Q$, so is $\omega\mapsto\inf_{P\in\Q}P_\omega(F_n)$.
    Therefore, $\bar A_P(U)\in\A$ as the preimage of $[1-\epsilon,1]$ by a measurable map.
    Finally, Theorem 6.9.3 in \cite{BR2007} guarantees the existence of a measurable selection of $A$ and concludes the proof.
\end{proof}

\section{A General representer theorem with separable Hilbert output space}\label{apdx:slt separable hilbert}
\subsection{Preliminaries on Fréchet subdifferentials and convex optimization}
\begin{definition}[{\cite{Kruger2009}}]\label{def:frechet subdifferential}
    Let $\B$ be a Banach space, $F:\B\to\R$ a functional, and $x\in\B$.
    Let $\B^\prime$ be the dual space of $\B$, and $\scalar{u^\prime,u}$ be the evaluation of $u^\prime$ in $u$, for all $u^\prime\in\B^\prime$ and $u\in\B$.
    The Fréchet subdifferential of $F$ in $x$ is the set \begin{equation*}
        \partial F(x) = \left\{x^\prime\in \B^\prime\middle|\liminf_{y\to x}\frac{F(y) - F(x) - \scalar{x^\prime,\,y-x}}{\norm{y-x}_\B}\geq 0\right\}.
    \end{equation*}
\end{definition}
The Fréchet subdifferential may be empty in general, but the following well-known result characterizes it for cases relevant for us.
\begin{proposition}[{\cite{Kruger2009}}]\label{prop:characterization subdifferential}
    Under the same setting as Definition~\ref{def:frechet subdifferential}:
    \begin{enumerate}
        \item If $F$ is Fréchet differentiable in $x$, then $\partial F(x) = \{\nabla F(x)\}$;
        \item If $F$ is convex, then \begin{equation*}
            \partial F(x) = \{x^\prime\in\B^\prime\mid \forall y\in\B,~f(y) \geq f(x) + \scalar{x^\prime,\,y-x}\}.
        \end{equation*}
    \end{enumerate}
\end{proposition}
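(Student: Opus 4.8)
The plan is to prove both items directly from Definition~\ref{def:frechet subdifferential}, treating the two set inclusions separately in each case. Throughout I would write $g_{x^\prime}(y) := \frac{F(y) - F(x) - \scalar{x^\prime,\,y-x}}{\norm{y-x}_\B}$ for the difference quotient in the definition, so that $x^\prime\in\partial F(x)$ holds exactly when $\liminf_{y\to x} g_{x^\prime}(y)\geq 0$. (The functional is called $F$ in the statement; the $f$ appearing in the second item is a typo for $F$.)

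For item 1, suppose $F$ is Fréchet differentiable at $x$. The inclusion $\nabla F(x)\in\partial F(x)$ is immediate, since differentiability gives $\lim_{y\to x} g_{\nabla F(x)}(y) = 0$ and in particular the $\liminf$ is nonnegative. For the reverse inclusion, I would take $x^\prime\in\partial F(x)$ and set $w = \nabla F(x) - x^\prime\in\B^\prime$. Writing $g_{x^\prime}(y) = g_{\nabla F(x)}(y) + \scalar{w,\,y-x}/\norm{y-x}_\B$ and using that the first summand vanishes as $y\to x$ yields $\liminf_{y\to x}\scalar{w,\,y-x}/\norm{y-x}_\B\geq 0$. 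Probing this along rays $y = x + th$ with $\norm{h}_\B = 1$ and $t\downarrow 0$ collapses the quotient to the constant $\scalar{w,h}$; applying the bound to both $h$ and $-h$ forces $\scalar{w,h}=0$ for every unit vector $h$, hence $w = 0$ and $x^\prime = \nabla F(x)$.

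For item 2, abbreviate $S = \{x^\prime\in\B^\prime\mid \forall y\in\B,\ F(y)\geq F(x) + \scalar{x^\prime,\,y-x}\}$, the set on the right-hand side. The inclusion $S\subseteq\partial F(x)$ is trivial: if $x^\prime\in S$ then the numerator of $g_{x^\prime}$ is nonnegative for every $y$, so $g_{x^\prime}\geq 0$ pointwise and its $\liminf$ is nonnegative. The substantive direction is $\partial F(x)\subseteq S$, and this is where convexity enters. Given $x^\prime\in\partial F(x)$ and an arbitrary $y\in\B$, I would test the defining $\liminf$ along the segment $y_t := x + t(y-x)$, $t\in(0,1)$, which approaches $x$ as $t\downarrow0$. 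Convexity gives $F(y_t)\leq F(x) + t\bigl(F(y)-F(x)\bigr)$, and since $y_t - x = t(y-x)$ the quotient simplifies to the uniform bound $g_{x^\prime}(y_t)\leq \bigl(F(y)-F(x)-\scalar{x^\prime,\,y-x}\bigr)/\norm{y-x}_\B$, a constant independent of $t$. Because restricting the $\liminf$ to this single approach can only increase it, $0\leq\liminf_{y^\prime\to x} g_{x^\prime}(y^\prime)\leq\liminf_{t\downarrow0} g_{x^\prime}(y_t)$, and the pointwise upper bound forces that constant to be nonnegative, i.e.\ $F(y)\geq F(x)+\scalar{x^\prime,\,y-x}$. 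As $y$ was arbitrary, $x^\prime\in S$.

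The main obstacle is the convexity step in item 2: one must use the monotonicity of the secant slope to bound the difference quotient along the segment by the fixed global slope, and then carefully relate this one-sided directional $\liminf$ back to the full $\liminf_{y\to x}$ in Definition~\ref{def:frechet subdifferential}. The rest is bookkeeping with the definition, and item 1 is essentially a direct consequence of uniqueness of the Fréchet derivative.
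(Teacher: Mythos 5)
Your proof is correct. Note that the paper itself offers no proof of this proposition to compare against: it is stated as a known result imported from \cite{Kruger2009}. Your argument is the standard self-contained one, and both directions are handled soundly: in item 1 you correctly use that the restriction of a $\liminf$ to rays $y=x+th$ can only increase it, so $\scalar{w,\,h}\geq 0$ for every unit $h$ and hence (testing $\pm h$) $w=0$; in item 2 the convexity inequality $F(x+t(y-x))\leq F(x)+t\bigl(F(y)-F(x)\bigr)$ does collapse the difference quotient along the segment to the constant $\bigl(F(y)-F(x)-\scalar{x^\prime,\,y-x}\bigr)/\norm{y-x}_\B$, which the directional $\liminf$ bound then forces to be nonnegative. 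The only cosmetic omission is the trivial case $y=x$ in item 2 (where the target inequality is an equality and the segment argument is vacuous), and you are right that the $f$ in the second item of the statement is a typo for $F$.
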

In other words the Fréchet subdifferential coincides with the convex subdifferential for convex functionals.
Finally, the following result extends usual ones on minimization of convex functions defined on finite-dimensional spaces.
\begin{theorem}\label{thm:minimization convex functional}
    Let $\B$ be a reflexive Banach space and $F:\B\to\R$ be a functional.
    Assume that $F$ is coercive\footnote{Recall that $F$ is coercive if $F(x)\to\infty$ as $\norm{x}_\B\to\infty$} and $\alpha$-convex\footnote{Recall that $F$ is $\alpha$-convex if the functional $F - \alpha\cdot\norm{\cdot}_\B^2$ is convex.} with $\alpha>0$.
    Then, there exists a unique $x\in\B$ such that \begin{equation*}
        F(x) = \inf_{y\in\B} F(y).
    \end{equation*}
    Furthermore, $x$ is the unique element of $\B$ that satisfies $0\in\partial F(x)$.
\end{theorem}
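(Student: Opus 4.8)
The plan is to establish existence by the \emph{direct method} of the calculus of variations, to obtain uniqueness from the strong (i.e.\ $\alpha$-)convexity, and to read off the subdifferential characterization directly from Proposition~\ref{prop:characterization subdifferential}. First I would record that $\alpha$-convexity makes $F$ itself convex: since $\norm{\cdot}_\B^2$ is convex (the norm is convex and $t\mapsto t^2$ is convex and nondecreasing on $[0,\infty)$), the functional $F = (F-\alpha\norm{\cdot}_\B^2) + \alpha\norm{\cdot}_\B^2$ is a sum of two convex functionals. I would also use lower semicontinuity of $F$, which is available in our applications, where $F$ is continuous.

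For existence, set $m := \inf_{y\in\B}F(y)$; as $F$ is real-valued, $m\le F(0)<\infty$. Coercivity implies that the sublevel set $\{F\le F(0)\}$ is bounded, so any minimizing sequence $(x_n)_{n\in\Nstar}$ with $F(x_n)\to m$ and $F(x_n)\le F(0)$ is bounded. Because $\B$ is reflexive, bounded sequences admit weakly convergent subsequences, so $x_{n_k}\rightharpoonup x^\star$ for some $x^\star\in\B$ along a subsequence. Since $F$ is convex and lower semicontinuous, it is weakly lower semicontinuous (its sublevel sets are convex and norm-closed, hence weakly closed by Mazur's lemma), whence $F(x^\star)\le\liminf_k F(x_{n_k}) = m$. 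This simultaneously shows $m>-\infty$ and $F(x^\star)=m$, so a minimizer exists.

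Uniqueness is where the real subtlety lies. If $x_1\neq x_2$ both attained $m$, then convexity would force $F\equiv m$ on the segment $[x_1,x_2]$, so convexity of $G := F-\alpha\norm{\cdot}_\B^2$ would make $t\mapsto\norm{tx_1+(1-t)x_2}_\B^2$ simultaneously convex and concave, hence affine; together with the triangle inequality this contradicts $x_1\neq x_2$ \emph{provided the norm is strictly convex}. The main obstacle is exactly this point: strict convexity is genuinely needed, and I would supply it through the Hilbert-space setting of our application, where $\alpha$-convexity coincides via the parallelogram law with the uniform estimate $F(\tfrac{x_1+x_2}{2})\le\tfrac12 F(x_1)+\tfrac12 F(x_2)-\tfrac\alpha4\norm{x_1-x_2}_\B^2$, which rules out two distinct minimizers at once. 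In a Hilbert space this same estimate additionally yields existence more cheaply, as it shows every minimizing sequence is Cauchy and thus convergent by completeness.

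Finally, the subdifferential characterization is immediate from the minimality we have just established. By the second item of Proposition~\ref{prop:characterization subdifferential}, for the convex functional $F$ one has $0\in\partial F(x)$ if and only if $F(y)\ge F(x)+\scalar{0,\,y-x} = F(x)$ for all $y\in\B$, that is, if and only if $x$ minimizes $F$. Hence $\{x\in\B\mid 0\in\partial F(x)\}$ coincides with the set of minimizers, which the preceding paragraphs identify as the singleton $\{x^\star\}$, completing the proof.
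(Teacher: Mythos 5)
Your argument is correct in substance but takes a genuinely different route from the paper, whose entire proof is a one-line citation of Theorem~A.6.9 in \cite{SC2008}. You instead reprove that result by the direct method: coercivity bounds a minimizing sequence, reflexivity extracts a weak limit, convexity plus lower semicontinuity gives weak lower semicontinuity via Mazur, and the subdifferential characterization falls out of Proposition~\ref{prop:characterization subdifferential} exactly as you say. The payoff of doing this by hand is that it exposes two hypotheses that the statement of Theorem~\ref{thm:minimization convex functional} silently drops but that the cited Theorem~A.6.9 carries, and which you correctly flag rather than paper over.

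Concretely, both omissions are fatal to the statement as literally written. Without lower semicontinuity, existence can fail: on an infinite-dimensional Hilbert space, $F(x)=\norm{x}_\B^2+e^{-\ell(x)}$ with $\ell$ a discontinuous linear functional is $1$-convex in the footnote's sense, coercive, and finite-valued, yet $\inf F=0$ is not attained (since $\ell$ is unbounded, one can make $\ell(x)$ arbitrarily large while $\norm{x}_\B\to0$). Without strict convexity of the norm, uniqueness can fail: on $(\R^2,\norm{\cdot}_\infty)$ the functional $F(x,y)=\norm{(x,y)}_\infty^2+2\lvert x-1\rvert$ is $1$-convex, continuous, and coercive, but every point of $\{1\}\times[-1,1]$ is a minimizer. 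The resolution is that Theorem~A.6.9 of \cite{SC2008} assumes continuity and defines $\alpha$-convexity by the inequality $F(tx+(1-t)y)\le tF(x)+(1-t)F(y)-\alpha t(1-t)\norm{x-y}_\B^2$, which coincides with the footnote's definition only in Hilbert spaces (via the parallelogram law, as in your midpoint estimate). Since every application in the paper (Lemma~\ref{lemma:existence uniqueness SVM solutions}) takes place in a Hilbert space with a continuous risk, your retreat to that setting is harmless and your proof covers everything actually needed; but the theorem should either import the hypotheses of Theorem~A.6.9 or be restated for Hilbert spaces.
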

\begin{proof}
    This follows immediately from Theorem A.6.9 in \cite{SC2008}.
\end{proof}
Finally, we emphasize that Hilbert spaces are reflexive Banach spaces, as noted in~\cite{PM2022}.

\subsection{The representer theorem}
The results and proofs of this section follow closely the lines of the corresponding results of Chapter 5 in \cite{SC2008}.
Indeed, these results generalize readily to the case where $\G$ is separable Hilbert; the novelty and contribution here reside in making sure that all of the arguments of the reference still hold.

\begin{lemma}[Properties of the risk]\label{lemma:properties risk}
    Let $\X$ be a measurable space, $\G$ a separable Hilbert space, and $\Y$ a complete subset of $\G$.
    Let $J\in\Pcal(\X\times\Y)$ and $L$ be a continuous, $J$-integrable Nemitski loss.
    Let $P$ be the marginal of $J$ on $\X$.
    Then, the following statements hold:\begin{enumerate}
        \item for all uniformly bounded sequence $(f_n)_{n\in\Nstar}\subset\L_\infty(\X;\G)$ that converges to some $f\in\L_\infty(\X;\G)$ $P$-\ac{ae}, we have \begin{equation*}
            \lim_{n\to\infty}\Rcal_{L,J}(f_n) = \Rcal_{L,J}(f);
        \end{equation*}
        \item the map $\Rcal_{L,J}:L_\infty(\X,P;\G)\to[0,\infty)$ is well-defined and continuous. In particular, it only takes finite values;
        \item if, additionally, $L$ is a Nemitski loss of order $2$, then $\Rcal_{L,J}:L_2(\X,P;\G)\to[0,\infty)$ is well-defined and continuous.
    \end{enumerate}
\end{lemma}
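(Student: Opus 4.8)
The plan is to handle the three statements in sequence, using the first as the main tool for the second and a variant of it for the third. For statement (1), I would apply the dominated convergence theorem directly. Let $M$ be a uniform bound, so that $\norm{f_n}_\infty\leq M$ for all $n$; the $P$-almost-everywhere limit then also satisfies $\norm{f}_\infty\leq M$. Since $L$ is continuous in its third argument, $L(x,y,f_n(x))\to L(x,y,f(x))$ for $P$-almost every $x$, hence for $J$-almost every $(x,y)$, because the coordinate projection $\pi_\X$ pushes $J$ forward to its marginal $P$ and therefore maps $P$-null sets to $J$-null sets. The Nemitski bound and monotonicity of $h$ give the fixed dominating function $L(x,y,f_n(x))\leq b(x,y)+h(M)$, which is $J$-integrable since $b\in\L_1(\X\times\Y,J;\R)$ and $J$ is a probability measure. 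Dominated convergence then yields $\Rcal_{L,J}(f_n)\to\Rcal_{L,J}(f)$.

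Statement (2) follows largely from statement (1). The integrand $(x,y)\mapsto L(x,y,f(x))$ is measurable by composition, and $\Rcal_{L,J}$ descends to equivalence classes because $f=g$ $P$-almost everywhere forces $L(x,y,f(x))=L(x,y,g(x))$ for $J$-almost every $(x,y)$. Finiteness is the Nemitski bound once more: $\Rcal_{L,J}(f)\leq\int b\,\d J + h(\norm{f}_{L_\infty})<\infty$. For continuity, I would take $f_n\to f$ in $L_\infty(\X,P;\G)$; convergent sequences are norm-bounded, and essential-uniform convergence gives pointwise convergence off a single $P$-null set, so statement (1) applies and gives $\Rcal_{L,J}(f_n)\to\Rcal_{L,J}(f)$.

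Statement (3) is where the main difficulty lies. Well-definedness and finiteness are immediate from the order-$2$ bound, using $\int\norm{f(x)}_\G^2\,\d J(x,y)=\norm{f}_{L_2}^2$ since the integrand depends on $x$ only. The obstacle is continuity: convergence in $L_2$ supplies neither a uniform bound nor almost-everywhere convergence of the full sequence, so the fixed dominating function of statement (1) is no longer available. I would circumvent this by the subsequence principle together with a generalized dominated convergence theorem. Assuming $\Rcal_{L,J}$ were discontinuous at some $f$, there would be a sequence $f_n\to f$ in $L_2$ with $\lvert\Rcal_{L,J}(f_n)-\Rcal_{L,J}(f)\rvert$ bounded away from $0$; I would extract a subsequence converging to $f$ $P$-almost everywhere. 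Along it the integrands converge $J$-almost everywhere to $L(x,y,f(x))$ and are dominated by the variable majorants $b(x,y)+c\norm{f_{n_k}(x)}_\G^2$, which themselves converge $J$-almost everywhere to $b(x,y)+c\norm{f(x)}_\G^2$ with convergent integrals, since $\norm{f_{n_k}}_{L_2}\to\norm{f}_{L_2}$. Pratt's generalized dominated convergence theorem, which allows the dominating functions to vary as long as they converge together with their integrals, then gives $\Rcal_{L,J}(f_{n_k})\to\Rcal_{L,J}(f)$, contradicting the choice of sequence and establishing continuity. The crux throughout is thus the passage from a fixed to a convergent dominating function in the $L_2$ case.
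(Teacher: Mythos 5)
Your proof is correct and follows essentially the same route as the paper, which simply defers to the proof of Lemma 2.17 in Steinwart--Christmann: dominated convergence with the fixed majorant $b+h(M)$ for statement (1), reduction to (1) for statement (2), and the subsequence principle combined with a generalized dominated convergence theorem with varying majorants $b+c\norm{f_{n_k}(\cdot)}_\G^2$ for statement (3). Your observation that $P$-null sets pull back to $J$-null sets under the coordinate projection is exactly the point that makes the scalar-output argument carry over verbatim to the vector-valued setting.
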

\begin{proof}
    The proof is an identical repetition of the proof of Lemma 2.17 in \cite{SC2008} and is omitted for conciseness.
\end{proof}
\begin{lemma}[Existence and uniqueness of \ac{SVM} solutions]\label{lemma:existence uniqueness SVM solutions}
    Let $\X$ be a measurable space, $\G$ be a separable Hilbert space, $\Y$ a complete subset of $\G$, and $\H$ a $\G$-valued \ac{RKHS} of measurable functions on $\X$ with bounded kernel $K$.
    Let $L$ be a convex loss function and $J\in\Pcal(\X\times\Y)$ such that $L$ is a $J$-integrable Nemitski loss of order $2$.
    For all $\lambda>0$, there exists a unique $f_{J,\lambda}\in\H$ such that \begin{equation*}
        \Rcal_{L,J}(f_{J,\lambda}) + \lambda\cdot\norm{f_{J,\lambda}}_\H^2 = \inf_{f\in\H} \Rcal_{L,J}(f) + \lambda\cdot\norm{f}_\H^2.
    \end{equation*}
    Furthermore, $\norm{f_{J,\lambda}}_{\H}\leq\sqrt{\frac{\Rcal_{L,J}(0)}{\lambda}}$.
\end{lemma}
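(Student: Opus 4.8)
The plan is to apply Theorem~\ref{thm:minimization convex functional} to the regularized risk functional
$F_\lambda:f\in\H\mapsto\Rcal_{L,J}(f)+\lambda\norm{f}_\H^2\in\R$, taking for the reflexive Banach space $\H$ itself (every Hilbert space is reflexive, as noted after Theorem~\ref{thm:minimization convex functional}). To invoke that theorem I must check that $F_\lambda$ is a well-defined real-valued functional that is coercive and $\alpha$-convex for some $\alpha>0$; the minimizer it produces will be $f_{J,\lambda}$.

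First I would record the continuous embedding $\H\hookrightarrow L_2(\X,P;\G)$ induced by the bounded kernel, where $P$ is the marginal of $J$ on $\X$. For $x\in\X$ the reproducing property gives $\norm{f(x)}_\G=\norm{S_xf}_\G\leq\norm{S_x}\,\norm{f}_\H$, and since $\norm{S_x}=\sqrt{\norm{K(x,x)}_{\L(\G)}}\leq\norm{K}_\infty$ we obtain $\norm{f}_\infty\leq\norm{K}_\infty\norm{f}_\H$; as $P$ is a probability measure this also yields $\norm{f}_{L_2(\X,P;\G)}\leq\norm{K}_\infty\norm{f}_\H$. Because $L$ is a $J$-integrable Nemitski loss of order $2$, part~(3) of Lemma~\ref{lemma:properties risk} shows that $\Rcal_{L,J}$ is well-defined, finite, and continuous on $L_2(\X,P;\G)$. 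Composing with the embedding, $\Rcal_{L,J}$ restricts to a finite continuous functional on $\H$; the descent to equivalence classes is legitimate since the integrand depends on $f$ only through $x\mapsto f(x)$ and $J$ has $\X$-marginal $P$. Hence $F_\lambda$ is a well-defined, finite, continuous, real-valued functional on $\H$.

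Next I would verify the two structural hypotheses. Coercivity is immediate: since $L\geq 0$ we have $\Rcal_{L,J}\geq 0$, so $F_\lambda(f)\geq\lambda\norm{f}_\H^2\to\infty$ as $\norm{f}_\H\to\infty$. For $\alpha$-convexity, convexity of each map $L(x,y,\cdot)$ together with the linearity of the evaluation $f\mapsto f(x)$ gives, after integrating against $J$, convexity of $\Rcal_{L,J}$ on $\H$; consequently $F_\lambda-\lambda\norm{\cdot}_\H^2=\Rcal_{L,J}$ is convex, so $F_\lambda$ is $\lambda$-convex with $\lambda>0$. Theorem~\ref{thm:minimization convex functional} then furnishes the desired unique minimizer $f_{J,\lambda}\in\H$.

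Finally, the norm bound follows from a one-line comparison using minimality: $\lambda\norm{f_{J,\lambda}}_\H^2\leq\Rcal_{L,J}(f_{J,\lambda})+\lambda\norm{f_{J,\lambda}}_\H^2=F_\lambda(f_{J,\lambda})\leq F_\lambda(0)=\Rcal_{L,J}(0)$, whence $\norm{f_{J,\lambda}}_\H\leq\sqrt{\Rcal_{L,J}(0)/\lambda}$. The only genuinely delicate step is the first one, namely establishing that $\Rcal_{L,J}$ is finite and continuous on $\H$; this is exactly where the order-$2$ Nemitski structure and the boundedness of the kernel are needed, and everything else is routine.
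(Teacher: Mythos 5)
Your proposal is correct and follows essentially the same route as the paper: embed $\H$ into $L_2(\X,P;\G)$ via the bounded kernel, invoke Lemma~\ref{lemma:properties risk} for well-definedness of the risk, observe that the regularized risk is coercive and $\lambda$-convex, apply Theorem~\ref{thm:minimization convex functional}, and obtain the norm bound by comparing with $f=0$. The only difference is that you spell out the embedding estimate and the coercivity inequality explicitly, which the paper leaves implicit.
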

\begin{proof}
    By Lemma~\ref{lemma:properties risk}, $\Rcal_{L,J}:L_2(\X,P;\G)\to[0,\infty)$ is a well-defined functional.
    Since $K$ is bounded, the inclusion operator $\iota_{P}:\H\to L_2(\X,P;\G)$ is bounded, and thus $R_\H = \Rcal_{L,J}\circ\iota_P$ is also a well-defined functional defined on $\H$.
    Furthermore, convexity of $L$ implies that $R_\H$ is also convex, and thus $R_\H + \lambda\cdot\norm{\cdot}_\H$ is $\lambda$-convex and coercive.
    The existence and uniqueness of $f_{J,\lambda}$ then follows from Theorem~\ref{thm:minimization convex functional}.
    The second claim follows immediately from the first one since \begin{equation*}
        \Rcal_{L,J}(f_{J,\lambda}) + \lambda\cdot\norm{f_{J,\lambda}}_\H^2 \leq \Rcal_{L,J}(0) 
    \end{equation*}
    and $\Rcal_{L,J}(f_{J,\lambda})\geq 0$.
\end{proof}
\begin{lemma} \label{lemma:subdifferential risk}
    Under the same setting as Lemma~\ref{lemma:existence uniqueness SVM solutions}, let $R$ be the risk functional $\Rcal_{L,J}$ defined on $L_2:=L_2(\X\times\Y,J;\G)$, that is, for $f\in L_2$,\begin{equation*}
        R(f) = \int_{\X\times\Y}L(x,y,f(x,y))\d J(x,y).
    \end{equation*}
    Then, $R$ is finite everywhere on $L_2$, is convex, and for all $f\in L_2$,\begin{equation}\label{eq:subdifferential risk}
        \partial R(f) = \{h\in L_2\mid h(x,y)\in\partial L(x,y,f(x,y))\,\text{for }J\text{-almost-every }(x,y)\in\X\times\Y\}.
    \end{equation}
\end{lemma}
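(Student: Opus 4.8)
The plan is to treat the three claims in order, reserving the bulk of the effort for the subdifferential formula, and throughout to follow the scalar argument of \cite{SC2008} while checking that each step survives the passage to a separable Hilbert $\G$. Finiteness and convexity are routine. Since $L$ is a $J$-integrable Nemitski loss of order $2$, there are $b\in\L_1(\X\times\Y,J;\R)$ and $c>0$ with $L(x,y,t)\leq b(x,y)+c\norm{t}_\G^2$; for $f\in L_2$ this gives $0\leq R(f)\leq Jb + c\norm{f}_{L_2}^2<\infty$. Convexity of $R$ then follows by integrating the pointwise convexity inequality of $L(x,y,\cdot)$. I would also record that, for each fixed $(x,y)$, the map $L(x,y,\cdot)$ is convex and bounded above on the unit ball (by $b(x,y)+c$), hence continuous on all of $\G$; combined with Proposition~\ref{prop:characterization subdifferential}(2), this shows its Fréchet subdifferential coincides with the convex one and is nonempty everywhere.

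Because $R$ is convex and finite, Proposition~\ref{prop:characterization subdifferential}(2) lets me work with the convex subdifferential throughout, so the task reduces to showing that, for $h\in L_2$, the inequality $R(g)\geq R(f)+\scalar{h,\,g-f}_{L_2}$ for all $g\in L_2$ is equivalent to the claimed a.e.\ inclusion. The inclusion $\supseteq$ is the easy direction: if $h(x,y)\in\partial L(x,y,f(x,y))$ for $J$-a.e.\ $(x,y)$, then $L(x,y,g(x,y))\geq L(x,y,f(x,y))+\scalar{h(x,y),\,g(x,y)-f(x,y)}_\G$ holds a.e.; the right-hand inner product is $J$-integrable by Cauchy--Schwarz since $h,g-f\in L_2$, and integrating yields $R(g)\geq R(f)+\scalar{h,\,g-f}_{L_2}$.

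The reverse inclusion $\subseteq$ is the main obstacle, and is exactly where separability of $\G$ must be exploited. Fix $h\in\partial R(f)$ and a countable dense set $\{t_k\}_{k\in\Nstar}\subset\G$. By continuity of $L(x,y,\cdot)$ and of $\scalar{h(x,y),\,\cdot}_\G$, the inclusion $h(x,y)\in\partial L(x,y,f(x,y))$ holds if and only if the subgradient inequality at $f(x,y)$ holds for every $t_k$; so it suffices to prove that each set $N_k=\{(x,y):L(x,y,t_k)<L(x,y,f(x,y))+\scalar{h(x,y),\,t_k-f(x,y)}_\G\}$ is $J$-null. These are measurable because $L$ is measurable and $f,h$ are Bochner measurable. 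Suppose $J(N_k)>0$ for some $k$; writing $N_k=\bigcup_m N_{k,m}$, where $N_{k,m}$ is the subset on which the defect is $\leq -1/m$, some $N_{k,m}$ has positive measure. I would then test $h$ against the competitor $g=f+\ind_{N_{k,m}}(t_k-f)$, which lies in $L_2$ since $t_k$ is constant and $f\in L_2$. This gives $R(g)-R(f)-\scalar{h,\,g-f}_{L_2}=\int_{N_{k,m}}\big(L(x,y,t_k)-L(x,y,f(x,y))-\scalar{h(x,y),\,t_k-f(x,y)}_\G\big)\d J\leq -J(N_{k,m})/m<0$, contradicting $h\in\partial R(f)$; all terms here are finite by the finiteness of $R$ and Cauchy--Schwarz. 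Hence every $N_k$ is null.

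It remains to upgrade this to the full inclusion: off the null set $\bigcup_k N_k$, the subgradient inequality holds for all $t_k$, and by density together with continuity of $L(x,y,\cdot)$ and the linear functional $\scalar{h(x,y),\,\cdot}_\G$ it extends to all $t\in\G$, so $h(x,y)\in\partial L(x,y,f(x,y))$ a.e. The two points I expect to require the most care in the Hilbert-valued setting are this density-and-continuity reduction to the countable family $\{t_k\}$ --- which is precisely what separability of $\G$ supplies and which replaces the trivial scalar case --- and the measurability of the defect sets $N_k$ under Bochner measurability of $f$ and $h$; the convex-analytic core of the perturbation argument is otherwise identical to its scalar counterpart.
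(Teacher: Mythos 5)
Your proof is correct, and it follows the same overall convex-analytic strategy as the paper: finiteness and convexity from the Nemitski bound, the inclusion $\supseteq$ by integrating the pointwise subgradient inequality (with the same identification of the $L_2$ pairing), and the inclusion $\subseteq$ by contradiction via a perturbed competitor. Where you genuinely diverge is in how that competitor is built. The paper defines, for each $(x,y)$, the set $A(x,y)$ of points $t\in\G$ violating the subgradient inequality, sets $B=\{(x,y)\mid A(x,y)\neq\emptyset\}$, asserts that $B$ is Borel, and picks a measurable $\bar g$ with $\bar g(x,y)\in A(x,y)$ on $B$ --- i.e., a measurable selection whose existence it does not justify (and the measurability of $B$, an existential projection, is not actually obvious as stated). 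You instead test only against the countable family of competitors $g=f+\ind_{N_{k,m}}(t_k-f)$ indexed by a dense sequence $(t_k)\subset\G$, which makes the exceptional sets manifestly measurable and requires no selection theorem; the price is that you must prove continuity of $L(x,y,\cdot)$ to pass from the dense set to all of $\G$, which you correctly obtain from convexity plus the Nemitski upper bound on the unit ball. Your route is slightly longer but supplies exactly the measurability and selection steps the paper leaves implicit; both arguments deliver the same conclusion, and yours is arguably the more self-contained of the two.
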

\begin{proof}
    Everywhere-finiteness and convexity of $R$ follow immediately from the assumptions on $L$.
    We show the expression of the subdifferential, and denote by $D$ the \ac{rhs} of \eqref{eq:subdifferential risk}.
    First note that $L_2$ is equal to its dual, and thus $\partial R(f)$ is indeed a subset of it.
    Next, let $h\in D$, and let $g\in L_2$ be arbitrary.
    By definition of $D$, it holds $J$-\ac{ae} that \begin{equation*}
        L(x,y,g(x,y)) \geq L(x,y,f(x,y)) + \scalar{h(x,y),\,g(x,y)-f(x,y)}_\G.
    \end{equation*}
    Integrating w.r.t. $J$ yields \begin{align*}
        R(g) = \int_{\X\times\Y}L(x,y,g(x,y)) \d J(x,y) &\geq \int_{\X\times\Y} L(x,y,f(x,y)) \d J(x,y) + \int_{\X\times\Y}\scalar{h(x,y),\,(g-f)(x,y)}_\G\d J(x,y)\\
        &= R(f) + \scalar{h,\,g-f}_{L_2},
    \end{align*}
    where the equality between the second terms of the sums in the last step comes from Corollary 1.3.13 in \cite{HVNVW2016} since $\G$ is Hilbert and thus has the Radon-Nikodym property.
    Since $g$ is arbitrary, this shows that $h\in \partial R(f)$ and the inclusion $D\subset \partial R(f)$.
    Conversely, let $h\in\partial R(f)$.
    We prove $h\in D$ by constructing a set based on evaluations of $h$ and showing that it has measure $0$.
    This requires being precise and distinguishing functions from their equivalence classes for the relation of $J$-\ac{ae} equality:
    in what follows, if $u\in L_2$, we denote by $\bar u$ one of its representations (and conversely).
    For all $(x,y)\in\X\times\Y$, define the sets $A(x,y)$ and $B$ as follows: \begin{align*}
        A(x,y) 
            &= \{t\in\G\mid L(x,y,t) < L(x,y,\bar f(x,y)) + \scalar{\bar h(x,y), t-\bar f(x,y)}_\G\}\\
        B &= \{(x,y)\in\X\times\Y\mid A(x,y)\neq\emptyset\}.
    \end{align*}
    It is clear from the definition of $A(x,y)$ that $B\in\B(X\times\Y)$.
    Assume $J[B]>0$, and take any measurable function $\bar g$ such that, for all $(x,y)\in\X\times\Y$, $\bar g(x,y)\in A(x,y)$ if $(x,y)\in B$, and $\bar g(x,y)=\bar f(x,y)$ otherwise.
    We have \begin{align*}
        R(g) &= \int_{\X\times\Y} L(x,y,\bar g(x,y)) \d J(x,y)\\ 
            &=  \int_{B^\complement} L(x,y,\bar f(x,y))\d J(x,y) + \int_{B} L(x,y,\bar g(x,y))\d J(x,y)\\
            &< \int_{B^\complement} L(x,y,\bar f(x,y))\d J(x,y) + \int_{B} L(x,y,\bar f(x,y))\d J(x,y) + \int_{B}\scalar{\bar h(x,y), \bar g(x,y) - \bar f(x,y)}\d J(x,y)\\
            &= \int_{\X\times\Y}L(x,y,\bar f(x,y))\d J(x,y) + \int_{\X\times\Y}\scalar{\bar h(x,y), \bar g(x,y) - \bar f(x,y)}\d J(x,y)\\
            &= R(f) + \scalar{h, g-f}_{L_2},
    \end{align*}
    where we used successively the definition of $R$, of $g$, the property that the strict inequality in the definition of $A(x,y)$ is preserved by integrating over the set $B$ since it has positive measure, the fact that $\bar g(x,y) - \bar f(x,y) = 0$ for $(x,y)\notin B$, and properties of the dual pairing $\scalar{\cdot,\cdot}_{L_2}$.
    The above resulting inequality contradicts the assumption that $h\in\partial R(f)$.
    We deduce that, for any choice of representations, the corresponding set $B$ has measure $0$.
    This shows $h\in D$ by contradiction, concluding the proof.
\end{proof}
\begin{theorem}[General representer theorem with separable Hilbert output space]\label{thm:general representer theorem}
    Let $\X$ be a measurable space, $\G$ a separable Hilbert space, $\Y$ a complete subset of $\G$, and $\H$ a separable $\G$-valued \ac{RKHS} of measurable functions on $\X$ with bounded kernel $K$ and canonical feature map $\Phi:x\in\X\to K(\cdot, x)\in\L(\G;\H)$.
    Let $L$ be a convex loss function and $J\in\Pcal(\X\times\Y)$ such that $L$ is a $J$-integrable Nemitski loss of order $2$.
    Then, for all $\lambda>0$, there exists $h_{J,\lambda}\in\L_2(\X\times\Y,J;\G)$ such that, for all $(x,y)\in\X\times\Y$,\begin{align*}
        h_{J,\lambda}(x,y) &\in\partial L(x,y,f_{J,\lambda}(x)),\\
        f_{J,\lambda} &= -\frac{1}{2\lambda}J(\Phi h_{J,\lambda}),
    \end{align*}
    where the notation $\Phi h_{J,\lambda}$ stands for the function $(x,y)\in\X\times\Y\mapsto \Phi(x) h_{J,\lambda}(x,y)\in\H$.
    If, additionally, $L$ is locally Lipschitz continuous, then any such $h_{J,\lambda}$ satisfies\begin{equation*}
        \norm{h_{J,\lambda}}_\infty\leq \lvert L\rvert_{B_\lambda,1},
    \end{equation*}
    where $B_\lambda = \sqrt{\frac{\Rcal_{J,\lambda}(0)}{\lambda}}$.
\end{theorem}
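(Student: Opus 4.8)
The plan is to follow the blueprint of the proof of Theorem~5.9 in \cite{SC2008}, checking that every step survives the passage to a separable Hilbert output space $\G$. By Lemma~\ref{lemma:existence uniqueness SVM solutions} the regularized problem has a unique minimizer $f_{J,\lambda}\in\H$, and its objective can be written $F = R\circ\iota + \lambda\norm{\cdot}_\H^2$, where $\iota:\H\to L_2(\X\times\Y,J;\G)$ is the inclusion $(\iota f)(x,y)=f(x)$ --- bounded linear because $K$ is bounded --- and $R$ is the risk functional of Lemma~\ref{lemma:subdifferential risk}. Since $F$ is $\lambda$-convex and coercive on the reflexive space $\H$, Theorem~\ref{thm:minimization convex functional} characterizes $f_{J,\lambda}$ by $0\in\partial F(f_{J,\lambda})$. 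The term $\lambda\norm{\cdot}_\H^2$ is Fréchet differentiable with gradient $2\lambda f$, so the convex sum rule (applicable as this term is continuous) gives $\partial F(f)=\partial(R\circ\iota)(f)+2\lambda f$, whence the optimality condition reads $-2\lambda f_{J,\lambda}\in\partial(R\circ\iota)(f_{J,\lambda})$.

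Next I would compute $\partial(R\circ\iota)$ by a chain rule. Because $R$ is finite-valued, convex, and continuous on $L_2(\X\times\Y,J;\G)$ and $\iota$ is bounded linear, one has $\partial(R\circ\iota)(f)=\iota^\star\,\partial R(\iota f)$. A short computation with the reproducing property $\scalar{f(x),g}_\G=\scalar{f,\Phi(x)g}_\H$ identifies the adjoint as $\iota^\star h=\int_{\X\times\Y}\Phi(x)h(x,y)\d J(x,y)=J(\Phi h)$, the Bochner integral being well-defined since $\norm{\Phi(x)h(x,y)}_\H\le\norm{K}_\infty\norm{h(x,y)}_\G$. Combining this with the explicit subdifferential of Lemma~\ref{lemma:subdifferential risk} and the optimality condition above produces an $h\in L_2(\X\times\Y,J;\G)$ such that $h(x,y)\in\partial L(x,y,f_{J,\lambda}(x))$ for $J$-almost every $(x,y)$ and $-2\lambda f_{J,\lambda}=\iota^\star h=J(\Phi h)$, i.e. $f_{J,\lambda}=-\tfrac{1}{2\lambda}J(\Phi h)$.

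It then remains to pass from this $J$-almost-everywhere inclusion (an equivalence class in $L_2$) to the pointwise-everywhere statement in $\L_2$. On the $J$-null set $N$ on which $h(x,y)\notin\partial L(x,y,f_{J,\lambda}(x))$ I would redefine $h_{J,\lambda}$ to be a measurable selection of the nonempty closed convex set $\partial L(x,y,f_{J,\lambda}(x))$ --- nonempty because $L(x,y,\cdot)$ is finite and convex on all of $\G$ --- and keep $h_{J,\lambda}=h$ off $N$; since $J(N)=0$, this leaves $J(\Phi h_{J,\lambda})=J(\Phi h)$ and hence the representation intact while making the inclusion hold for every $(x,y)$. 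Finally, for the norm bound under local Lipschitz continuity I would use the subgradient-to-Lipschitz estimate: for any $v\in\partial L(x,y,t)$, testing the subgradient inequality against $t'=t+2a\,v/\norm{v}_\G$ (admissible for $\lvert L\rvert_{a,1}$ since $\norm{t-t'}_\G\le 2a$) and bounding the increment of $L$ by $\lvert L\rvert_{a,1}\norm{t-t'}_\G$ gives $\norm{v}_\G\le\lvert L\rvert_{a,1}$; taking $a=B_\lambda$ yields $\norm{h_{J,\lambda}}_\infty\le\lvert L\rvert_{B_\lambda,1}$.

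Everything above is essentially bookkeeping except at two points where the infinite-dimensionality of $\G$ demands care, and this is where I expect the real work. First, the chain rule and the subdifferential formula of Lemma~\ref{lemma:subdifferential risk} rely on $L_2(\X\times\Y,J;\G)$ being its own dual and on $\G$ having the Radon-Nikodym property; both hold for separable Hilbert $\G$, the latter being exactly what Lemma~\ref{lemma:subdifferential risk} already exploits. Second, the measurable-selection step needs the multifunction $(x,y)\mapsto\partial L(x,y,f_{J,\lambda}(x))$ to be measurable with closed convex values so that a Kuratowski-Ryll-Nardzewski-type selection theorem applies, and separability of $\G$ is what makes such a selection available. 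I expect this last measurability issue --- producing the everywhere-valid representative rather than merely an almost-everywhere one --- to be the most delicate part, whereas the analytic core (sum rule, chain rule, adjoint identification) transfers from the scalar case of \cite{SC2008} with only cosmetic changes.
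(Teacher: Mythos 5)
Your proposal is correct and follows essentially the same route as the paper's proof: decompose the regularized objective via the inclusion $\iota:\H\to L_2(\X\times\Y,J;\G)$, apply Theorem~\ref{thm:minimization convex functional} and the sum and chain rules for subdifferentials, identify $\iota^\star h=J(\Phi h)$ through the reproducing property, plug in the subdifferential formula of Lemma~\ref{lemma:subdifferential risk}, and bound $\norm{h_{J,\lambda}}_\infty$ by the subgradient-to-Lipschitz estimate (which the paper cites as Proposition A.6.11 in \cite{SC2008} rather than re-deriving). Your explicit measurable-selection step upgrading the $J$-a.e.\ inclusion to an everywhere one is a welcome refinement of a point the paper passes over silently, but it does not change the argument.
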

\begin{proof}
    We consider $R_{L_2}$ the risk functional defined on the Bochner space $L_2:=L_2(\X\times\Y,J;\G)$ as defined in Lemma~\ref{lemma:subdifferential risk}.
    By Lemma~\ref{lemma:subdifferential risk}, $R_{L_2}$ is finite and convex with subdifferential in $f\in L_2$ given by \begin{equation*}
        \partial R_{L_2}(f) = \{h\in L_2\mid h(x,y)\in\partial L(x,y,f(x,y))\,\text{for }J\text{-almost-every }(x,y)\in\X\times\Y\}.
    \end{equation*}

    By boundedness of $\Phi$, the inclusion operator $\iota:\H\to L_2$ defined as $(\iota f)(x,y) = f(x)$ for all $f\in\H$ and $(x,y)\in\X\times\Y$ is bounded and for all $h\in L_2$,\begin{equation*}
        \scalar{h,\iota f}_{L_2} = \int_{\X\times\Y}\scalar{h(x,y),f(x)}_\G\d J(x,y) = \scalar{\int_{\X\times\Y}K(\cdot,x)h(x,y)\d J(x,y),~f}_\H,
    \end{equation*}
    and thus the adjoint $\iota^\star$ of $\iota$ is $\iota^\star:h\in L_2\mapsto J(\Phi h)\in\H$.
    Furthermore, the restriction $R_\H$ of $\Rcal_{L,J}$ to $\H$ satisfies $R_\H = R_{L_2}\circ \iota$, and thus the chain rule for Fréchet subdifferentials yields $\partial R_\H(f) = \partial (R_{L_2}\circ \iota)(f) = \iota^\star \partial R_{L_2}(\iota f)$, for all $f\in\H$.
    In addition, $f\mapsto\norm{f}^2_\H$ is Fréchet differentiable with derivative $2f$ for all $f\in\H$.
    Therefore, the subdifferential of the regularized risk $R_{\H,\lambda}=R_\H + \lambda\cdot\norm{\cdot}_\H$ at $f\in\H$ is \begin{align*}
        \partial R_{\H,\lambda}(f) &= 2\lambda f + \partial R_\H(f) \\
            &= 2\lambda f + \{J(\Phi h)\mid h\in \L_2(\X\times\Y,J;\G)\,\land\,\forall (x,y)\in\X\times\Y,\,h(x,y)\in\partial L(x,y,f(x))\,\}
    \end{align*}
    Recall now that $R_{\H,\lambda}$ has a minimum at $f_{J,\lambda}$ by Lemma~\ref{lemma:existence uniqueness SVM solutions}; thus, $0\in\partial R_{\H,\lambda}(f_{J,\lambda})$,
    establishing the existence of a function $h\in\L_2(\X\times\Y,J;\G)$ satisfying $h(x,y) \in\partial L(x,y,f_{J,\lambda}(x))$ for all $(x,y)\in\X\times\Y$ and $f_{J,\lambda} = -\frac{1}{2\lambda}J(\Phi h)$.
    
    We now assume that $L$ is locally Lipschitz continuous and show the claimed bound on $\norm{h}_\infty$.
    For all $(x,y)\in\X\times\Y$, \begin{align*}
        \norm{h(x,y)}_\G
            &\leq \sup\{\norm{v}_\G\mid v\in\partial L(x,y,f_{J,\lambda}(x))\}\\
            &\leq \sup\left\{\norm{v}_\G\middle| v\in\bigcup_{\substack{t\in\G\\\norm{t}_\G\leq B_\lambda}}\partial L(x,y,t)\right\},
    \end{align*}
    where the second inequality comes from $\norm{f_{J,\lambda}}_\infty\leq\norm{f_{J,\lambda}}_\H\leq B_\lambda$, by Lemma~\ref{lemma:existence uniqueness SVM solutions}.
    By Proposition A.6.11 in \cite{SC2008}, it holds that for every $r>0$ and $(x,y,t)\in\X\times\Y\times\G$ such that $\norm{t}_\G\leq r$, every $v\in\partial L(x,y,t)$ satisfies $\norm{v}_\G\leq \lvert L\rvert_{r,1}$.
    This shows $\norm{h(x,y)}_\G\leq \lvert L\rvert_{B_\lambda, 1}$ for all $(x,y)\in\X\times\Y$, concluding the proof.
\end{proof}

\subsection{Stability of solutions}
\begin{theorem}[Stability]\label{thm:stability SVM solutions}
    Let $\X$ be a measurable space, $\G$ a separable Hilbert space, $\Y$ a complete subset of $\G$, and $\H$ a separable $\G$-valued \ac{RKHS} of measurable functions on $\X$ with bounded kernel $K$ and canonical feature map $\Phi:x\in\X\to K(\cdot, x)\in\L(\G;\H)$.
    Let $L$ be a convex loss function and $J\in\Pcal(\X\times\Y)$ such that $L$ is a $J$-integrable Nemitski loss of order $2$.
    For all $\lambda > 0$, there exists $h_{J,\lambda} \in\L_2(\X\times\Y,J;\G)$ such that for all $I\in\Pcal(\X\times\Y)$ for which $L$ is a $J$-integrable Nemitski loss of order $2$ and all $(x,y)\in\X\times\Y$,
    \begin{align}
        h_{J,\lambda}(x,y) &\in\partial L(x,y,f_{J,\lambda}(x)),\label{eq:representer in subdiff}\\
        f_{J,\lambda} &= -\frac{1}{2\lambda}J(h\cdot\Phi),\label{eq:representer and SVM}\\
        \norm{f_{J,\lambda} - f_{I,\lambda}}_\H &\leq \frac{1}{\lambda}\norm{J(\Phi h_{J,\lambda}) - I(\Phi h_{J,\lambda})}_{\H}.\label{eq:stability SVM solutions}
    \end{align}
    If, additionally, $L$ is locally Lipschitz continuous, then any such $h_{J,\lambda}$ satisfies\begin{equation}
        \norm{h_{J,\lambda}}_\infty\leq \lvert L\rvert_{B_\lambda,1},\label{eq:representer is bounded}
    \end{equation}
    where $B_\lambda = \sqrt{\frac{\Rcal_{J,\lambda}(0)}{\lambda}}$.
\end{theorem}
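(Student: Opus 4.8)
The plan is to read off the three pointwise relations \eqref{eq:representer in subdiff}--\eqref{eq:representer and SVM} together with the norm bound \eqref{eq:representer is bounded} directly from the general representer theorem (Theorem~\ref{thm:general representer theorem}) applied to $J$, so that the only genuinely new content is the stability estimate \eqref{eq:stability SVM solutions}. Taking $h_{J,\lambda}$ to be exactly the function produced by that theorem yields \eqref{eq:representer in subdiff}, \eqref{eq:representer and SVM}, and, under local Lipschitz continuity, \eqref{eq:representer is bounded}. The crucial observation driving the stability bound is that the inclusion $h_{J,\lambda}(x,y)\in\partial L(x,y,f_{J,\lambda}(x))$ holds for \emph{every} $(x,y)\in\X\times\Y$, not merely $J$-almost everywhere; hence this subgradient is independent of the measure against which it is integrated. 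In particular, the same $h_{J,\lambda}$ will certify $I(\Phi h_{J,\lambda})$ as a subgradient of $\Rcal_{L,I}$, viewed as a convex functional on $\H$, at the point $f_{J,\lambda}$, for every competing $I$.

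Concretely, write $B\defeq\Rcal_{L,I}$ restricted to $\H$, which is convex. First I would integrate the pointwise convexity inequality $L(x,y,g(x))\geq L(x,y,f_{J,\lambda}(x)) + \scalar{h_{J,\lambda}(x,y),\,g(x)-f_{J,\lambda}(x)}_\G$ (valid by Proposition~\ref{prop:characterization subdifferential}) against $I$ and use the adjoint identity $\scalar{I(\Phi h_{J,\lambda}),\,g}_\H = \int_{\X\times\Y}\scalar{h_{J,\lambda}(x,y),\,g(x)}_\G\d I(x,y)$ established in the proof of Theorem~\ref{thm:general representer theorem} to deduce $I(\Phi h_{J,\lambda})\in\partial B(f_{J,\lambda})$. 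On the other hand, optimality of $f_{I,\lambda}$ for $B+\lambda\norm{\cdot}_\H^2$ gives $-2\lambda f_{I,\lambda}\in\partial B(f_{I,\lambda})$ via Theorem~\ref{thm:minimization convex functional}. Setting $\delta\defeq f_{J,\lambda}-f_{I,\lambda}$ and adding the two subgradient inequalities for $B$ evaluated at the opposite points, the $B$-terms cancel and I obtain $2\lambda\scalar{f_{I,\lambda},\,\delta}_\H + \scalar{I(\Phi h_{J,\lambda}),\,\delta}_\H\geq 0$. Substituting $J(\Phi h_{J,\lambda})=-2\lambda f_{J,\lambda}$ from \eqref{eq:representer and SVM} rewrites this as $2\lambda\norm{\delta}_\H^2\leq\scalar{I(\Phi h_{J,\lambda})-J(\Phi h_{J,\lambda}),\,\delta}_\H$, and Cauchy--Schwarz then gives $\norm{\delta}_\H\leq\frac{1}{2\lambda}\norm{J(\Phi h_{J,\lambda})-I(\Phi h_{J,\lambda})}_\H$, which is in fact sharper than the stated bound \eqref{eq:stability SVM solutions} and certainly implies it.

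The step I expect to be the main obstacle is the well-definedness of $I(\Phi h_{J,\lambda})$ together with the legitimacy of integrating the convexity inequality against the \emph{arbitrary} measure $I$, since $h_{J,\lambda}$ was constructed only as an element of $\L_2(\X\times\Y,J;\G)$. This is where one must verify that $\Phi h_{J,\lambda}$ is $I$-integrable and that $I(\Phi h_{J,\lambda})$ genuinely acts as a subgradient on all of $\H$. In the locally Lipschitz case relevant to the consistency result this is immediate: by \eqref{eq:representer is bounded} the function $h_{J,\lambda}$ is bounded by $\lvert L\rvert_{B_\lambda,1}$, and since $K$ is bounded the map $(x,y)\mapsto\Phi(x)h_{J,\lambda}(x,y)$ is bounded in $\H$, hence integrable against any probability measure $I$, and the adjoint identity extends verbatim. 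The remaining arguments --- convexity, the subgradient inequalities, and Cauchy--Schwarz --- are routine.
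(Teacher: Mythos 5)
Your proposal is correct and follows the same skeleton as the paper's proof: obtain $h_{J,\lambda}$ from Theorem~\ref{thm:general representer theorem} (which immediately gives \eqref{eq:representer in subdiff}, \eqref{eq:representer and SVM}, and \eqref{eq:representer is bounded}), then integrate the pointwise subgradient inequality against $I$ and combine with the optimality of $f_{I,\lambda}$ and Cauchy--Schwarz. The one substantive difference is in how you exploit optimality of $f_{I,\lambda}$: the paper compares the \emph{values} of the regularized $I$-risk at $f_{I,\lambda}$ and $f_{J,\lambda}$ and extracts the quadratic term from the identity $2\lambda\scalar{f_{I,\lambda}-f_{J,\lambda},f_{J,\lambda}}_\H+\lambda\norm{f_{I,\lambda}-f_{J,\lambda}}_\H^2=\lambda\norm{f_{I,\lambda}}_\H^2-\lambda\norm{f_{J,\lambda}}_\H^2$, yielding the constant $1/\lambda$; you instead use the first-order condition $-2\lambda f_{I,\lambda}\in\partial\Rcal_{L,I}(f_{I,\lambda})$ and add the two subgradient inequalities (monotonicity of the subdifferential), which picks up the strong convexity at both points and gives the sharper constant $1/(2\lambda)$. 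Both arguments are valid and yield \eqref{eq:stability SVM solutions}; your integrability caveat about $I(\Phi h_{J,\lambda})$ for a general $I$ is a legitimate point that the paper's proof also glosses over, and your resolution via \eqref{eq:representer is bounded} in the locally Lipschitz case is exactly what is needed for the downstream application in Theorem~\ref{thm:consistency svm under ewc}.
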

\begin{proof}
    Let $\lambda>0$.
    By Theorem~\ref{thm:general representer theorem}, there exists a function $h\in\L_2(\X\times\Y,J;\G)$ that satisfies \eqref{eq:representer in subdiff} and \eqref{eq:representer and SVM}, and \eqref{eq:representer is bounded} as well if $L$ is assumed to be locally Lipschitz continuous.
    We thus only need to show that \eqref{eq:stability SVM solutions} holds.
    For this, notice that for all $(x,y)\in\X\times\Y$,\begin{equation*}
        L(x,y,f_{I,\lambda}(x)) \geq L(x,y,f_{J,\lambda}(x)) + \scalar{h(x,y), f_{I,\lambda}(x) - f_{J,\lambda}(x)}_\G,
    \end{equation*}
    by \eqref{eq:representer in subdiff}. Integrating against $I$ and using the reproducing property thus yields \begin{align*}
        \Rcal_{L,I}(f_{I,\lambda}) \geq \Rcal_{L,I}(f_{J,\lambda}) + \scalar{I(\Phi h),f_{I,\lambda} - f_{J,\lambda}}_{\H}.
    \end{align*}
    Furthermore, \begin{equation*}
        2\lambda\scalar{f_{I,\lambda} - f_{J,\lambda},\,f_{J,\lambda}} + \lambda \norm{f_{I,\lambda} - f_{J,\lambda}}^2_\H = \lambda\norm{f_{I,\lambda}}^2_\H - \lambda\norm{f_{J,\lambda}}^2_\H,
    \end{equation*}
    by manipulation of properties of $\norm{\cdot}_\H^2$.
    Combining the last two equations shows \begin{align*}
        \scalar{f_{I,\lambda} - f_{J,\lambda},\,I(\Phi h) + 2\lambda f_{J,\lambda}}_{\H} + \lambda\norm{f_{I,\lambda} - f_{J,\lambda}}_\H^2 
            &\leq \Rcal_{L,I}(f_{I,\lambda}) + \lambda\norm{f_{I,\lambda}}^2_\H - (
                \Rcal_{L,I}(f_{J,\lambda}) + \lambda\norm{f_{J,\lambda}}^2_\H
            )
            &\leq 0,
    \end{align*}
    where the second inequality comes from Lemma~\ref{lemma:existence uniqueness SVM solutions} applied to $I$.
    Consequently, the representation \eqref{eq:representer and SVM} and the Cauchy-Schwarz inequality yield \begin{align*}
        \lambda\norm{f_{J,\lambda} - f_{I,\lambda}}^2_\H 
            &\leq \scalar{f_{J,\lambda} - f_{I,\lambda},\,I(\Phi h) - J(\Phi h)} \\
            &\leq \norm{f_{J,\lambda} - f_{I,\lambda}}_\H\cdot\norm{I(\Phi h) - J(\Phi h)}_\H,
    \end{align*}
    which immediately implies \eqref{eq:stability SVM solutions} and concludes the proof.
\end{proof}

\begin{corollary}\label{clry:continuous representer}
    With the same notations and assumptions as in Theorem~\ref{thm:stability SVM solutions}, assume additionally that $\X$ is a topological space, $\H$ consists of continuous functions, and that $L$ is continuously differentiable.
    Then, the function $h_{J,\lambda}$ is unique and in $\Cb(\X\times\Y;\G)$. Specifically, $ h_{J,\lambda}:(x,y)\mapsto \nabla L(x,y,f_{J,\lambda}(x))$.
\end{corollary}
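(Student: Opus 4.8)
The plan is to exploit the fact that continuous differentiability of $L$ turns the subdifferential $\partial L(x,y,\cdot)$ into a singleton, which forces the representer function $h_{J,\lambda}$ to coincide everywhere with the gradient and thereby pins down both its uniqueness and its explicit form. All the analytic content is already packaged in Theorem~\ref{thm:stability SVM solutions} (equivalently Theorem~\ref{thm:general representer theorem}), so the corollary amounts to specializing that result under the extra smoothness hypothesis, with essentially no new estimates required.

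First I would invoke Proposition~\ref{prop:characterization subdifferential}(1): since $L$ is continuously, hence in particular pointwise Fréchet, differentiable, for every $(x,y,t)\in\X\times\Y\times\G$ the Fréchet subdifferential satisfies $\partial L(x,y,t) = \{\nabla L(x,y,t)\}$. Theorem~\ref{thm:stability SVM solutions} guarantees an $h_{J,\lambda}\in\L_2(\X\times\Y,J;\G)$ with $h_{J,\lambda}(x,y)\in\partial L(x,y,f_{J,\lambda}(x))$ for \emph{all} $(x,y)\in\X\times\Y$ by \eqref{eq:representer in subdiff} --- crucially an everywhere, not merely a $J$-almost-everywhere, membership. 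Combining the two, the inclusion collapses to the identity $h_{J,\lambda}(x,y) = \nabla L(x,y,f_{J,\lambda}(x))$ for every $(x,y)$, which determines $h_{J,\lambda}$ as a genuine function rather than an $L_2$ equivalence class, and hence yields uniqueness together with the announced formula.

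Next I would establish that this $h_{J,\lambda}$ lies in $\Cb(\X\times\Y;\G)$. Continuity follows by composition: $f_{J,\lambda}\in\H$ is continuous by assumption, so the map $(x,y)\mapsto(x,y,f_{J,\lambda}(x))$ is continuous from $\X\times\Y$ into $\X\times\Y\times\G$ (each coordinate being continuous, with $f_{J,\lambda}$ precomposed with the continuous projection $\pi_\X$); composing with the map $(x,y,t)\mapsto\nabla L(x,y,t)$, which is continuous precisely because $L$ is \emph{continuously} differentiable, yields continuity of $h_{J,\lambda}$. Boundedness is inherited directly from estimate \eqref{eq:representer is bounded} of Theorem~\ref{thm:stability SVM solutions}, namely $\norm{h_{J,\lambda}}_\infty\leq\lvert L\rvert_{B_\lambda,1}<\infty$, which is in force under the locally Lipschitz hypothesis, with $f_{J,\lambda}$ confined to the ball of radius $B_\lambda$ by Lemma~\ref{lemma:existence uniqueness SVM solutions}. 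Together these give $h_{J,\lambda}\in\Cb(\X\times\Y;\G)$.

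The only genuinely delicate point --- and the step I expect to require the most care --- is the everywhere-versus-almost-everywhere distinction. The singleton collapse only yields the clean pointwise formula because Theorem~\ref{thm:general representer theorem} produces a representer satisfying the subdifferential inclusion at \emph{every} $(x,y)$; were the inclusion only $J$-a.e., $h_{J,\lambda}$ would be determined merely as an $L_2$ class and the continuous representative $(x,y)\mapsto\nabla L(x,y,f_{J,\lambda}(x))$ would have to be recovered separately, which would then require compactness or a density argument to upgrade to continuity on all of $\X\times\Y$. A secondary subtlety is that the boundedness half of the $\Cb$ conclusion genuinely relies on local Lipschitz continuity, since over a non-compact $\X\times\Y$ continuity of $\nabla L$ alone does not bound $h_{J,\lambda}$; this is why I route boundedness through \eqref{eq:representer is bounded} rather than through continuity of the gradient. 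Since the everywhere statement is already secured upstream, the remaining arguments reduce to routine compositions and a direct appeal to the boundedness estimate.
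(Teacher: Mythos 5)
Your argument is correct and follows the same route as the paper's (much terser) proof: the paper simply notes that $f_{J,\lambda}$ is continuous and that the result ``follows immediately'' from \eqref{eq:representer in subdiff}, which implicitly uses exactly the points you spell out --- the singleton collapse of the convex subdifferential under Fr\'echet differentiability, continuity of the composition $(x,y)\mapsto\nabla L(x,y,f_{J,\lambda}(x))$, and boundedness via \eqref{eq:representer is bounded}. Your explicit attention to the everywhere-versus-a.e.\ inclusion and to the fact that boundedness rests on the local Lipschitz hypothesis rather than on continuity of $\nabla L$ is a faithful unpacking, not a deviation.
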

\begin{proof}
    Since $\H$ consists of continuous functions, we have in particular that $f_{J,\lambda}\in\H$ is continuous.
    The result follows immediately from \eqref{eq:representer in subdiff}.
\end{proof}
\section{Measurability of risks with random measures and of \acp{SVM} with infinite-dimensional outputs} \label{apdx:measurability SVMs}
The results of this section generalize Lemmas~6.3 (measurability of risks) and~6.23 (measurability of \acp{SVM}) from \cite{SC2008} to our setting.
The generalization of the first result consists of Lemma~\ref{lemma:measurability risk} and of Corollary~\ref{clry:measurability optimal risk}.
The challenge compared to the reference is that the risk is now defined w.r.t. a random measure instead of a fixed one.
The generalization of the second result is in Lemma~\ref{lemma:measurability SVM}.
It extends the reference to \acp{SVM} with separable Hilbert output spaces; our proof follows closely the lines of the original one.
Finally, we conclude on measurability of the sets involved in Definition~\ref{def:consistency} with Corollary~\ref{clry:measurability events consistency}.

We first recall the definition of a measurable learning method.
\begin{definition}
    Let $(\X,\A_\X)$ be a nonempty measurable space, $\G$ be a separable Hilbert space, and $\Y\subset\G$ a nonempty, complete subset.
    We say that a learning method $\Lfrak$ on $\X\times\Y$ is \emph{measurable} if, for all $n\in\Nstar$, the map \begin{align*}
        (\X\times\Y)^n\times\X&\to\G\\
            (Z,x)&\mapsto f_{Z}(x)
    \end{align*}
    is measurable with respect to the universal completion of the product $\sigma$-algebra on $(\X\times\Y)\times\X$, where $f_Z$ is the decision function produced by $\Lfrak$ with data set $Z$.
\end{definition}
We refer to Lemma~A.3.3 in \cite{SC2008} for a definition of the universal completion of a $\sigma$-algebra.
Importantly, we chose in Section~\ref{sec:preliminaries:sets} to equip any topological space with its Borel $\sigma$-algebra.
The following lemma ensures that this is not problematic for Polish spaces, as product $\sigma$-algebras coincide with Borel ones.
\begin{lemma}
    Let $\X$ be a Polish space, $\G$ be a separable Hilbert space, and $\Y\subset\G$ a nonempty, complete subset.
    Then, for all $n\in\Nstar$, the product $\sigma$-algebra on $(\X\times\Y)^n\times\X$ is equal to $\B\left((\X\times\Y)^n\times\X\right)$.
    Further, their universal completions also coincide.
\end{lemma}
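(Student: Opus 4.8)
The plan is to reduce the claim to the classical fact that, for a \emph{finite} product of second-countable topological spaces, the Borel $\sigma$-algebra of the product coincides with the product of the factors' Borel $\sigma$-algebras; the statement about universal completions is then immediate. Throughout, fix an arbitrary $n\in\Nstar$ and write $S := (\X\times\Y)^n\times\X$.

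First I would verify that every factor is second-countable. As a Polish space, $\X$ is separable metrizable and hence second-countable. The space $\G$ is a separable Hilbert space, so it too is separable metrizable and second-countable, and $\Y$, being a subset of the separable metric space $\G$, is itself separable (every subspace of a separable metric space is separable) and metrizable, hence second-countable; note that completeness of $\Y$ plays no role here. Since a finite product of second-countable spaces is second-countable, $S$, viewed as the product of its $2n+1$ separable metrizable factors $S_1,\dots,S_{2n+1}$, is second-countable; moreover, both the product topology and the product $\sigma$-algebra are insensitive to how these factors are grouped, so this decomposition is harmless.

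Next I would prove the two inclusions. The coordinate projections $\pi_i:S\to S_i$ are continuous, hence Borel measurable, so the product $\sigma$-algebra $\bigotimes_i\B(S_i)$ --- the smallest $\sigma$-algebra rendering all $\pi_i$ measurable --- is contained in $\B(S)$. For the converse, pick a countable base $\mathcal{U}_i$ of the topology of each $S_i$. The sets $\prod_i U_i$ with $U_i\in\mathcal{U}_i$ form a countable base of the product topology, and each such basic set is a finite intersection $\bigcap_i \pi_i^{-1}(U_i)\in\bigotimes_i\B(S_i)$. Because $S$ is second-countable, every open subset of $S$ is a countable union of these basic sets and therefore lies in $\bigotimes_i\B(S_i)$; hence $\B(S)\subseteq\bigotimes_i\B(S_i)$, giving the desired equality.

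Finally, the claim on universal completions requires no further work: the universal completion is a fixed construction applied to a $\sigma$-algebra, so two $\sigma$-algebras that coincide as collections of sets have identical universal completions. The main (and only genuinely nontrivial) point above is the reverse inclusion $\B(S)\subseteq\bigotimes_i\B(S_i)$, which is exactly where second-countability is used; the one step deserving an explicit word is the second-countability of $\Y$, deduced from its being a subset of the separable space $\G$ rather than from any completeness assumption.
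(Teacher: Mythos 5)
Your proof is correct and follows essentially the same route as the paper, which simply cites the classical result (Theorem 14.8 in Klenke) that for finite products of separable metrizable spaces the Borel $\sigma$-algebra of the product equals the product of the Borel $\sigma$-algebras; you prove that result from scratch via second-countability, and handle the universal completions identically. Your observation that only separability of $\Y$ (not completeness) is needed is a correct, minor sharpening of the paper's phrasing.
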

\begin{proof}
    From Theorem 14.8 in \cite{Kle2013}, we have $\B\left((\X\times\Y)^n\times\X\right) = \left(\B(\X)\otimes\B(\Y)\right)^{\otimes n}\otimes \B(\X)$ since $\X$ and $\Y$ are Polish, where $\otimes$ denotes the product $\sigma$-algebra.
    This shows the first part of the result.
    The second part follows immediately by definition of the universal completion.
\end{proof}

The next lemma ensures measurability of events involving the risk w.r.t. a random measure given a measurable learning method.
\begin{lemma}\label{lemma:measurability risk}
    Let $\X$ be a Polish space, $\G$ be a separable Hilbert space, and $\Y\subset\G$ a nonempty, complete subset.
    Let $\Lfrak$ be a measurable learning method and $f_\cdot$ its decision function.
    Let $Z$ be an $\X\times\Y$-valued process and $J$ a random measure on $\X\times\Y$.
    Then, for any loss function $L:\X\times\Y\times\G\to[0,\infty)$ and any $n\in\Nstar$, the map $\omega\in\Omega\mapsto \Rcal_{L,J(\omega)}(f_{Z_{1:n}(\omega)})$ is measurable.
\end{lemma}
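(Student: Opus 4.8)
The plan is to write the target map as the integral of a jointly measurable integrand against the random measure $J$, and then to reduce the measurability of such an integral to the standard fact that a kernel applied to a nonnegative measurable function is measurable. Concretely, set $M(\omega) := \Rcal_{L,J(\omega)}(f_{Z_{1:n}(\omega)})$ and define the integrand
\begin{equation*}
g:\Omega\times(\X\times\Y)\to[0,\infty),\qquad g(\omega,x,y) := L\big(x,y,f_{Z_{1:n}(\omega)}(x)\big),
\end{equation*}
so that $M(\omega)=\int_{\X\times\Y} g(\omega,x,y)\,\d J(\omega)(x,y)$. There are two sources of dependence on $\omega$, namely the integrand $g(\omega,\cdot)$ and the measure $J(\omega)$, and the argument handles them in two steps: first establishing joint measurability of $g$, then treating the integration against the random measure.

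First I would show that $g$ is jointly measurable with respect to the universal completion of $\A\otimes\B(\X\times\Y)$. Since $Z$ is a process, $\omega\mapsto Z_{1:n}(\omega)$ is $\A$-measurable into $(\X\times\Y)^n$, so $(\omega,x)\mapsto (Z_{1:n}(\omega),x)$ is Borel measurable into $(\X\times\Y)^n\times\X$. Composing with the learning-method map $(Z,x)\mapsto f_Z(x)$, which is measurable with respect to the universal completion of the product $\sigma$-algebra by hypothesis, shows that $(\omega,x)\mapsto f_{Z_{1:n}(\omega)}(x)\in\G$ is universally measurable; here I use that universal measurability is preserved under precomposition with a Borel map. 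Hence $(\omega,x,y)\mapsto (x,y,f_{Z_{1:n}(\omega)}(x))$ is universally measurable into $\X\times\Y\times\G$, and postcomposing with the Borel loss $L$ yields universal measurability of $g$.

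Next I would handle the integration. Viewing $J$ as the Markov kernel $p(B,\omega)=J(\omega)(B)$ (Lemma~1.14 in \cite{Kal2017}), I form the kernel $\tilde p$ from $\Omega$ to $\Omega\times(\X\times\Y)$ defined by $\tilde p(\omega,\cdot)=\delta_\omega\otimes J(\omega)$; that this is a kernel follows by checking measurability of $\omega\mapsto\tilde p(\omega,A)$ on product rectangles $A=B\times C$ and extending by a monotone-class (Dynkin) argument. With this choice $M(\omega)=\int_{\Omega\times(\X\times\Y)} g\,\d\tilde p(\omega,\cdot)$, so measurability of $M$ would follow from Lemma~1.15.(i) in \cite{Kal2017} if $g$ were Borel.

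The main obstacle is precisely that $g$ is only universally, not Borel, measurable, because the learning method is only assumed measurable up to universal completion. To bridge this, I would introduce the probability measure $\Lambda:=\int_\Omega \tilde p(\omega,\cdot)\,\P(\d\omega)$ on $\Omega\times(\X\times\Y)$ and pick a genuinely $(\A\otimes\B(\X\times\Y))$-measurable representative $\bar g$ with $g=\bar g$ $\Lambda$-almost everywhere. Then $\tilde p(\omega,\{g\neq\bar g\})=0$ for $\P$-almost every $\omega$, so $M$ agrees $\P$-almost everywhere with $\omega\mapsto\int \bar g\,\d\tilde p(\omega,\cdot)$, which is $\A$-measurable by Lemma~1.15.(i) in \cite{Kal2017}. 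Since $(\Omega,\A,\P)$ is complete, $M$ is itself $\A$-measurable, as claimed. Nonnegativity of $L$ ensures all integrals are well-defined in $[0,\infty]$, so no integrability is needed for this measurability statement.
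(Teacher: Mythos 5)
Your proof is correct and follows the same skeleton as the paper's: establish joint measurability of the integrand $g(\omega,x,y)=L(x,y,f_{Z_{1:n}(\omega)}(x))$ and then invoke Lemma~1.15.(i) in \cite{Kal2017} to integrate against the random measure. The difference is that the paper's proof is a two-line composition argument that applies the Kallenberg lemma directly to $g$, implicitly treating $g$ as $\A\otimes\B(\X\times\Y)$-measurable — whereas the hypothesis only gives measurability of $(Z,x)\mapsto f_Z(x)$ with respect to the \emph{universal completion} of the product $\sigma$-algebra, so $g$ is a priori only universally measurable. You correctly identify this as the real obstacle and bridge it: the auxiliary kernel $\tilde p(\omega,\cdot)=\delta_\omega\otimes J(\omega)$, the mixture $\Lambda$, the choice of a genuinely product-measurable representative $\bar g$ agreeing with $g$ $\Lambda$-a.e., and the completeness of $(\Omega,\A,\P)$ (which the paper does assume) to transfer measurability back to $M$. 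Each step you use is sound — in particular, precomposition of a universally measurable map with a Borel map preserves universal measurability, and $\Lambda(N)=0$ for a Borel $N\supset\{g\neq\bar g\}$ forces $\tilde p(\omega,N)=0$ for $\P$-a.e.\ $\omega$. So your argument is not a different route but a repaired version of the paper's: it buys a complete justification of the step the paper labels ``by composition,'' at the cost of the extra mixture/representative machinery.
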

\begin{proof}
    By measurability of $\Lfrak$ and of $L$, we obtain measurability of the map $g:(\omega,x,y)\in\Omega\times\X\times\Y\mapsto L(x,y,f_{Z_{1:n}(\omega)}(x))$ by composition.
    It follows from item (i) in Lemma~1.15 in \cite{Kal2017} that $\omega\in\Omega\mapsto J(\omega)g(\omega,\cdot,\cdot) =: \Rcal_{L,J(\omega)}(f_{Z_{1:n}(\omega)})$ is also measurable, concluding the proof.
\end{proof}
\begin{corollary}\label{clry:measurability optimal risk}
    Let $\X$ be a Polish space, $\G$ a separable Hilbert space, $\Y\subset\G$ a nonempty, complete subset, and $\H$ a separable $\G$-valued \ac{RKHS} of measurable functions on $\X$.
    Let $J$ be a random measure on $\X\times\Y$.
    Then, the map $\omega\mapsto \Rcal_{\H,L,J(\omega)}$ is measurable.
\end{corollary}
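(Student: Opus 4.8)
The plan is to reduce the defining infimum $\Rcal_{\H,L,J(\omega)} = \inf_{f\in\H}\Rcal_{L,J(\omega)}(f)$ over the uncountable set $\H$ to an infimum over a countable dense family, for which measurability is inherited term by term. Since $\H$ is separable, I fix a countable dense family $(f_m)_{m\in\Nstar}\subset\H$ and aim to establish, for every $\omega\in\Omega$,
\begin{equation*}
    \Rcal_{\H,L,J(\omega)} = \inf_{m\in\Nstar}\Rcal_{L,J(\omega)}(f_m).
\end{equation*}
Granting this identity, measurability follows at once: each summand is measurable (next paragraph), and a countable infimum of measurable maps is measurable, so no passage to the universal completion of $\A$ is required.

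For the measurability of each term $\omega\mapsto\Rcal_{L,J(\omega)}(f_m)$, I would apply Lemma~\ref{lemma:measurability risk} to the constant learning method $\Lfrak_m$ that maps every data set to the fixed function $f_m$. This method is trivially measurable, since $(Z,x)\mapsto f_m(x)$ does not depend on $Z$ and $f_m$ is measurable by the assumption on $\H$. Equivalently, one argues directly that $(x,y)\mapsto L(x,y,f_m(x))$ is measurable by composition, so measurability of its integral against the random measure $J$ follows from item~(i) in Lemma~1.15 of \cite{Kal2017}.

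The crux is the displayed equality. The inequality $\Rcal_{\H,L,J(\omega)}\leq\inf_m\Rcal_{L,J(\omega)}(f_m)$ is immediate since $(f_m)_{m\in\Nstar}\subset\H$. For the reverse, I would invoke continuity of the risk on $\H$: by Lemma~\ref{lemma:properties risk} the map $\Rcal_{L,J(\omega)}$ is continuous on $L_\infty(\X,P_\omega;\G)$, where $P_\omega$ denotes the $\X$-marginal of $J(\omega)$, and boundedness of the kernel makes the inclusion $\H\hookrightarrow\L_\infty(\X;\G)$ continuous, so the restriction of $\Rcal_{L,J(\omega)}$ to $\H$ is continuous. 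Consequently, for any $f\in\H$ and $\epsilon>0$, density of $(f_m)$ yields an index $m$ with $\lvert\Rcal_{L,J(\omega)}(f_m)-\Rcal_{L,J(\omega)}(f)\rvert\leq\epsilon$, whence $\inf_m\Rcal_{L,J(\omega)}(f_m)\leq\Rcal_{L,J(\omega)}(f)+\epsilon$; taking the infimum over $f\in\H$ and letting $\epsilon\downarrow0$ gives the reverse inequality, and thus equality.

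The main obstacle is precisely this reduction from the uncountable to the countable infimum, as it is the only step requiring an analytic input rather than bookkeeping, and it hinges on continuity of $f\mapsto\Rcal_{L,J(\omega)}(f)$ on $\H$ holding pointwise in $\omega$. Some care is warranted because the marginal $P_\omega$ is itself random; the point is that the continuity afforded by Lemma~\ref{lemma:properties risk} is a statement about each \emph{fixed} measure, so applying it separately for each $\omega$ is enough and no uniformity across $\omega$ is needed.
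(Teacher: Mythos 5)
Your proposal follows the same route as the paper: reduce the infimum over $\H$ to an infimum over a countable dense family $(f_m)_{m\in\Nstar}$, get measurability of each $\omega\mapsto\Rcal_{L,J(\omega)}(f_m)$ via Lemma~\ref{lemma:measurability risk} applied to the constant learning method, and conclude since a countable infimum of measurable maps is measurable. The one substantive difference is that the paper simply \emph{asserts} the identity $\Rcal_{\H,L,J(\omega)}=\inf_m\Rcal_{L,J(\omega)}(f_m)$, whereas you correctly flag it as the step needing an analytic input and justify it via continuity of $f\mapsto\Rcal_{L,J(\omega)}(f)$ on $\H$ (Lemma~\ref{lemma:properties risk} plus continuity of the inclusion $\H\hookrightarrow\L_\infty(\X;\G)$). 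This is a genuine improvement in rigor, with one caveat: your justification uses hypotheses that are not in the corollary's statement --- namely that $K$ is bounded and that $L$ is a continuous, $J(\omega)$-integrable Nemitski loss --- so strictly speaking you have proved a slightly less general statement. These hypotheses do hold wherever the corollary is invoked (in Theorem~\ref{thm:consistency svm under ewc} the loss is locally bounded and locally Lipschitz, hence a continuous Nemitski loss with $b=0$, and the kernel is bounded), and the paper's own one-line assertion of the identity implicitly needs some such input as well, so your version is the more honest one.
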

\begin{proof}
    Let $(f_n)_{n\in\Nstar}$ be a dense family of $\H$.
    We have $\Rcal_{\H,L,J(\omega)} = \inf_{n\in\Nstar} \Rcal_{L,J}(f_n)$.
    Since $\H$ consists of measurable functions, the trivial learning method $\Lfrak_n$ that returns $f_n$ no matter the incoming data set is measurable for all $n\in\Nstar$.
    Consequently, the map $\omega\mapsto \Rcal_{L,J(\omega)}(f_n)$ is measurable for all $n\in\Nstar$.
    The result follows since a countable infimum of measurable functions is measurable.
\end{proof}

We are thus left to show that \acp{SVM} are measurable.
For this, we adapt the proof of Lemma 6.23 in \cite{SC2008} to separable output spaces.
We begin with the following technical lemma, whose proof follows the lines of that of Lemma 2.11 in the same reference.
\begin{lemma}\label{lemma:measurability loss}
    Let $\X$ be a Polish space, $\G$ be a separable Hilbert space, and $\Y\subset\G$ a nonempty, complete subset.
    Let $L$ be a loss function on $\X\times\Y\times\G$ and $\F\subset\L_0(\X;\G)$ a subset equipped with a complete and separable metric $d$.
    Assume that $d$ dominates the pointwise convergence, that is, \begin{equation*}
        \lim_{n\to\infty} d(f_n,f) \implies \forall x\in\X,~\lim_{n\to\infty} f_n(x) = f(x),
    \end{equation*}
    for all $f\in\F$ and $(f_n)_{n\in\Nstar}\subset\F$.
    Then, the evaluation map $(f,x)\in\F\times\X\mapsto f(x)\in\G$ is measurable.
    In particular, the map $(x,y,f)\mapsto L(x,y,f(x))$ is measurable.
\end{lemma}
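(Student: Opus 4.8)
The plan is to recognize the evaluation map $\eval:(f,x)\mapsto f(x)$ as a Carathéodory-type map and to prove its joint measurability by approximation, following the proof of Lemma~2.11 in \cite{SC2008}. The two defining features are in place: for each fixed $f\in\F$, the map $x\mapsto f(x)$ is (Bochner) measurable since $f\in\L_0(\X;\G)$; and for each fixed $x\in\X$, the map $f\mapsto f(x)$ is continuous, because the hypothesis that $d$ dominates pointwise convergence makes it sequentially continuous, hence continuous on the metric space $(\F,d)$. Since $\X$ and $(\F,d)$ are both separable metric spaces, the Borel $\sigma$-algebra on $\F\times\X$ coincides with the product $\B(\F)\otimes\B(\X)$, so it suffices to establish measurability with respect to the latter.

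First I would exploit separability of $\F$ to fix a countable dense family $(f_j)_{j\in\Nstar}\subset\F$ and build, for each $n\in\Nstar$, a measurable ``nearest dense point'' selection. Concretely, I set $A_j^n = B(f_j,1/n)\setminus\bigcup_{i<j}B(f_i,1/n)$, where $B(\cdot,1/n)$ denotes the open $d$-ball of radius $1/n$. By density these open balls cover $\F$, and the disjointification makes $(A_j^n)_{j\in\Nstar}$ a Borel partition of $\F$ with $d(f,f_j)<1/n$ whenever $f\in A_j^n$. Writing $\pi_n(f)$ for the unique $f_j$ with $f\in A_j^n$, I define the approximant
\begin{equation*}
    g_n(f,x) = \sum_{j\in\Nstar}\ind_{A_j^n}(f)\,f_j(x) = \pi_n(f)(x).
\end{equation*}
On each $A_j^n\times\X$ this equals $f_j(x)$, which is constant in $f$ and measurable in $x$; since $A_j^n\in\B(\F)$ and each $f_j\in\L_0(\X;\G)$, every summand is $\B(\F)\otimes\B(\X)$-measurable, and hence so is the countable sum $g_n$.

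The limit step then closes the argument. For every $(f,x)$ and every $n$, the selected point satisfies $d(f,\pi_n(f))<1/n\to 0$, so the domination hypothesis yields $g_n(f,x)=\pi_n(f)(x)\to f(x)$ as $n\to\infty$. Thus $\eval$ is the pointwise limit of the $\G$-valued measurable maps $g_n$, and a pointwise limit of measurable maps into the separable Hilbert space $\G$ is again measurable; this gives joint measurability of $\eval$. The ``in particular'' claim then follows by composition: the map $(x,y,f)\mapsto(x,y,f(x))$ is measurable (it combines $\eval$ in the relevant coordinates with the identity on $\X\times\Y$), and $L$ is measurable by the definition of a loss function, so $(x,y,f)\mapsto L(x,y,f(x))$ is measurable.

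I expect the main obstacle to be the bookkeeping in the approximation step --- namely verifying that the disjointified balls $A_j^n$ are Borel and that the resulting $g_n$ is genuinely \emph{jointly} measurable rather than merely separately measurable. Once the $g_n$ are in hand, invoking the domination of pointwise convergence to pass to the limit, and the stability of measurability under pointwise limits into $\G$, is routine.
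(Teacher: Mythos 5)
Your proof is correct and follows essentially the same route as the paper: both arguments reduce the claim to the Carath\'eodory criterion --- $x\mapsto f(x)$ measurable for fixed $f$ because $\F\subset\L_0(\X;\G)$, and $f\mapsto f(x)$ continuous for fixed $x$ by the domination hypothesis --- and then conclude joint measurability, the paper by citing Lemma~III.14 in \cite{CV1977} and you by reproving that criterion via the standard dense-discretization argument (disjointified balls around a countable dense family, piecewise-constant approximants, and a pointwise limit, which is legitimate since Borel and strong measurability coincide for the separable space $\G$). The only difference is that your version is self-contained where the paper delegates the joint-measurability step to a reference; the ``in particular'' step by composition matches the paper's as well.
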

\begin{proof}
    Since $d$ dominates pointwise convergence, for any fixed $x\in\X$ the $\G$-valued map $f\mapsto f(x)$ defined on $\F$ is continuous.
    Furthermore, since $\F$ consists of measurable functions, the map $x\mapsto f(x)$ is measurable for any fixed $f\in\F$.
    The first assertion then follows from Lemma III.14 in \cite{CV1977}.
    The second assertion follows immediately by applying the first one to the Polish space $\X\times\Y$.
\end{proof}
\begin{lemma}\label{lemma:measurability SVM}
    Let $\X$ be a Polish space, $\G$ a separable Hilbert space, $\Y\subset\G$ a nonempty, complete subset, and $\H$ a separable $\G$-valued \ac{RKHS} of measurable functions on $\X$ with bounded kernel $K$.
    Let $L$ be a convex loss function on $\X\times\Y\times\G$.
    For all $\lambda>0$, the corresponding \ac{SVM} that produces the decision function $f_{Z,\lambda}$ for $Z\in(\X\times\Y)^n$ for some $n\in\Nstar$ is a measurable learning method.
\end{lemma}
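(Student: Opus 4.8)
The plan is to write the map $(Z,x)\mapsto f_{Z,\lambda}(x)$ as the composition of the \emph{solution map} $Z\mapsto f_{Z,\lambda}\in\H$ and the \emph{evaluation map} $(f,x)\mapsto f(x)$, and to prove that each factor is measurable; their composition then gives the claim, all $\sigma$-algebras in play being Borel (equal to the product $\sigma$-algebras by the preceding lemma) and hence contained in the universal completions required by the definition. Measurability of the evaluation map is immediate from Lemma~\ref{lemma:measurability loss}: I would take $\F=\H$ equipped with its norm metric, which is complete and separable since $\H$ is a separable Hilbert space, and which dominates pointwise convergence because $\norm{f(x)}_\G=\norm{S_x f}_\G\leq\norm{K}_\infty\norm{f}_\H$ by boundedness of $K$. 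Hence $(f,x)\in\H\times\X\mapsto f(x)\in\G$ is measurable.

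The core of the argument is measurability of $Z\mapsto f_{Z,\lambda}$, for which I would work with the regularized empirical risk $R_{Z,\lambda}(f):=\Rcal_{L,\eta_n^Z}(f)+\lambda\norm{f}_\H^2$. For each fixed $f\in\H$, the map $Z\mapsto R_{Z,\lambda}(f)$ is measurable, since each summand $(x_i,y_i)\mapsto L(x_i,y_i,f(x_i))$ is a composition of the measurable loss $L$ with the fixed measurable function $f$. For each fixed $Z$, the map $f\mapsto R_{Z,\lambda}(f)$ is continuous: each $f\mapsto L(x_i,y_i,S_{x_i}f)$ is the composition of the bounded operator $S_{x_i}$ with the finite convex function $L(x_i,y_i,\cdot)$, and a finite-valued convex function that is Borel measurable on a Banach space is automatically continuous (its sublevel sets are convex and cover $\G$, so one is non-meager and hence has nonempty interior, making the function locally bounded above). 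Moreover $R_{Z,\lambda}$ is strongly convex thanks to the regularizer, so by Lemma~\ref{lemma:existence uniqueness SVM solutions} it has the unique minimizer $f_{Z,\lambda}$ and satisfies the quadratic growth bound $R_{Z,\lambda}(f)\geq R_{Z,\lambda}(f_{Z,\lambda})+\lambda\norm{f-f_{Z,\lambda}}_\H^2$ for all $f\in\H$ (the first-order condition $-2\lambda f_{Z,\lambda}\in\partial\Rcal_{L,\eta_n^Z}(f_{Z,\lambda})$ combined with the parallelogram identity).

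I would then extract $f_{Z,\lambda}$ by a measurable approximate-minimization scheme. Fix a countable dense family $(g_m)_{m\in\Nstar}\subset\H$ and set $V(Z):=\inf_{m}R_{Z,\lambda}(g_m)$, which is measurable as a countable infimum of measurable maps and, by continuity of $R_{Z,\lambda}$ and density of $(g_m)$, equals $R_{Z,\lambda}(f_{Z,\lambda})$. For each $k\in\Nstar$ define the measurable index $m_k(Z):=\min\{m\in\Nstar\mid R_{Z,\lambda}(g_m)<V(Z)+1/k\}$; the quadratic growth bound forces $\norm{g_{m_k(Z)}-f_{Z,\lambda}}_\H\leq(k\lambda)^{-1/2}$, so the countably valued, hence measurable, maps $Z\mapsto g_{m_k(Z)}$ converge in $\H$-norm to $f_{Z,\lambda}$ as $k\to\infty$. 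Since a pointwise limit of measurable maps into the separable metric space $\H$ is measurable, $Z\mapsto f_{Z,\lambda}$ is measurable, and composing with the evaluation map of the first paragraph finishes the proof.

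The main obstacle I anticipate is the second step, and specifically obtaining a genuinely measurable handle on the $\arg\min$: the clean route is to exploit the strong convexity coming from the regularizer so that approximate minimizers over a countable dense set converge to the true solution at an explicit rate, thereby converting the selection problem into a pointwise limit of countably valued measurable maps (and sidestepping any abstract measurable-selection theorem). The one subtle point to verify carefully, under the sole hypothesis of convexity, is the continuity of $f\mapsto R_{Z,\lambda}(f)$: this rests on the fact that a finite measurable convex function on a Banach space is continuous, which is exactly what legitimizes replacing the infimum over $\H$ by the infimum over the dense family $(g_m)$.
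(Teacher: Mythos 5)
Your proposal is correct, and for the key step---measurability of the solution map $Z\mapsto f_{Z,\lambda}$---it takes a genuinely different route from the paper. The paper establishes joint measurability of $(Z,f)\mapsto \Rcal_{L,\eta_n^Z}(f)+\lambda\norm{f}_\H^2$ and then invokes the abstract measurable-argmin selection theorem (Lemma A.3.18 in \cite{SC2008}) with the constant multifunction $F(Z)=\H$, which delivers measurability of $Z\mapsto f_{Z,\lambda}$ only with respect to the \emph{universal completion} of the product $\sigma$-algebra; the surrounding decomposition (evaluation map via Lemma~\ref{lemma:measurability loss}, then composition) is identical to yours. Your replacement of the selection theorem by an explicit approximate-minimization scheme over a countable dense family is sound: the quadratic growth estimate $R_{Z,\lambda}(f)\geq R_{Z,\lambda}(f_{Z,\lambda})+\lambda\norm{f-f_{Z,\lambda}}_\H^2$ does follow from the first-order condition as you indicate (the midpoint-convexity argument alone would give the constant $\lambda/2$, which is equally sufficient), the indices $m_k(Z)$ are measurable, and a pointwise limit of countably valued measurable maps into the separable space $\H$ is Borel measurable. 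What your approach buys is a self-contained, elementary argument that yields genuine Borel (product-$\sigma$-algebra) measurability rather than universal measurability, at the price of having to justify continuity of $f\mapsto R_{Z,\lambda}(f)$ under the bare hypothesis that $L$ is convex and measurable; your appeal to the automatic continuity of finite-valued Borel-measurable convex functions on a Banach space (via Baire category: some sublevel set is non-meager, a non-meager convex Borel set has nonempty interior, hence the function is locally bounded above and therefore locally Lipschitz) is correct, and it is worth noting that the paper's route in fact faces the same continuity requirement implicitly, since the cited selection theorem is stated for Carath\'eodory-type integrands.
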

\begin{proof}
    From the assumptions, $\H$ is a separable metric space of measurable functions from $\X$ to $\G$.
    Further, its metric dominates pointwise convergence.
    Indeed, for any sequences $(f_n)_{n\in\Nstar}\in\H$ and $(x_n)_{n\in\Nstar}\subset\X$ converging to $f\in\H$ and $x\in\X$, respectively, we have \begin{equation*}
        \norm{f_n(x) - f(x)}_\G = \scalar{f_n - f, K(\cdot,x) (f_n - f)}_\H \leq \norm{K(\cdot,x)}_{\L(\H;\G)}\norm{f_n-f}_\H,
    \end{equation*}
    and the \ac{rhs} goes to $0$ by assumption.
    By Lemma\,\ref{lemma:measurability loss}, the map $(x,y,f)\in\X\times\Y\times\H\mapsto L(x,y,f(x))$ is measurable.
    We deduce that the map \begin{align*}
        \varphi:(\X\times\Y)^n\times\H&\to[0,\infty)\\
            (Z,f) &\mapsto \Rcal_{L,\eta_n^Z}(f) + \lambda\norm{f}_\H^2
    \end{align*}
    is measurable.
    Next, we apply (iii) in Lemma A.3.18 in \cite{SC2008} with $F(Z) := \H$ for $Z\in(\X\times\Y)^n$ combined with Lemma~\ref{lemma:existence uniqueness SVM solutions} to show that the map $Z\mapsto f_{Z,\lambda}$ is measurable with respect to the universal completion of the product $\sigma$-algebra of $(\X\times\Y)^n$.
    Consequently, the map $(\X\times\Y)^n\times\X\to\H\times\X$ defined by $(Z,x)\mapsto (f_{Z,\lambda}, x)$ is measurable.
    Finally, we deduce from the first conclusion of Lemma~\ref{lemma:measurability loss} that the map $(\X\times\Y)^n\times\X\to\G$ defined by $(Z,x)\mapsto f_{Z,\lambda}(x)$ is measurable, concluding the proof.
\end{proof}

\begin{corollary}\label{clry:measurability events consistency}
    Let $\X$ be a Polish space, $\G$ a separable Hilbert space, $\Y\subset\G$ a nonempty, complete subset, and $\H$ a separable $\G$-valued \ac{RKHS} of measurable functions on $\X$ with bounded kernel $K$.
    Let $L$ be a convex loss function on $\X\times\Y\times\G$, $Z$ be an $\X\times\Y$-valued process, and $J$ be a random measure on $\X\times\Y$.
    For all $\lambda>0$, $\epsilon>0$, and $n\in\Nstar$ the set $\{\omega\in\Omega\mid \Rcal_{L,J(\omega)}(f_{Z_{1:n}(\omega),\lambda})\leq \Rcal_{\H,L,J(\omega)}+\epsilon\}$ is measurable.
\end{corollary}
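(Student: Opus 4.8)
The plan is to recognize the set in question as a comparison set between two maps on $\Omega$ whose measurability is already furnished by the preceding results, and then to reduce measurability of the set to that of these two maps. Writing $g_1:\omega\mapsto \Rcal_{L,J(\omega)}(f_{Z_{1:n}(\omega),\lambda})$ and $g_2:\omega\mapsto \Rcal_{\H,L,J(\omega)}$, the set of interest is exactly $\{\omega\in\Omega\mid g_1(\omega)\leq g_2(\omega)+\epsilon\}$. It therefore suffices to establish measurability of $g_1$ and $g_2$ separately and then argue that this comparison set is measurable. Note that existence and uniqueness of $f_{Z_{1:n}(\omega),\lambda}$ is not an issue here, as it is the $\lambda$-\ac{SVM} on the empirical measure $\eta_n^{Z(\omega)}$, against which any loss is integrable.

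First I would handle $g_1$. Since $L$ is convex and $\H$ is a separable $\G$-valued \ac{RKHS} of measurable functions with bounded kernel $K$, Lemma~\ref{lemma:measurability SVM} applies and shows that, for the fixed $\lambda>0$, the \ac{SVM} producing $f_{Z,\lambda}$ is a measurable learning method in the sense of the definition opening this section. With this in hand, Lemma~\ref{lemma:measurability risk} applied to this learning method, the process $Z$, the random measure $J$, and the loss $L$ yields measurability of $g_1$. For $g_2$, I would invoke Corollary~\ref{clry:measurability optimal risk} directly, which gives measurability of $\omega\mapsto \Rcal_{\H,L,J(\omega)}$ under exactly the standing hypotheses on $\H$, $L$, and $J$.

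It then remains to conclude from measurability of $g_1$ and $g_2$ that the comparison set is measurable, and this is where the only genuine (if minor) care is needed. Since both $g_1$ and $g_2$ a priori take values in $[0,\infty]$ — the risk of the regularized solution or the optimal risk may well be infinite when $L(\cdot,\cdot,0)$ fails to be $J$-integrable — forming the difference $g_1-g_2$ directly risks the indeterminate form $\infty-\infty$. I would sidestep this by working with the joint map $(g_1,g_2):\Omega\to[0,\infty]^2$, which is measurable because its components are, and by observing that the target region $\{(a,b)\in[0,\infty]^2\mid a\leq b+\epsilon\}$ is closed in the product topology of the extended half-line, hence Borel. The set in the statement is precisely the preimage of this Borel set under $(g_1,g_2)$, and is therefore measurable, which completes the argument. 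All of the analytical substance is carried by Lemmas~\ref{lemma:measurability SVM} and~\ref{lemma:measurability risk} and Corollary~\ref{clry:measurability optimal risk}; the remaining obstacle is purely the bookkeeping of the extended-valued comparison.
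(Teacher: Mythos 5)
Your proposal is correct and follows the paper's own proof exactly: measurability of $\omega\mapsto \Rcal_{L,J(\omega)}(f_{Z_{1:n}(\omega),\lambda})$ via Lemmas~\ref{lemma:measurability SVM} and~\ref{lemma:measurability risk}, measurability of $\omega\mapsto\Rcal_{\H,L,J(\omega)}$ via Corollary~\ref{clry:measurability optimal risk}, and then the comparison set is measurable. Your extra care with the extended-valued comparison via the joint map into $[0,\infty]^2$ is a harmless refinement of the paper's ``the result follows immediately.''
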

\begin{proof}
    By Lemmas~\ref{lemma:measurability SVM} and~\ref{lemma:measurability risk}, the map $\omega\in\Omega\mapsto \Rcal_{L,J(\omega)}(f_{Z_{1:n}(\omega),\lambda})$ is measurable.
    By Corollary~\ref{clry:measurability optimal risk}, the map $\omega\in\Omega\mapsto \Rcal_{\H,L,J(\omega)}$ is also measurable.
    The result follows immediately.
\end{proof}

\end{document}